\declaretheorem[name=Theorem,refname={Theorem,Theorems},Refname={Theorem,Theorems}]{theorem}
\declaretheorem[name=Lemma,refname={Lemma,Lemmas},Refname={Lemma,Lemmas},sibling=theorem]{lemma}
\declaretheorem[name=Corollary,refname={Corollary,Corollaries},Refname={Corollary,Corollaries},sibling=theorem]{corollary}
\newcommand{\cE}{\mathcal{E}}
\newcommand{\cI}{\mathcal{I}}
\newcommand{\cN}{\mathcal{N}}
\newcommand{\E}[1]{\mathbb{E} \left[#1\right]}
\newcommand{\prob}[1]{\mathbb{P} \left(#1\right)}
\newcommand{\bmu}{\bm{\mu}}
\newcommand{\s}{\sigma}
\newcommand{\bs}{\bar\sigma}
\newcommand{\nr}[1]{n_{r_{#1}}}
\newcommand{\sr}{\Sigma_r}
\DeclareMathOperator*{\argmax}{arg\,max\,}
\mathchardef\mhyphen="2D
\newcommand\Ex[2]{\mathop{{\mathbb{E}_{#1}}\left[#2\right]}}
\newcommand\Prob[2]{\mathop{{\mathbb{P}_{#1}}\left(#2\right)}}
\newcommand{\bayeselim}{\ensuremath{\tt BayesElim}\xspace}
\newcommand{\bayeselimtwo}{\ensuremath{\tt BayesElim2}\xspace}
\newcommand{\ts}{\ensuremath{\tt TS}\xspace}
\newcommand{\tstwo}{\ensuremath{\tt TS2}\xspace}
\newcommand{\ttts}{\ensuremath{\tt TTTS}\xspace}
\newcommand{\freqelim}{\ensuremath{\tt FreqElim}\xspace}
\newcommand{\freqelimtwo}{\ensuremath{\tt FreqElim2}\xspace}
\title{Bayesian Fixed-Budget Best-Arm Identification}
\author[1]{Alexia Atsidakou}
\author[2]{Sumeet Katariya}
\author[1,2]{Sujay Sanghavi}
\author[3]{Branislav Kveton}
\affil[1]{University of Texas at Austin}
\affil[2]{Amazon}
\affil[3]{AWS AI Labs}
\date{}                     
\date{}
\begin{document}

\maketitle

\begin{abstract}
Fixed-budget best-arm identification (BAI) is a bandit problem where the agent maximizes the probability of identifying the optimal arm within a fixed budget of observations. In this work, we study this problem in the Bayesian setting. We propose a Bayesian elimination algorithm and derive an upper bound on its probability of misidentifying the optimal arm. The bound reflects the quality of the prior and is the first distribution-dependent bound in this setting. We prove it using a frequentist-like argument, where we carry the prior through, and then integrate out the bandit instance at the end. We also provide a lower bound on the probability of misidentification in a $2$-armed Bayesian bandit and show that our upper bound (almost) matches it for any budget. Our experiments show that Bayesian elimination is superior to frequentist methods and competitive with the state-of-the-art Bayesian algorithms that have no guarantees in our setting.
\end{abstract}

\section{Introduction}
\label{sec:introduction}

\emph{Best-arm identification (BAI)} is a bandit problem where the goal is to identify the optimal arm after a number of observations. It has many applications in practice, such as online advertising, recommender systems, and vaccine tests \citep{lattimore-Bandit}. In the \emph{fixed-budget (FB)} setting \citep{audibert-2010-BAI}, the goal is to identify the optimal arm within a fixed budget of $n$ observations. This setting is common in applications where the observations are costly, such as in Bayesian optimization \citep{krause08nearoptimal}. In the \emph{fixed-confidence (FC)} setting \citep{ActionElimination-Evendar2006a,soare2014bestarm}, the goal is to find the optimal arm with a given confidence level, while minimizing the sample complexity. Some works even studied both settings \citep{Gabillon-2012,Karnin2013AlmostOE,KaufmannEmilie2016OnTC}.

Most BAI algorithms, including all of the aforementioned, have been proposed for the frequentist setting. There, the bandit instance is chosen potentially adversarially from some hypothesis class (such as Bernoulli rewards or linear models) and the agent tries to identify the best arm by using only the knowledge of the class. While frequentist BAI algorithms have strong guarantees in their setting, they do not exploit the prior information in the Bayesian setting. As a result, their Bayesian guarantees do not improve as the prior becomes more informative. While Bayesian algorithms can naturally do that \citep{pmlr-v33-hoffman14,russo2016simple}, to the best of our knowledge the Bayesian guarantees of all state-of-the-art Bayesian BAI algorithms are distribution-independent. Therefore, while in practice the algorithms can benefit from the prior information, their theoretical analyses do not show the improvement.

In this work, we set out to address the obvious gap in prior works on Bayesian BAI. Specifically, we propose the first Bayesian BAI algorithm for the fixed-budget setting that uses a prior distribution over bandit instances as a side information, and also has an error bound that improves with a more informative prior. This work parallels modern analyses of Thompson sampling in the cumulative regret setting \citep{russo14learning,russo16information,hong22thompson,hong22hierarchical}. For instance, \citet{russo14learning} showed that the $n$-round Bayes regret of linear Thompson sampling is $O(\sqrt{d})$ lower than the best known regret bound in the frequentist setting \citep{agrawal13thompson}. \citet{hong22hierarchical} showed that the hierarchy of model parameters in meta- and multi-task bandits reduces the $n$-round Bayes regret of Thompson sampling. We believe that our work lays foundations for similar future improvements in Bayesian BAI.

This paper makes the following contributions. First, we formulate the setting of Bayesian fixed-budget BAI with $K$ arms and propose an elimination algorithm for it. The algorithm is a variant of successive elimination \citep{Karnin2013AlmostOE} where the \emph{maximum likelihood estimate (MLE)} of the mean arm reward is replaced with a Bayesian \emph{maximum a posteriori (MAP) estimate}. We call the algorithm \bayeselim. Second, we prove an upper bound on the probability that \bayeselim fails to identify the optimal arm. Our proof technique is novel and allows us to obtain a probability bound, which is distribution-dependent and reflects the quality of the prior. To achieve this, we use a frequentist-like analysis, where prior quantities appear as bias terms, and then integrate out the random instance at the end. This technique differs from Bayesian analyses in the cumulative regret setting \citep{russo14learning,russo16information,hong22thompson,hong22hierarchical}, which only lead to finite-time guarantees that are far from the lower bounds \citet{lai87}. Third, we prove a distribution-dependent lower bound for $K = 2$ arms. Our upper and lower bounds match for any fixed budget. Note that even in frequentist BAI, the case of $K=2$ arms with Gaussian rewards is the only one with matching upper and lower bounds \citep{KaufmannEmilie2016OnTC,Kato22}. The existence of matching bounds beyond this setting is an important open question \citep{qin22a}. Finally, we analyze the frequentist counterpart of \bayeselim and show that its Bayesian guarantees are always worse than those of \bayeselim. Our synthetic and real-data experiments show the superiority of Bayesian over frequentist BAI algorithms and empirically confirm the advantage of using the prior.

One surprising property of our upper and lower bounds is that they are proportional to $1 / \sqrt{n}$, where $n$ is the budget. At a first sight, this seems to contradict to the frequentist upper \citep{Karnin2013AlmostOE} and lower \citep{Carpentier16} bounds, which are proportional to $\exp[- n\Delta^2]$, where $\Delta$ is the gap. The reason is that the frequentist bounds are proved in a harder setting, per instance instead of on average over instances, yet they are lower. However, our bounds are compatible with these results. Roughly speaking, even the frequentist bounds integrated over $\Delta \sim \mathcal{N}(0, 1)$ yield $1 / \sqrt{n}$, since the budget $n$ in $\exp[-n \Delta^2]$ plays the role of reciprocal variance in the Gaussian integral. We state this more rigorously later.

One line of work closely related to our setting is the \textit{ranking and selection} problem. There, optimal budget-allocation ratios are defined using unknown reward parameters of the arms and the objective is to achieve these unknown rates asymptotically. In the setting of Gaussian rewards and priors, \citet{ryzhov16} studied a family of policies called expected improvements methods and \citet{chen2016EI} showed that a variant of these methods achieves optimal allocation ratios asymptotically. Finally, in terms of asymptotic guarantees, a lot of progress was made recently in a related \textit{Bayesian pure exploration} setting, where the goal is to minimize the expected gap after a budget of $n$ observations \citep{bubeck2010pure}, namely the \textit{simple regret}. 
Specifically, in the case of Bernoulli rewards, \citet{komiyama2021optimal} derived a Bayesian lower bound on the simple regret and proposed a frequentist algorithm that matches it asymptotically. The relations between this objective to ours, as well as with \textit{cumulative regret} are discussed in \cref{sec:pure exploration}. In our work, rather than studying the asymptotic behavior, we focus on finite-budget guarantees. Our proposed algorithm has error guarantees that (almost) match the lower bound for \textit{any fixed budget}.

\section{Problem Setting}

Bayesian fixed-budget best-arm identification (BAI) involves $K$ arms with \textit{unknown} mean rewards $\bmu=(\mu_1,...,\mu_K)$ drawn from a \textit{known} prior distribution $H$.
After arm $i\in [K]$ is pulled, a policy observes a sample $X_i$ from its reward distribution.
We focus on the Gaussian case, where the reward of each arm $i\in [K]$ follows a Gaussian distribution with known variance, i.e. $X_{i}\sim \mathcal{N}(\mu_i,\s_i^2)$, and its mean is drawn independently from a known prior $\mu_i\sim \mathcal{N}(\nu_{i}, \s_{0}^2)$. We refer to $\s_i^2$ as the reward variance of arm $i$; and to $\nu_i$ and $\s_0^2$ as the prior mean and variance of the {mean} reward of arm $i$, respectively. A policy interacts with the arms for $n$ exploration rounds, where $n$ is a known budget, with the goal of identifying the optimal arm $i_*(\bmu)=\argmax_{i\in[K]}\mu_i$. We denote by $J$ the arm recommended by the policy at the end of $n$ rounds. For any fixed mean reward vector $\bmu$, the \textit{probability of misidentification} is
${\prob{J\neq i_*(\bmu)|\bmu}}$, where $\prob{\cdot|\bmu}$ is over the randomness of the policy and the reward realizations of each round, i.e. considering the mean rewards fixed. The setting where the mean reward vector $\bmu$ is fixed corresponds to the frequentist BAI setting. There the objective of a policy is to minimize the worst-case probability of misidentification for any possible fixed vector of mean rewards.
In contrast to the frequentist setting, the performance of a policy in Bayesian BAI is measured by its \textit{expected} {probability of misidentification}, i.e.
\begin{align}\label{eq:objective}
    \Ex{H}{\prob{J\neq i_*(\bmu)|\bmu}}
\end{align}
where the expectation is taken w.r.t. the prior distribution of the mean rewards.

\paragraph{Notation.} We denote by $H(\bmu)$ and $h(\bmu)$ the cumulative probability and density of mean vector $\bmu$, respectively. We denote the optimal arm by $i_*=i_*(\bmu)$, and its mean reward and prior mean are $\mu_* = \mu_{i_*}$ and $\nu_* = \nu_{i_*}$, respectively. We note that $i_*$, $\mu_*$, and $\nu_*$ are functions of $\bmu$ and thus random variables. We denote by $\bar \mu_{i,m}$ and $\bs_{i,m}^2$ the posterior mean and variance, respectively, of arm $i\in[K]$ given $m$ i.i.d. observations from its reward distribution. We summarize our notation in \cref{table:1} in \cref{app:notation_tab}.

\section{Bayesian Elimination}

We propose \bayeselim, a Bayesian successive elimination algorithm inspired by the frequentist algorithm of \citet{Karnin2013AlmostOE}. \bayeselim incorporates prior information and eliminates arms based on the maximum a posteriori estimates of their mean rewards. For any arm $i\in [K]$, the posterior distribution of its mean reward given $m$ i.i.d.\ observations, $X_{i,s}\sim \cN(\mu_i,\s_i^2)$ for $s\in[m]$, is a Gaussian distribution $\mathcal{N}\left( \bar \mu_{i,m}, \bs_{i,m}^2 \right)$ with mean
\begin{align}
    \bar \mu_{i,m}=\bs_{i,m}^2 \left(\frac{\nu_i}{\s_{0}^2}+\frac{\sum_{s\in [m]} X_{i,s}}{\s_i^2}\right)
    \label{eq:mubarim}
\end{align}
and variance $\displaystyle \bs_{i, m}^2 = (1 / \s_{0}^2 + m / \s_i^2)^{-1}$ \citep{Murphy07}.

\bayeselim splits the exploration budget $n$ into $R=\lceil\log_2(K)\rceil$ elimination rounds of equal budget per round, $\lfloor\frac{n}{R}\rfloor$. At each elimination round $r\in[R]$, the algorithm maintains a set of active arms, denoted by $S_r$. 
At the end of each round, after collecting samples from the arms, the algorithm eliminates a half of the active arms that have the lowest posterior means. 
The arms that survive elimination at round $r$ remain active in the next round.
In each round $r$, \bayeselim allocates the per-round budget among the arms in $S_r$ according to their reward variances. In particular, each arm $i\in S_r$ is sampled $\lfloor n_{r,i} \rfloor$ times, where
\begin{align}
    n_{r,i}=\frac{n}{R}\frac{\s_i^2}{\sum_{k\in S_r}\s_k^2}.
    \label{eq:split}
\end{align} 
\bayeselim is presented in \cref{alg:bayesian_successive_elimination}. 
By \cref{eq:split}, a larger sample size is allocated to arms with a higher reward noise. In particular, when the per-round budget is split according to \cref{eq:split}, the posterior variances of all active arms at the end of each round are equal, that is
\begin{align*}
    \bs_{i,n_{r,i}}^2=\left(\frac{1}{\s_{0}^2}+\frac{n_{r,i}}{\s_i^2}\right)^{-1} = \left(\frac{1}{\s_{0}^2}+\frac{n}{R\sum_{k\in S_r}\s_k^2}\right)^{-1}, \, \forall i\in S_r.
\end{align*} 
Finally, in the special case of equal reward variances, i.e. $\s_i=\s$ for all $i\in[K]$ and some $\sigma \geq 0$, the budget is distributed uniformly among the active arms in each round. This is the same allocation as in the original algorithm of \citet{Karnin2013AlmostOE}.
For more details on the allocation rule in \cref{eq:split} and a comparison with the optimal allocation rule in \citet{KaufmannEmilie2016OnTC} for frequentist $2$-armed Gaussian BAI, see \cref{app:allocation}.

\begin{algorithm}[t]
\caption{\bayeselim: Bayesian elimination.}\label{alg:bayesian_successive_elimination}
\begin{algorithmic}[1]
\State Let $R \leftarrow \lceil\log_2K\rceil$
\State Initialize $S_1 \leftarrow [K]$ 
\For{$r=1,..., R$}
\For{$i\in S_r$}
\State Get $\lfloor n_{r,i}\rfloor$ samples of arm $i$, where $n_{r,i}$ is given by \cref{eq:split}
\State Compute posterior mean $\bar\mu_{i,n_{r,i}}$ using \cref{eq:mubarim}
\EndFor
\State Set $S_{r+1}$ to be the set of $\lceil|S_r|/2\rceil$ arms in $S_r$ with largest posterior means $\{\bar\mu_{i,n_{r,i}}\}_{i\in S_r}$
\EndFor
\end{algorithmic}
\end{algorithm}

\section{Analysis}\label{sec:ub_analysis}
We provide a distribution-dependent upper bound for the expected probability of misidentification of \bayeselim. 
Our approach is novel and deviates from
Bayesian bandit analyses for cumulative regret \citep{russo14learning,russo16information,hong22thompson,hong22hierarchical}, which condition on history and bound the regret in expectation over the posterior in each round.
In our method, the mean vector $\bmu$ is initially considered fixed, allowing the use of frequentist-like concentration arguments to bound the error due to sampling. However, in contrast to frequentist bounds, our bounds involve gap expressions which contain prior quantities as bias terms. These biased-gap expressions are carefully selected such that the relaxation is negligible in expectation over priors, while the result remains integrable.
After performing this analysis, we integrate out the randomness due to prior information. 

We now state our main upper bound on the expected probability of misidentification for our algorithm. All omitted proofs of this section are in \cref{app:UB}. 
\begin{theorem}\label{thm:bound_with_integral}
The expected probability of misidentification for \bayeselim is bounded as
\begin{align*}
    &\Ex{H}{\prob{J\neq i_*|\bmu}} \leq 2\log_2 K \sqrt{\frac{R\sum_{k\in[K]}\s_k^2}{n\sigma_0^2+ R\sum_{k\in[K]}\s_k^2}} \sum_{i\in[K]}\sum_{j>i}  \exp\left(-\frac{(\nu_i-\nu_j)^2}{4\s_0^2}\right).
\end{align*}
\end{theorem}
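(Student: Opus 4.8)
The plan is to follow the proof strategy announced in the text: first fix the mean vector $\bmu$ and bound the conditional probability of misidentification $\prob{J \neq i_* \mid \bmu}$ using frequentist-style concentration, and only at the very end integrate over the prior $H$. I would begin by decomposing the misidentification event across elimination rounds. The optimal arm $i_*$ is misidentified only if it is eliminated in some round $r \in [R]$; by a union bound over rounds,
\begin{align*}
    \prob{J \neq i_* \mid \bmu} \leq \sum_{r=1}^{R} \prob{i_* \text{ eliminated in round } r,\ i_* \in S_r \mid \bmu}.
\end{align*}
Within a fixed round $r$, the optimal arm survives unless at least half of the active arms have a larger posterior mean. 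A standard elimination argument reduces this to pairwise comparisons: if $i_*$ is eliminated, then there exist enough arms $j$ with $\bar\mu_{j, n_{r,j}} > \bar\mu_{i_*, n_{r,i_*}}$, so I would upper bound the round-$r$ failure probability by a sum over individual events $\{\bar\mu_{j} \geq \bar\mu_{i_*}\}$, i.e.\ $\prob{i_* \text{ elim.\ in round } r \mid \bmu} \leq \sum_{j \neq i_*} \prob{\bar\mu_{j,n_{r,j}} \geq \bar\mu_{i_*,n_{r,i_*}} \mid \bmu}$.

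The core of the argument is bounding each pairwise event $\prob{\bar\mu_{j,n_{r,j}} \geq \bar\mu_{i_*,n_{r,i_*}} \mid \bmu}$ for a suboptimal arm $j$. Conditioned on $\bmu$, each posterior mean $\bar\mu_{i,m}$ from \cref{eq:mubarim} is Gaussian: it is an affine function of the i.i.d.\ reward samples, so its conditional mean is $\bs_{i,m}^2(\nu_i/\s_0^2 + m\mu_i/\s_i^2)$ and its conditional variance is $\bs_{i,m}^4 \cdot m/\s_i^2$. The difference $\bar\mu_{j} - \bar\mu_{i_*}$ is therefore Gaussian conditioned on $\bmu$, and the event $\{\bar\mu_j \geq \bar\mu_{i_*}\}$ has probability given by a Gaussian tail $\Phi(-z)$ where $z$ is the standardized gap. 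This is where the advertised \emph{biased gap} enters: because the posterior means are shrunk toward the prior means $\nu_i$, the effective conditional-mean gap $\E{\bar\mu_{i_*} - \bar\mu_j \mid \bmu}$ is not the raw reward gap $\mu_{i_*} - \mu_j$ but a convex combination involving the prior means $\nu_i, \nu_j$. Using the Gaussian tail bound $\Phi(-z) \leq \tfrac12 \exp(-z^2/2)$ and the equalized posterior variance from the allocation rule \cref{eq:split}, I would obtain a bound of the form $\tfrac12 \exp\!\big(-(\text{biased gap})^2 / (2 \cdot \text{posterior var.})\big)$ for each pairwise term.

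The final and most delicate step is integrating these conditional bounds over the prior. After carrying $\bmu$ through, each pairwise term is of the form $\tfrac12\,\Ex{H}{\exp(-(\text{biased gap})^2/(2\bar\sigma^2))}$, where the biased gap is an affine function of $\mu_i, \mu_j$ (and the prior means) and hence itself Gaussian under $H$. The key computation is a Gaussian integral of the type $\Ex{}{\exp(-(aX+b)^2)}$ for $X$ normal, which evaluates in closed form and must be engineered to produce exactly the two target factors in the theorem: the $\sqrt{R\sum_k \s_k^2 / (n\s_0^2 + R\sum_k \s_k^2)}$ term, coming from the ratio of posterior variance to the total (posterior $+$ prior) variance of the gap, and the $\exp(-(\nu_i-\nu_j)^2/(4\s_0^2))$ factor, coming from the prior-mean separation. \textbf{I expect the main obstacle to be the choice of the biased gap}: it must be defined so that (i) the conditional concentration bound still holds pointwise in $\bmu$ (the relaxation from the true gap must go the right way, being negligible in expectation as the text warns), and (ii) the resulting Gaussian integral over the prior is both finite and collapses to the clean product form above. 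Getting the constants in the exponent ($4\s_0^2$ rather than $2\s_0^2$, etc.) and the prefactor to line up will require carefully tracking how the posterior and prior variances combine, and summing the $2\log_2 K$ factor over the $R = \lceil \log_2 K\rceil$ rounds together with the $\binom{K}{2}$ pairwise terms $\sum_{i}\sum_{j>i}$.
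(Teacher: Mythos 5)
Your outline tracks the paper's proof almost step for step: fix $\bmu$, union-bound the misidentification event over the $R$ elimination rounds, reduce each round to pairwise comparisons of posterior means, bound each pairwise event by a Gaussian tail with a prior-biased gap, and integrate over $H$ at the end (the paper's \cref{lem:posterior_mean_concentration,lem:r_elimination,lem:final_ub}). Your per-round union bound over all $j \neq i_*$ in place of the paper's Markov-inequality/median-count argument is harmless here, since both routes are ultimately relaxed to a sum over all $\binom{K}{2}$ pairs; your version would in fact save the factor of $2$.

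The one genuine gap is the step you yourself flag but do not resolve: the precise form of the biased gap and, crucially, what happens when the standardized deviation is \emph{negative}. Your plan is to write the pairwise probability as $\Phi(-z)$ with $z$ the standardized biased mean gap and then apply $\Phi(-z) \leq \tfrac12 e^{-z^2/2}$. But when the prior means are misordered relative to the true means (i.e.\ $(\nu_{i_*}-\nu_j)(\mu_{i_*}-\mu_j) < 0$ and the prior term dominates), the conditional mean of $\bar\mu_{i_*}-\bar\mu_j$ can be negative, so $z<0$ and that tail inequality is simply false ($\Phi(|z|) > \tfrac12 > \tfrac12 e^{-z^2/2}$). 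The paper's fix is built into the definition of $g_r(i,j)$ in \cref{eq:fn_ub}: after expanding the square $\bigl(\text{frequentist gap} + \text{prior bias}\bigr)^2$, it \emph{discards} the nonnegative pure-prior term $\bigl(\tfrac{R\sum_k\s_k^2}{n\s_0^2}\bigr)^2(\nu_{i_*}-\nu_j)^2$ and keeps only the squared reward gap plus the cross term $(\nu_{i_*}-\nu_j)(\mu_{i_*}-\mu_j)/(2\s_0^2)$. This relaxation serves two purposes at once: in the bad case the resulting $g_r(i_*,j)$ is nonpositive, so $\exp(-g_r)\geq 1$ and the claimed bound holds vacuously; and the surviving cross term is linear in $\bmu$, which is exactly what makes the subsequent Gaussian integral (\cref{lem:gaussuan_integration_1}) collapse to the factor $\exp(-(\nu_i-\nu_j)^2/(4\s_0^2))$ with denominator $4\s_0^2$ rather than $2\s_0^2$. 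Without specifying this particular relaxation, your conditional bound does not hold pointwise in $\bmu$, and the downstream integral does not produce the stated constants.
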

\noindent We elaborate on the dependence on the budget and prior quantities below.

\textbf{Dependence on $n$.} The bound in \cref{thm:bound_with_integral} is $O(1/\sqrt{n})$, which contrasts with $e^{-O(n)}$ in frequentist BAI \citep{ActionElimination-Evendar2006a}. This may at first seem counterintuitive due to the additional access to prior information available to the learner. However, this dependence is inherent to the objective of expected probability of misidentification defined in \cref{eq:objective}, which integrates the probability of misidentification over the possible instances. This is in contrast to the frequentist setting, where the objective is the worst-case probability of misidentification on a single instance. 

Also note that integrating an optimal frequentist bound over all instances would result in an $O(1/\sqrt{n})$ dependence. This can be illustrated in the following simple example ($\s_0=1$),
\begin{align*}
    \Ex{H}{\exp\left({-n(\mu_i-\mu_j)^2}\right)}&=\frac{1}{2\pi}\int_{\mu_i, \mu_j} \exp\left({-n(\mu_i-\mu_j)^2-\frac{(\nu_i-\mu_i)^2}{2}-\frac{(\nu_j-\mu_j)^2}{2}}\right)\, d\mu_j \, d\mu_i\\
    &= 
    \frac{1}{\sqrt{4n+1}}
    \exp\left(-\frac{n}{4n+1}(\nu_i-\nu_j)^2\right),
\end{align*} 
where the last step results from completing the square in the exponent and computing the Gaussian integral. This dependence of \bayeselim on budget $n$ is optimal (\cref{sec:lb'}).

\textbf{Dependence on prior quantities.} The bound of \cref{thm:bound_with_integral} decreases exponentially with the squared gaps between the prior means normalized by the prior variance, ${(\nu_i-\nu_j)^2}/{\s_0^2}$. This agrees with the intuition that as the prior mean gaps grow, or the prior variances decrease, the uncertainty in the parameters decreases and the problem of identifying the best arm becomes easier. 

\textbf{Case $\s_0\rightarrow 0$:} As the prior variance $\s_0^2$ decreases, the Gaussian priors on mean arm rewards become more concentrated and separated. The randomness in the mean arm rewards diminishes and the Bayesian algorithm is less likely to recommend a suboptimal arm. In particular, in the extreme case of $\s_0 = 0$ the learner has exact knowledge of the mean without any sampling. Then, the expected probability of an error is $0$. In this case, as expected, the upper bound on the probability of misidentification of \bayeselim becomes $0$. This is the first work in Bayesian BAI to reflect this.

\textbf{Case $\s_0\rightarrow \infty$:} A less-obvious property of the bound in \cref{thm:bound_with_integral} is that it becomes $0$ as $\s_0\rightarrow\infty$. Specifically, as $\sigma_0 \rightarrow \infty$, the gaps between the arm means become infinitely large with a high probability. Thus, as long as the reward variances $\s_i^2$ are finite, the arms can be distinguished with $O(1)$ samples and the probability of misidentification goes to zero for any $n>0$. This behavior is inherent in the Bayesian objective of \cref{eq:objective} and consistent with the lower bound. \footnote{This paper can be extended to varying prior variances $\sigma^2_{0,i}$ for $i\in[K]$. Then the same discussions would apply to $\max_i \sigma_{0,i}\rightarrow 0$ or $\min_i \sigma_{0,i}\rightarrow \infty$.}

\subsection{Frequentist Algorithm for Bayesian BAI}
\label{sec:frequentist bai}

To illustrate the significance of properly incorporating prior knowledge, we compare the guarantees for \bayeselim with those for frequentist elimination \citep{Karnin2013AlmostOE} in the Bayesian BAI setting. Since the algorithm in \citet{Karnin2013AlmostOE} is designed for equal reward variances, we extend it using the allocation rule in \cref{eq:split}. This algorithm eliminates based on sample averages at the end of each round. We remark that this is the best algorithm for frequentist BAI, in terms of worst-case guarantees for the probability of misidentification. This algorithm can be also viewed as \bayeselim where we take $\s_0 \rightarrow \infty$ in the computation of the posterior mean. 

We first derive an upper bound on the expected probability of misidentification for the frequentist algorithm following a similar analysis to that of \bayeselim. 
\begin{restatable}{theorem}{thmFreqUBound}\label{thm:freq_bound}
The (generalization of the) elimination algorithm of \citet{Karnin2013AlmostOE} satisfies
\begin{align*}
    \Ex{H}{\prob{J\neq i_*|\bmu}} 
    &\leq  2\log_2 K  \sqrt{\frac{R \sum_{k\in[K]}\s_k^2}{n{\s_0^2}+R\sum_{k\in[K]}\s_k^2}} \nonumber\\
    &\times \sum_{i\in[K]}\sum_{j>i}\exp\left(-\frac{n\s_0^2}{n\s_0^2+{R}\sum_{k\in[K]}\s_k^2}\cdot\frac{(\nu_i-\nu_j)^2}{4\s_0^2}\right).\nonumber
\end{align*}
\end{restatable}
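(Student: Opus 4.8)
The plan is to mirror the proof of \cref{thm:bound_with_integral} verbatim, replacing the MAP estimate $\bar\mu_{i,m}$ throughout by the sample mean $\hat\mu_{i,m}=\frac{1}{m}\sum_{s\in[m]}X_{i,s}$; as noted in the text, this estimator is exactly \bayeselim run with $\s_0\to\infty$ in the estimation step, while the prior $H$ is retained only for the final integration. As in that proof, I would first fix the mean vector $\bmu$, bound the conditional misidentification probability by a frequentist-style concentration-plus-union-bound argument, and integrate against $H$ only at the very end. The one structural simplification is that $\hat\mu_{i,m}$ is unbiased, so no prior bias enters the fixed-$\bmu$ step: the gap that appears is the \emph{true} gap $\mu_i-\mu_j$, not a biased surrogate as in the Bayesian analysis.

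For fixed $\bmu$ we have $\hat\mu_{i,m}\sim\cN(\mu_i,\s_i^2/m)$. The allocation rule \cref{eq:split} equalizes the sampling variance across the active arms of a round in exactly the computation that equalizes the posterior variances in the paper: for $i\in S_r$ with $m=n_{r,i}$, $\s_i^2/n_{r,i}=R\sum_{k\in S_r}\s_k^2/n=:v_r$. Hence for any pair of active arms the difference of sample means is centered at $\mu_i-\mu_j$ with variance $2v_r$, and for a pair with $\mu_i>\mu_j$ a Gaussian tail bound gives $\prob{\hat\mu_j>\hat\mu_i\mid\bmu}\le\exp(-(\mu_i-\mu_j)^2/(4v_r))$, which is symmetric in $i,j$. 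Bounding $v_r\le V:=R\sum_{k\in[K]}\s_k^2/n$ (valid since $S_r\subseteq[K]$, and it only weakens the exponent) and reusing the same elimination bookkeeping as in \cref{thm:bound_with_integral} — union bound over the $R=\lceil\log_2K\rceil$ rounds, removal of the optimality constraint by $\I{i_*=i}\le1$, and symmetrization over unordered pairs — yields for fixed $\bmu$
\[
\prob{J\neq i_*\mid\bmu}\le 2\log_2 K\sum_{i\in[K]}\sum_{j>i}\exp\!\left(-\frac{(\mu_i-\mu_j)^2}{4V}\right).
\]

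It then remains to integrate each summand against $H$. Since $\mu_i-\mu_j\sim\cN(\nu_i-\nu_j,2\s_0^2)$ and the optimality constraint has been dropped, $\Ex{H}{\exp(-(\mu_i-\mu_j)^2/(4V))}$ is an exact Gaussian integral; completing the square in the exponent produces the prefactor $\sqrt{V/(\s_0^2+V)}=\sqrt{R\sum_k\s_k^2/(n\s_0^2+R\sum_k\s_k^2)}$ and the exponent $-(\nu_i-\nu_j)^2/(4(\s_0^2+V))$. Rewriting $1/(\s_0^2+V)=(1/\s_0^2)\cdot n\s_0^2/(n\s_0^2+R\sum_k\s_k^2)$ turns this into precisely the stated bound.

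The main thing to get right is the fixed-$\bmu$ elimination bookkeeping — the random active sets $S_r$, the random optimal arm $i_*$, the floors $\lfloor n_{r,i}\rfloor$, and the exact $2\log_2 K$ constant — but this combinatorial structure is identical to the proof of \cref{thm:bound_with_integral} and can be imported; the only genuinely new computation is the single Gaussian integral, which is routine. This proof also makes transparent \emph{why} the frequentist guarantee is weaker: because the estimator is unbiased we carry the true gap $(\mu_i-\mu_j)^2$ and integrate exactly, which leaves the extra factor $n\s_0^2/(n\s_0^2+R\sum_k\s_k^2)\le1$ in the exponent, whereas the Bayesian analysis exploits the prior bias of the MAP estimate, via its biased-gap relaxation, to remove exactly this factor.
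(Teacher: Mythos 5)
Your proposal is correct and follows essentially the same route as the paper's proof: fix $\bmu$, apply the (bias-free) Gaussian concentration bound $\exp\bigl(-(\mu_i-\mu_j)^2/(4v_r)\bigr)$ with $v_r=R\sum_{k\in S_r}\s_k^2/n$ equalized by the allocation rule, run the same round-by-round elimination bookkeeping, and only then integrate the true-gap exponential against $H$ via a Gaussian integral (the paper's \cref{lem:gaussuan_integration2} is exactly your one-dimensional computation with $\mu_i-\mu_j\sim\cN(\nu_i-\nu_j,2\s_0^2)$). The only cosmetic difference is that you relax $v_r\le R\sum_{k\in[K]}\s_k^2/n$ before integrating, whereas the paper integrates per round and then takes the worst case over $r$; both yield the stated bound.
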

\noindent We compare the two bounds generally, as well as when the prior becomes more informative.

\textbf{Dependence on $n$.} The above bound is also of $O({1}/{\sqrt{n}})$. The difference lies in the multiplicative terms in the exponent $\frac{n\s_0^2}{n\s_0^2+{R}\sum_{k\in[K]}\s_k^2}\leq 1$. Due to them, the guarantee of \bayeselim is \textit{always} better than that of its frequentist counterpart, which does not use the prior. The difference diminishes as the budget increases. Specifically, as $n\rightarrow\infty$, we get $\frac{n\s_0^2}{n\s_0^2+{R}\sum_{k\in[K]}\s_k^2}\rightarrow 1$, and the two bounds coincide asymptotically. This agrees with the intuition that the prior information is dominated by data in the asymptotic regime of $n \to \infty$, and thus becomes less important.

\textbf{Dependence on prior quantities.} For any fixed budget $n$, as the prior variance becomes smaller the guarantee of \bayeselim becomes \textit{arbitrarily} better than that of frequentist elimination. We note that as we take $\s_0\rightarrow 0$ in \cref{thm:bound_with_integral}, the probability of misidentification for \bayeselim tends to $0$. In contrast, the bound in \cref{thm:freq_bound} becomes a constant,
\begin{align*}
    2\log_2 K\sum_{i\in[K]}\sum_{j>i}\exp\left(-\frac{1}{4}\frac{n}{{R\sum_{k\in[K]}\s_k^2}}(\nu_i-\nu_j)^2\right).
\end{align*}
This is, intuitively, a bound on the error of the frequentist algorithm when facing a single instance $\mu_i= \nu_i$ but without using that knowledge. This comparison of \cref{thm:bound_with_integral,thm:freq_bound} shows that the benefit of using Bayesian algorithms can be huge as the prior becomes more informative.

The above discussion of taking $\s_0\rightarrow 0$ in the frequentist algorithm only applies to the upper bound on its expected probability of misidentification. In \cref{sec:lb'}, we present a formal lower bound for any frequentist policy applied to the Bayesian setting, which formally quantifies the loss of ignoring prior information. 

\subsection{Proof Sketch of \cref{thm:bound_with_integral}} 
To prove \cref{thm:bound_with_integral}, we first consider a fixed parameter vector $\bmu$. 
For any fixed round $r\in [R]$, in \cref{lem:posterior_mean_concentration} we bound the probability that the posterior mean of some suboptimal arm $i$ is larger that the posterior mean of the optimal arm of $\bmu$. To achieve this, we use a relaxation that may not be tight for every single instance $\bmu$. However, as we show later, it is negligible in expectation over prior means. Then, for any fixed $r\in[R]$ and $\bmu$, in \cref{lem:r_elimination} we bound the probability that the optimal arm is eliminated at round $r$. Subsequently, we chain all inequalities and bound the probability of error in instance $\bmu$. Up to this point, our analysis is frequentist-like and treats $\bmu$ as fixed. Finally, in \cref{lem:final_ub}, we integrate out the randomness of $\bmu$.
To simplify the analysis, we ignore errors due to rounding of the budget and treat $\log_2K$ and $n_{r,i}$ as integers. 

We first introduce the following biased-gap expressions, which are used to bound to the probability of error
\begin{align}\label{eq:fn_ub}
    &g_r(i,j) =n\frac{(\mu_{i}- \mu_j)^2}{4R \sum_{k\in S_r}\s_k^2 }+\frac{(\nu_{i}-\nu_j)(\mu_{i}- \mu_j)}{2\s_0^2}.
\end{align} 
The first term involves the squared gap between the arm means, which appears in frequentist BAI analyses. The latter is a bias due to the prior. Notice that the above expression can be negative when $(\nu_i-\nu_j)(\mu_i-\mu_j)<0$.

Note that the set of active arms in any round $r\in [R]$, i.e. $S_r$, is a random variable that depends on the reward realizations as well as the randomness in the parameters $\bmu$ of the reward distributions of the arms. For any fixed parameters $\bmu$ and round $r\in[R]$, the following lemma bounds the probability that the posterior mean $\bar\mu_{i_*,n_{r,{i_*}}}$ of the best arm $i_*$ of $\bmu$ in round $r$ is smaller that the posterior mean $\bar\mu_{i,n_{r,i}}$ of some arm $i\in S_r$. This argument is frequentist-like because it only deals with the randomness of the reward realizations.
\begin{restatable}{lemma}{lemPostManConc}\label{lem:posterior_mean_concentration}
Fix instance $\bmu$ and round $r \in [R]$. Suppose that $i_* \in S_r$. Then, for any $i \in S_r$ we have
\begin{align*}
&\prob{\bar\mu_{i,n_{r,i}}>\bar\mu_{i_*,n_{r,i_*}}|\bmu}  \leq\exp\left(-g_r(i_*,i)\right).
\end{align*} 
\end{restatable}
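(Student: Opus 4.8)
The plan is to condition on the fixed instance $\bmu$ and exploit that, for fixed $\bmu$, each posterior mean $\bar\mu_{i,n_{r,i}}$ is an affine function of the i.i.d.\ Gaussian rewards $X_{i,s}\sim\cN(\mu_i,\s_i^2)$, hence is itself Gaussian. First I would record its two moments. Writing $\tau_r = n_{r,i}/\s_i^2 = n/(R\sum_{k\in S_r}\s_k^2)$, which by the allocation rule \cref{eq:split} is the \emph{same} for every $i\in S_r$, a direct computation from \cref{eq:mubarim} gives $\condE{\bar\mu_{i,n_{r,i}}}{\bmu} = \bs_r^2(\nu_i/\s_0^2 + \tau_r\mu_i)$ and $\condvar{\bar\mu_{i,n_{r,i}}}{\bmu} = \bs_r^4\tau_r$, where $\bs_r^2 = (1/\s_0^2 + \tau_r)^{-1}$ is the common posterior variance from the allocation rule.

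Next I would form the difference $D = \bar\mu_{i_*,n_{r,i_*}} - \bar\mu_{i,n_{r,i}}$. Since distinct arms are sampled independently, $D$ is Gaussian conditionally on $\bmu$, with mean $\condE{D}{\bmu} = \bs_r^2(a + \tau_r b)$ and variance $\condvar{D}{\bmu} = 2\bs_r^4\tau_r$, where I abbreviate $a = (\nu_{i_*}-\nu_i)/\s_0^2$ and $b = \mu_{i_*}-\mu_i$. The event $\{\bar\mu_{i,n_{r,i}} > \bar\mu_{i_*,n_{r,i_*}}\}$ is exactly $\{D < 0\}$, so applying the standard Gaussian tail bound $\Phi(-t)\le\exp(-t^2/2)$ (valid for $t\ge 0$) with $t = \condE{D}{\bmu}/\sqrt{\condvar{D}{\bmu}}$, whenever $\condE{D}{\bmu}\ge 0$ I obtain
\[
\condprob{D<0}{\bmu} \le \exp\left(-\frac{(\condE{D}{\bmu})^2}{2\condvar{D}{\bmu}}\right) = \exp\left(-\frac{(a+\tau_r b)^2}{4\tau_r}\right).
\]

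The crux is then a purely algebraic comparison against the biased gap. Expanding the square,
\[
\frac{(a+\tau_r b)^2}{4\tau_r} = \frac{\tau_r b^2}{4} + \frac{ab}{2} + \frac{a^2}{4\tau_r} = g_r(i_*,i) + \frac{a^2}{4\tau_r},
\]
since the first two terms are precisely $g_r(i_*,i)$ of \cref{eq:fn_ub} (using $\tau_r = n/(R\sum_{k\in S_r}\s_k^2)$). The leftover $a^2/(4\tau_r)\ge 0$ is exactly the slack I discard: dropping it is the deliberate relaxation that keeps $g_r$ negligible and integrable over the prior in \cref{lem:final_ub}. This yields $\condprob{D<0}{\bmu}\le\exp(-g_r(i_*,i))$ in the regime $\condE{D}{\bmu}\ge 0$.

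The one point needing care, and the main obstacle, is the complementary regime $\condE{D}{\bmu}<0$, where the Gaussian tail bound no longer applies and the probability may exceed $1/2$. Here I would invoke optimality: since $i_*$ is the best arm of $\bmu$, $b = \mu_{i_*}-\mu_i\ge 0$, so $\condE{D}{\bmu} = \bs_r^2(a+\tau_r b)<0$ forces $a<-\tau_r b\le 0$. Substituting $a<-\tau_r b$ into $g_r(i_*,i) = \tau_r b^2/4 + ab/2$ gives $g_r(i_*,i) < -\tau_r b^2/4 \le 0$, so $\exp(-g_r(i_*,i))\ge 1$ and the claimed bound holds trivially. Combining the two regimes completes the proof.
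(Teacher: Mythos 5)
Your proof is correct and follows essentially the same route as the paper's: reduce the event to a Gaussian tail via the common posterior variance induced by the allocation rule, apply the tail bound when the conditional mean of the difference is nonnegative, expand the square and discard the leftover $a^2/(4\tau_r)$ term, and observe that $g_r(i_*,i)\le 0$ in the complementary regime so the bound holds trivially. Your treatment of that complementary case is in fact slightly more explicit than the paper's, which asserts $g_r(i_*,i)\le 0$ from $\nu_{i_*}\le\nu_i$ without spelling out the substitution $a<-\tau_r b$.
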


Now we bound the probability that the optimal arm is eliminated at some round $r\in[R]$ for a fixed parameter instance $\bmu$. This bound follows from \cref{lem:posterior_mean_concentration} and the fact that if arm $i_*$ is eliminated at round $r$, then at least $\left\lfloor|S_r|/2\right\rfloor$ arms have larger posterior means than $i_*$ at the end of the round.
\begin{restatable}{lemma}{lemRElimination}\label{lem:r_elimination}
Fix instance $\bmu$ and round $r\in [R]$. Then, there exists some $j_{r,\bmu}\in S_r \setminus\{i_*\}$ such that the probability that $i_*$ is eliminated at round $r$ satisfies
\begin{align*}
    &\prob{i_* \not \in S_{r+1}| \{i_* \in S_r\},  \bmu}  \leq 2 \exp\left(-g_r(i_*,j_{r,\bmu})\right).
\end{align*}
\end{restatable}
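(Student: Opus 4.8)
The plan is to reduce the elimination event to a statement about the \emph{number} of active arms that beat $i_*$ in posterior mean, and then to convert that count into a single-arm bound by Markov's inequality; this last step is what produces the clean constant $2$. Throughout I keep $\bmu$ fixed and condition on the realization of the active set $S_r$ entering round $r$. This set is determined by the reward realizations of rounds $1,\dots,r-1$ and is therefore independent of the fresh samples drawn in round $r$, so that \cref{lem:posterior_mean_concentration} applies verbatim to the arms of $S_r$ with $i_*\in S_r$.

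First I would translate the elimination rule into a counting event. Since \bayeselim retains the $\lceil\abs{S_r}/2\rceil$ arms of largest posterior mean, the event $\{i_*\notin S_{r+1}\}$ (given $i_*\in S_r$) forces $i_*$ to rank below at least $\lfloor\abs{S_r}/2\rfloor$ of the other active arms. Writing
\[
N = \sum_{i\in S_r\setminus\{i_*\}}\I{\bar\mu_{i,n_{r,i}}>\bar\mu_{i_*,n_{r,i_*}}}
\]
for the number of active arms whose round-$r$ posterior mean exceeds that of $i_*$, this yields the inclusion $\{i_*\notin S_{r+1}\}\subseteq\{N\ge \lfloor\abs{S_r}/2\rfloor\}$, up to the event of ties, which has probability $0$ because the posterior means are continuous random variables.

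Next I would bound the probability of the count event by its expectation. Markov's inequality gives $\condprob{N\ge \lfloor\abs{S_r}/2\rfloor}{i_*\in S_r,\bmu}\le \condE{N}{i_*\in S_r,\bmu}/\lfloor\abs{S_r}/2\rfloor$, and linearity of expectation together with \cref{lem:posterior_mean_concentration} bounds the expected count by $\sum_{i\in S_r\setminus\{i_*\}}\exp(-g_r(i_*,i))$. Defining $j_{r,\bmu}=\argmin_{i\in S_r\setminus\{i_*\}} g_r(i_*,i)$ as the competitor hardest to separate from $i_*$, every summand is at most $\exp(-g_r(i_*,j_{r,\bmu}))$, so the expected count is at most $(\abs{S_r}-1)\exp(-g_r(i_*,j_{r,\bmu}))$. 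Combining the two estimates gives
\[
\condprob{i_*\notin S_{r+1}}{i_*\in S_r,\bmu}\le \frac{\abs{S_r}-1}{\lfloor\abs{S_r}/2\rfloor}\,\exp\!\left(-g_r(i_*,j_{r,\bmu})\right),
\]
and the elementary inequality $(\abs{S_r}-1)/\lfloor\abs{S_r}/2\rfloor\le 2$, valid for every integer $\abs{S_r}\ge 2$ (and $\abs{S_r}\ge 2$ is necessary for any elimination to occur), finishes the proof.

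The step I expect to be the crux is the passage from ``at least $\lfloor\abs{S_r}/2\rfloor$ arms beat $i_*$'' to a bound controlled by a \emph{single} arm with only a constant factor. A naive union bound over the beating arms would incur a factor of order $\abs{S_r}/2$ rather than $2$; the key idea is therefore to bound the \emph{expected number} of beating arms (linearity plus \cref{lem:posterior_mean_concentration}) and apply Markov's inequality against the deterministic threshold $\lfloor\abs{S_r}/2\rfloor$, so that the $\abs{S_r}-1$ summands in the numerator cancel against the threshold up to the constant $2$. The only other point needing care is the conditioning: one must condition on the realized $S_r$ so that the threshold $\lfloor\abs{S_r}/2\rfloor$ and the allocations $n_{r,i}$ are fixed and the round-$r$ samples are independent of the past. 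The chosen competitor $j_{r,\bmu}$ may then depend on this realization, which is harmless for the subsequent chaining over rounds and integration over $\bmu$.
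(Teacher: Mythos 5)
Your proposal is correct and follows essentially the same route as the paper: both reduce the elimination event to the count of active arms beating $i_*$ in posterior mean, bound the expected count via \cref{lem:posterior_mean_concentration} and the worst competitor $j_{r,\bmu}=\argmin_{i\in S_r\setminus\{i_*\}} g_r(i_*,i)$, and apply Markov's inequality against the half-of-$S_r$ threshold to obtain the constant $2$. The only (immaterial) difference is the exact threshold used ($\lfloor\abs{S_r}/2\rfloor$ versus the paper's $\lceil(\abs{S_r}-1)/2\rceil$), both of which yield the ratio bound of $2$.
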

Putting together the above lemmas, the expected probability of misidentification is bounded as
\begin{align}\label{eq:regret_bound_1}
    \Ex{H}{\prob{J\neq i_*|\bmu}} &= \int_{\bmu} \prob{J\neq i_*|\bmu} h(\bmu) \,d\bmu \nonumber\\
    &\leq \int_{\bmu} \sum_{r\in [R]}\prob{i_*(\bmu) \not \in S_{r+1}| \bmu, \{i_*(\bmu) \in S_r\}} h(\bmu) \,d\bmu \nonumber\\
    &\leq 2\sum_{r\in [R]} \int_{\bmu}
    \exp\left(-g_r(i_*,j_{r,\bmu})\right) h(\bmu) \,d\bmu \quad\text{(by \cref{lem:r_elimination})}\nonumber\\
    &\leq 2 \log_2 K \int_{\bmu} \max_{r\in [R]}
    \exp\left(-g_1(i_*,j_{r,\bmu})\right) h(\bmu) \,d\bmu \nonumber\\
    &\leq 2\log_2 K \sum_{i\in[K]}\sum_{j>i} \int_{\bmu}
    \exp\left(-g_1(i,j)\right) h(\bmu) \,d\bmu ,
\end{align}  
Now, as noted before, $i_*$ and $j_{r,\bmu}$ are random quantities that depend on the instance $\bmu$. The third inequality is due to taking the maximum over $r\in[R]$ and then using the fact that for any $i,j$ we have $g_r(i,j)\geq g_1(i,j)$. 
For the last inequality, note that for any fixed $r\in[R]$, we can replace the (stochastic) values of $i_*$ and $j_{r,\bmu}$ by taking all possible pairs in $[K]$. Finally, we compute the integral in \cref{eq:regret_bound_1} by completing the squares and using properties of Gaussian integration:
\begin{restatable}{proposition}{lemFinalUBound}\label{lem:final_ub}
For any $i,j\in[K]$ we have that
\begin{align*}
    &\int_{\bmu}
    \exp\left(-g_1(i,j)\right) h(\bmu) \,d\bmu  =  \sqrt{\frac{R\sum_{k\in[K]}\s_k^2}{n\s_0^2+ R\sum_{k\in[K]}\s_k^2}} \exp\left(-\frac{(\nu_j-\nu_i)^2}{4\s_0^2}\right).
\end{align*}
\end{restatable}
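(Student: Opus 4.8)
The plan is to exploit the fact that the biased gap $g_1(i,j)$ depends on $\bmu$ only through the pair $(\mu_i,\mu_j)$. First I would write the product-Gaussian prior density as $h(\bmu)=\prod_{k\in[K]}(2\pi\s_0^2)^{-1/2}\exp(-(\mu_k-\nu_k)^2/(2\s_0^2))$ and integrate out every coordinate $\mu_k$ with $k\neq i,j$; each of these integrals is the total mass of a Gaussian and contributes a factor of $1$. This collapses the claim to the two-dimensional integral $\frac{1}{2\pi\s_0^2}\iint \exp(-g_1(i,j))\exp(-\frac{(\mu_i-\nu_i)^2+(\mu_j-\nu_j)^2}{2\s_0^2})\,d\mu_i\,d\mu_j$, and since $g_1(i,j)$ involves the means only through $\mu_i-\mu_j$, the natural move is to rotate to the difference and sum directions.

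Concretely, I would change variables to $w=(\mu_i-\nu_i)-(\mu_j-\nu_j)$ and $z=(\mu_i-\nu_i)+(\mu_j-\nu_j)$, which has Jacobian $\tfrac12$ and turns the prior exponent into $(w^2+z^2)/(4\s_0^2)$, while $\mu_i-\mu_j=w+\delta$ with $\delta:=\nu_i-\nu_j$. Because $g_1$ depends only on $w$, the $z$-integral decouples completely and evaluates to $\int\exp(-z^2/(4\s_0^2))\,dz=2\sqrt{\pi}\,\s_0$. What remains is a single one-dimensional Gaussian integral in $w$. Writing $a:=n/(4R\sum_{k\in[K]}\s_k^2)$, the $w$-exponent is the quadratic $-Aw^2-Bw-C$ with $A=a+\tfrac{1}{4\s_0^2}$, $B=\delta(2a+\tfrac{1}{2\s_0^2})$, and $C=\delta^2(a+\tfrac{1}{2\s_0^2})$, and completing the square contributes the Gaussian factor $\sqrt{\pi/A}$ together with the exponential $\exp(B^2/(4A)-C)$.

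The one step that is genuinely more than bookkeeping — and which I expect to be the crux — is verifying that the constant $B^2/(4A)-C$ collapses to exactly $-\delta^2/(4\s_0^2)$, with no residual dependence on $a$ (hence on the budget $n$). This cancellation is by design: the linear bias term $(\nu_i-\nu_j)(\mu_i-\mu_j)/(2\s_0^2)$ in $g_1$ was engineered so that the $a$-dependence in $B^2/(4A)$ cancels that in $C$. The key identity is $2a+\tfrac{1}{2\s_0^2}=\tfrac12(4a+\tfrac{1}{\s_0^2})$, which gives $B^2/(4A)=\tfrac{\delta^2}{4}(4a+\tfrac{1}{\s_0^2})$, while $C=\tfrac{\delta^2}{4}(4a+\tfrac{2}{\s_0^2})$; subtracting leaves precisely $-\delta^2/(4\s_0^2)=-(\nu_i-\nu_j)^2/(4\s_0^2)$, the exponential factor claimed in the statement.

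Finally I would assemble the prefactors: the $\frac{1}{2\pi\s_0^2}$ from the two-dimensional prior normalization, the $\tfrac12$ Jacobian, the $z$-integral $2\sqrt{\pi}\,\s_0$, and the $w$-integral factor $\sqrt{\pi/A}$ multiply to $\frac{1}{2\s_0\sqrt{A}}$. Substituting $A=a+\tfrac{1}{4\s_0^2}=\tfrac14(\tfrac{n}{R\sum_{k\in[K]}\s_k^2}+\tfrac{1}{\s_0^2})$ and simplifying turns this into $\sqrt{R\sum_{k\in[K]}\s_k^2/(n\s_0^2+R\sum_{k\in[K]}\s_k^2)}$, which together with the exponential from the previous step is exactly the stated identity. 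Everything after the cancellation is routine algebra and Gaussian-integral normalization.
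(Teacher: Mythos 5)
Your proof is correct and is essentially the same computation as the paper's: the paper simply instantiates its general Gaussian-integration formula (\cref{lem:gaussuan_integration_1}, proved by sequentially completing squares in the two coordinates) with $c_1 = n/(4R\sum_{k\in[K]}\s_k^2)$ and $c_2 = 1/2$, whereupon $c_1\s_0^2 - c_2^2 + c_2 = (4c_1\s_0^2+1)/4$ yields exactly the cancellation you identify as the crux. Your rotation to sum/difference coordinates is just a slightly cleaner way of organizing the identical completing-the-square calculation, and your intermediate constants ($A$, $B$, $C$, the Jacobian, and the prefactor assembly) all check out.
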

\noindent This completes the sketch.
\qed

\section{Lower Bounds}\label{sec:lb'}

In this section, we give a distribution-dependent lower bound on the expected probability of misidentification. We focus on the important case of $2$-armed bandits and explore the dependence on the budget and prior quantities. We note that even in frequentist BAI, the case of $2$-armed bandits with Gaussian rewards is the only setting with matching upper and lower bounds \citep{KaufmannEmilie2016OnTC,Kato22,qin22a}. Our lower bound is the first such result in Bayesian BAI. To derive it, we consider symmetric instances $\bmu=(\mu_1,\mu_2)$ and $\bmu'=(\mu_1,\mu_2)$, and modify a change-of-measure argument in \citet{Carpentier16}, along with properties of Gaussian integration. We show that the guarantee of \bayeselim matches the lower bound (up to logarithmic factors) for any budget $n$. 
Finally, we present a lower bound on the expected probability of misidentification of any frequentist policy, a policy that ignores prior information, applied to the Bayesian setting. All omitted proofs of this section are in \cref{app:LB'}.

\begin{restatable}{theorem}{thmlb}\label{thm:lb'}
For any policy interacting with $K=2$ arms with mean rewards $\bmu=(\mu_1,\mu_2)$, reward distributions $\cN(\mu_i,\s^2)$, there exists prior $\mu_i\sim\mathcal{N}(\nu_i,\s_0^2)$ for $i\in\{1,2\}$, such that
\begin{align*}
    &\Ex{H}{\Prob{}{J\neq i_*(\bmu)|\bmu}}\geq \frac{1}{2e}\sqrt{\frac{\sigma^2 }{2n(8\log(2n)+1)\s_0^2+\s^2}} \exp\left(-\frac{(\nu_1-\nu_2)^2}{4\s_0^2}\right)
\end{align*}
\end{restatable}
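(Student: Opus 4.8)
The plan is to establish the bound by a two-point (change-of-measure) argument in which each realized instance is paired with its ``swap,'' followed by integration of the per-pair bound against the Gaussian prior. Fix a policy and, without loss of generality (the target depends only on $(\nu_1-\nu_2)^2$), assume $\nu_1 \ge \nu_2$. For a realized mean vector $\bmu = (\mu_1,\mu_2)$ write $\bmu' = (\mu_2,\mu_1)$ for the instance obtained by exchanging the two arm means; the map $\bmu \mapsto \bmu'$ is an involution of $\realset^2$ with unit Jacobian that interchanges the half-planes $\{\mu_1 > \mu_2\}$ and $\{\mu_1 < \mu_2\}$ and flips the identity of the optimal arm. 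Splitting the objective at the diagonal and applying this change of variables on the half where arm $2$ is optimal, I would rewrite
\begin{align*}
    \Ex{H}{\prob{J \neq i_*(\bmu)|\bmu}} = \int_{\mu_1 > \mu_2} \left[\prob{J \neq 1|\bmu}\,h(\bmu) + \prob{J \neq 2|\bmu'}\,h(\bmu')\right] \,d\bmu,
\end{align*}
so that it suffices to lower bound, for each $\bmu$ in the upper half-plane, the bracketed sum of the error probabilities on the two confusable instances $\bmu$ and $\bmu'$.

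For the per-pair bound I would invoke a Bretagnolle--Huber / Le Cam inequality: with the event $\{J = 2\}$, the data distributions $P_{\bmu}$ and $P_{\bmu'}$ satisfy $\prob{J \neq 1 | \bmu} + \prob{J \neq 2 | \bmu'} \ge \tfrac12 \exp(-\kl{P_{\bmu}}{P_{\bmu'}})$. By the standard divergence decomposition for bandits together with the fixed-budget constraint $N_1 + N_2 = n$, and since $\bmu$ and $\bmu'$ differ only by exchanging the two Gaussian reward means (each shifted by $\mu_1-\mu_2$), the idealized divergence equals $n(\mu_1-\mu_2)^2/(2\s^2)$. The crux is that making this change of measure rigorous against the random, policy-dependent pull counts $N_1, N_2$ --- adapting the argument of \citet{Carpentier16}, which controls the log-likelihood ratio on a high-probability event via a maximal/union bound over the at most $n$ pulls --- replaces the divergence by a larger effective exponent $\beta(\mu_1-\mu_2)^2$ with $\beta = n(8\log(2n)+1)/(2\s^2)$; this logarithmic inflation is precisely the source of the gap between \cref{thm:lb'} and the matching upper bound of \cref{thm:bound_with_integral}. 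To combine the two density weights I would factor out $\min(h(\bmu), h(\bmu'))$; since $h(\bmu')/h(\bmu) = \exp(-(\nu_1-\nu_2)(\mu_1-\mu_2)/\s_0^2) \le 1$ on the region of integration, controlling this asymmetry (restricting to the sub-region where the ratio is at least $e^{-1}$) leaves, up to constants, a clean factor $\tfrac{1}{2e}\,h(\bmu)\exp(-\beta(\mu_1-\mu_2)^2)$ to be integrated.

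It then remains to evaluate the Gaussian integral. Passing to the difference coordinate $u = \mu_1 - \mu_2 \sim \cN(\nu_1-\nu_2,\, 2\s_0^2)$ (the orthogonal sum coordinate integrates to one), the remaining one-dimensional integral of $\exp(-\beta u^2)$ against the prior density is a completion of the square, producing the prefactor $(1 + 4\beta\s_0^2)^{-1/2} = \sqrt{\s^2 / (2n(8\log(2n)+1)\s_0^2 + \s^2)}$ and the exponential factor $\exp(-\beta(\nu_1-\nu_2)^2/(1+4\beta\s_0^2))$, which I would lower bound by $\exp(-(\nu_1-\nu_2)^2/(4\s_0^2))$ using $\beta/(1+4\beta\s_0^2) \le 1/(4\s_0^2)$; collecting the $\tfrac{1}{2e}$ constant delivers the claim. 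I expect the main obstacle to be the second step: quantifying the loss incurred when the change of measure is made rigorous against the random pull counts so that it is only the stated multiplicative $\log(2n)$ factor, and doing so in a way that remains compatible with --- and integrable against --- the Gaussian prior, rather than a cruder truncation that would either blow up the constant or degrade the $1/\sqrt{n}$ dependence.
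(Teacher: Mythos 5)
Your overall architecture is the paper's: pair each instance $\bmu=(\mu_1,\mu_2)$ with its swap $\bmu'=(\mu_2,\mu_1)$, lower bound the sum of the two error probabilities by a change-of-measure argument whose exponent is inflated to $n(8\log(2n)+1)(\mu_1-\mu_2)^2/(2\sigma^2)$ by a Carpentier--Locatelli-style control of the log-likelihood ratio on a high-probability event, and then integrate against the Gaussian prior by completing the square. (As an aside, for $K=2$ with the swap instance the plain Bretagnolle--Huber inequality combined with the exact divergence decomposition $D(P_{\bmu}\|P_{\bmu'})=\mathbb{E}[N_1]\tfrac{(\mu_1-\mu_2)^2}{2\sigma^2}+\mathbb{E}[N_2]\tfrac{(\mu_1-\mu_2)^2}{2\sigma^2}=n\tfrac{(\mu_1-\mu_2)^2}{2\sigma^2}$ already handles the random pull counts exactly, so the step you flag as the main obstacle would in fact go through without any logarithmic inflation and still imply the stated bound; the paper's log factor is an artifact of its high-probability change of measure.)

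The genuine gap is in how you combine the two prior densities and then integrate. You propose to lower bound $\min(h(\bmu),h(\bmu'))=h(\bmu')$ by $e^{-1}h(\bmu)$, which forces you onto the sub-region $0\le \mu_1-\mu_2\le \sigma_0^2/(\nu_1-\nu_2)$, and then you evaluate the $u=\mu_1-\mu_2$ integral as if it ran over the whole line, claiming the prefactor $(1+4\beta\sigma_0^2)^{-1/2}$. These two steps are incompatible: the completed-square Gaussian in $u$ is centered at $u^*=(\nu_1-\nu_2)/(1+4\beta\sigma_0^2)$, which for well-separated prior means lies far outside your sub-region, so the truncated integral captures only an $O\!\left(\sigma_0/(\nu_1-\nu_2)\right)$ sliver of the mass and falls short of the claimed bound by an unbounded factor as $\nu_1-\nu_2$ grows (e.g.\ already by more than an order of magnitude at $\nu_1-\nu_2=10\sigma_0$, $\beta\sigma_0^2=1$). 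The paper avoids this by never truncating: it keeps the exact identity $h(\bmu')=\exp\left(-\tfrac{(\mu_1-\mu_2)(\nu_1-\nu_2)}{\sigma_0^2}\right)h(\bmu)$, lets this cross term ride along with the change-of-measure exponent so that the surviving density is $h(\bmu')$ itself, and then handles the resulting half-line integral $\int_{\delta\ge 0}$ --- whose completed-square center is \emph{negative} under $h(\bmu')$ --- with an explicit Gaussian tail lower bound (\cref{eq:lb tail prob}); it is that tail bound, not an unrestricted Gaussian integral, that produces the prefactor and the clean $\exp(-(\nu_1-\nu_2)^2/(4\sigma_0^2))$. You would need to replace your truncation by this (or an equivalent) tail estimate for the argument to close.
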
 
The above bound exhibits an $\Tilde{\Omega}({1}/{\sqrt{n}})$ dependence on the budget, while the dependence in prior quantities exactly matches that of our upper bound in \cref{thm:bound_with_integral}. 
In particular, the guarantee obtained for the expected probability of misidentification of \bayeselim in the case of $K=2$ arms becomes as follows.
\begin{corollary}
In setting of \cref{thm:lb'}, \bayeselim satisfies
    \begin{align*}
    &\Ex{H}{\Prob{}{J\neq i_*|\bmu}} \leq 2 \sqrt{\frac{2\s^2}{n\s_0^2+2\s^2}}  \exp\left(-\frac{(\nu_1-\nu_2)^2}{4\s_0^2}\right).
\end{align*} 
\end{corollary}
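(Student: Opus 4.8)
The plan is to obtain this Corollary as a direct specialization of \cref{thm:bound_with_integral} to the setting of \cref{thm:lb'}, which fixes $K = 2$ and imposes a common reward variance $\s_i = \s$ across both arms (with reward distributions $\cN(\mu_i, \s^2)$ and priors $\cN(\nu_i, \s_0^2)$). Since \cref{thm:bound_with_integral} already delivers a bound for arbitrary $K$ and arbitrary reward variances, no new probabilistic argument is needed: the entire proof should reduce to substituting these parameter choices into the general upper bound and simplifying.

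First I would record the consequences of $K = 2$ on the structural quantities appearing in \cref{thm:bound_with_integral}. The number of elimination rounds is $R = \lceil \log_2 K \rceil = \lceil \log_2 2 \rceil = 1$, so the leading logarithmic factor $2 \log_2 K$ collapses to $2$. The common-variance assumption gives $\sum_{k \in [K]} \s_k^2 = \s_1^2 + \s_2^2 = 2\s^2$. Substituting $R = 1$ and this sum into the square-root factor then yields $\sqrt{2\s^2 / (n\s_0^2 + 2\s^2)}$, which matches the claimed prefactor.

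Next I would evaluate the double sum $\sum_{i \in [K]} \sum_{j > i}$. For $K = 2$ the only admissible pair is $(i, j) = (1, 2)$, so the sum reduces to the single term $\exp\left(-(\nu_1 - \nu_2)^2 / (4\s_0^2)\right)$. Collecting the prefactor $2$, the square-root factor, and this single exponential reproduces the right-hand side of the Corollary exactly.

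Because the derivation is purely algebraic, I do not expect any genuine obstacle; the only point requiring care is confirming that the hypotheses of \cref{thm:lb'} — in particular the shared reward variance $\s$ and shared prior variance $\s_0^2$ — are precisely the restriction of the general hypotheses of \cref{thm:bound_with_integral}, so that the theorem applies verbatim. Once that compatibility is verified, the substitution is immediate and the stated bound follows.
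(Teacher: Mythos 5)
Your proposal is correct and matches the paper's (implicit) derivation exactly: the corollary is obtained by substituting $K=2$, $R=\lceil\log_2 2\rceil=1$, and $\sum_{k\in[2]}\s_k^2=2\s^2$ into \cref{thm:bound_with_integral}, whereupon the double sum collapses to the single pair $(1,2)$. No further argument is needed.
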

Therefore, \bayeselim achieves {optimal} guarantees (up to logarithmic factors) for any budget.
Finally, in the general setting of $K$ arms, we formally quantify the loss incurred by using a frequentist policy in the Bayesian setting. We show the following lower bound on the expected probability of misidentification of policies that ignore prior information.
\begin{restatable}{theorem}{thmFreqLBound}\label{thm:freq_lb'}
Let some $i\in[K]$. For any frequentist algorithm, i.e. any algorithm that is oblivious to prior information, there exist prior means $(\nu_1,\dots,\nu_K)$ with  $\nu_i>\nu_{j}$ for all $j\in [K]\setminus{\{i\}}$, such that, when $\s_0\rightarrow0$, the expected probability of misidentification satisfies
\begin{align*}
    &\Ex{H}{\prob{J\neq i_*|\bmu}} \geq \frac{1}{6} 
    \exp\left(-\frac{12n}{\sum_{j\neq i}(\nu_j-\nu_i)^{-2}} - \sqrt{\frac{48n\log(6Kn)}{\sum_{j\neq i}(\nu_j-\nu_i)^{-2}}}\right).
\end{align*}
\end{restatable}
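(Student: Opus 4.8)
The plan is to first reduce this Bayesian statement to a purely frequentist lower bound, and then to prove the latter by a Carpentier--Locatelli-style change-of-measure argument. For the reduction, I would exploit that as $\s_0 \to 0$ the prior $H$ concentrates at the point mass $\bmu = \bm{\nu}$: since the prior-oblivious policy induces a map $\bmu \mapsto \prob{J \neq i_*(\bmu)|\bmu}$ that is continuous away from ties, dominated convergence gives $\lim_{\s_0 \to 0}\Ex{H}{\prob{J\neq i_*|\bmu}} = \prob{J \neq i \mid \bmu = \bm{\nu}}$, where $i_*(\bm{\nu}) = i$ because $\nu_i > \nu_j$ for all $j \neq i$. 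It therefore suffices to exhibit prior means $\bm{\nu}$ with arm $i$ best on which the frequentist misidentification probability is at least the claimed quantity.

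For the frequentist lower bound, I would fix a base instance with means $\bm{\nu}$ (reward variance normalized to one) and gaps $\Delta_j = \nu_i - \nu_j > 0$, and for each $j \neq i$ introduce an alternative $\bm{\nu}^{(j)}$ that differs from the base only in arm $j$, whose mean is raised by $2\Delta_j$ so that arm $j$ strictly overtakes arm $i$. The per-sample KL between the base and $\bm{\nu}^{(j)}$ is then $2\Delta_j^2$, so the trajectory KL scales with the number of pulls $T_j$ of arm $j$. The crucial budget constraint $\sum_{j \neq i} T_j \leq n$ lets me set thresholds $a_j \propto n \Delta_j^{-2}/H$ with $H = \sum_{j\neq i}(\nu_j-\nu_i)^{-2}$, so that $\sum_{j\neq i} a_j$ exceeds $n$ and hence, deterministically, at least one arm $j$ satisfies $T_j \leq a_j$; on that arm the two instances are statistically close, with trajectory KL at most of order $n/H$ (matching the $12n/H$ in the exponent for the appropriate constant).

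I would then run the change of measure on the (random) under-sampled arm. Writing $J = i$ as a correct decision under the base but an error under $\bm{\nu}^{(j)}$, I would transport $\prob{J = i \mid \bm{\nu}}$ to the alternatives through a truncated likelihood ratio, controlling the log-ratio on a high-probability good event via a deviation bound for the sum of at most $a_j$ Gaussian increments. Taking a union over the $K-1$ possible under-sampled arms and peeling over the possible values of $T_j \in \{0,\dots,n\}$ produces the $\log(6Kn)$ factor and the additive deviation term $\sqrt{48 n \log(6Kn)/H}$, while the $\tfrac16$ collects the failure probabilities of the good events. To pin the hard instance to arm $i$ rather than merely to some arm, I would use that a frequentist BAI policy is oblivious to arm labels, so the construction can be symmetrized or relabeled to place the best arm at the prescribed index $i$ while preserving $H$.

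The main obstacle I anticipate is the high-probability control of the likelihood ratio on the under-sampled arm with the exact constants: one must simultaneously handle the randomness of which arm is under-sampled through a union bound, peel over the unknown number of pulls so that a single deviation inequality applies uniformly, and keep the good-event failure probability below $\tfrac16$, all while matching the $12$ and $48$ in the exponent. A secondary subtlety is the transfer from a worst-case-over-family bound to a bound for an instance with arm $i$ \emph{specifically} best, which relies on the label-invariance of the policy and must be stated carefully to exclude degenerate policies, such as one that always outputs $i$.
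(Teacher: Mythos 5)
Your proposal follows essentially the same route as the paper: reduce to the frequentist instance $\bmu=\bm{\nu}$ as $\s_0\to0$, build alternative instances that raise arm $j$ by $2\Delta_j$ (per-sample KL $2\Delta_j^2$), pigeonhole the budget against $H=\sum_{j\neq i}\Delta_j^{-2}$, and transport the correct-decision event through a truncated change of measure whose empirical log-likelihood is controlled uniformly over arms and times --- exactly the source of the $\log(6Kn)$ factor and the constants $12$ and $48$. The only notable difference is that the paper pigeonholes on the \emph{expected} pull counts $t_j=\Ex{i_*}{n_j}$ of a fixed arm and bounds the realized count via Markov's inequality ($\Prob{}{n_j\ge 6t_j}\le 1/6$, the origin of the factor $6$), rather than on realized counts with a union bound over which arm is under-sampled; your concern about pinning the hard instance to the prescribed index $i$ (and about degenerate policies that always output $i$) is genuine and is not actually resolved in the paper's proof either, which only exhibits \emph{some} hard instance index before choosing the prior means.
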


Note that the setting of \cref{thm:freq_lb'} corresponds to a trivial case for a Bayesian policy, since the means are known with certainty, without any sampling. The probability of \bayeselim making a mistake is $0$ in this case.

\section{Experiments}
We conduct both synthetic experiments and an experiment on a real-world dataset \citep{hoogenboom2019dssat}.

\label{sec:experiments}
\subsection{Synthetic Data Experiments}

\label{subsec: synthetic experiments}
\begin{figure*}[t]
    \centering
    \begin{minipage}{.47\textwidth}
        \includegraphics[width=\linewidth]{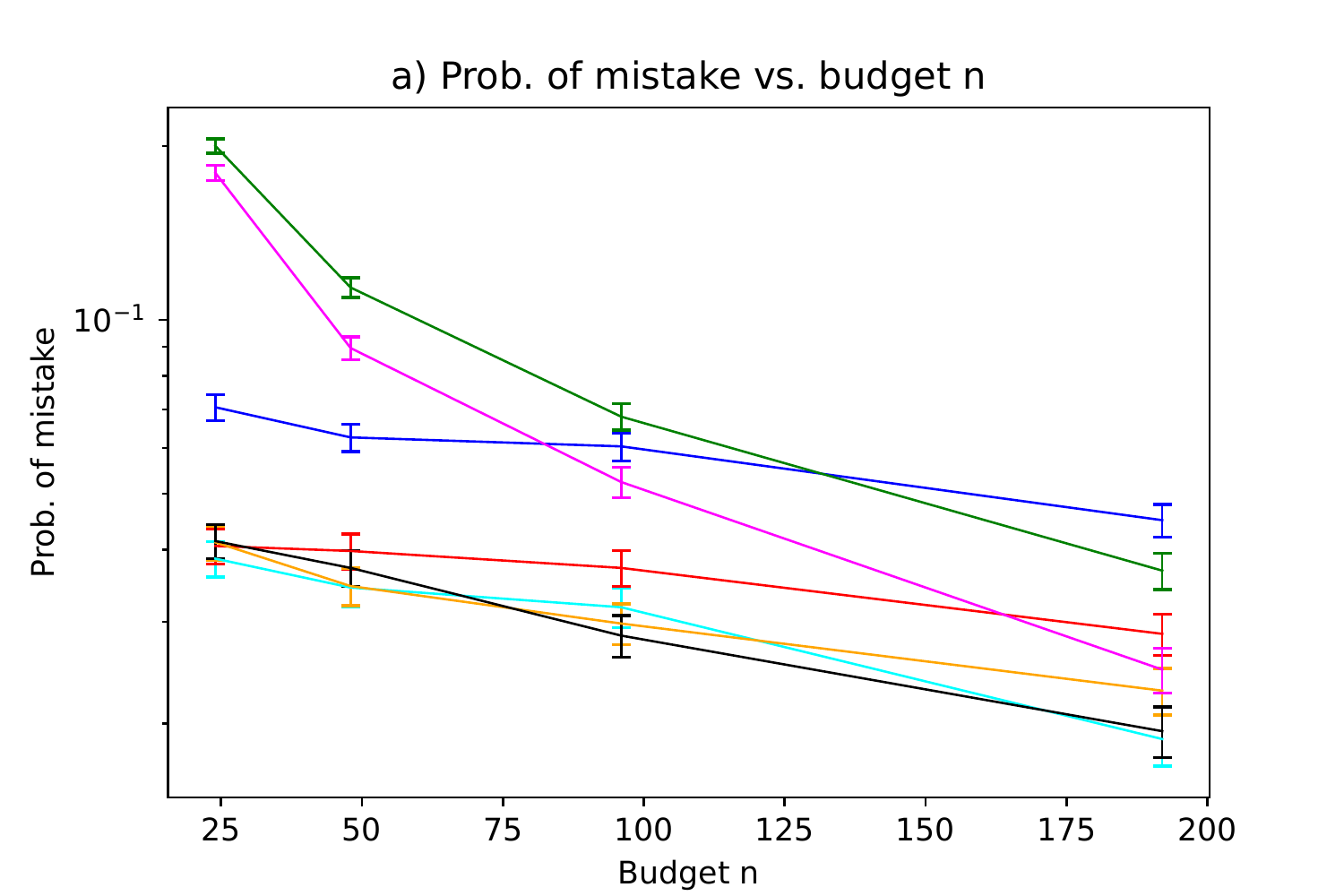}
    \end{minipage}
    \begin{minipage}{.47\textwidth}
        \includegraphics[width=\linewidth]{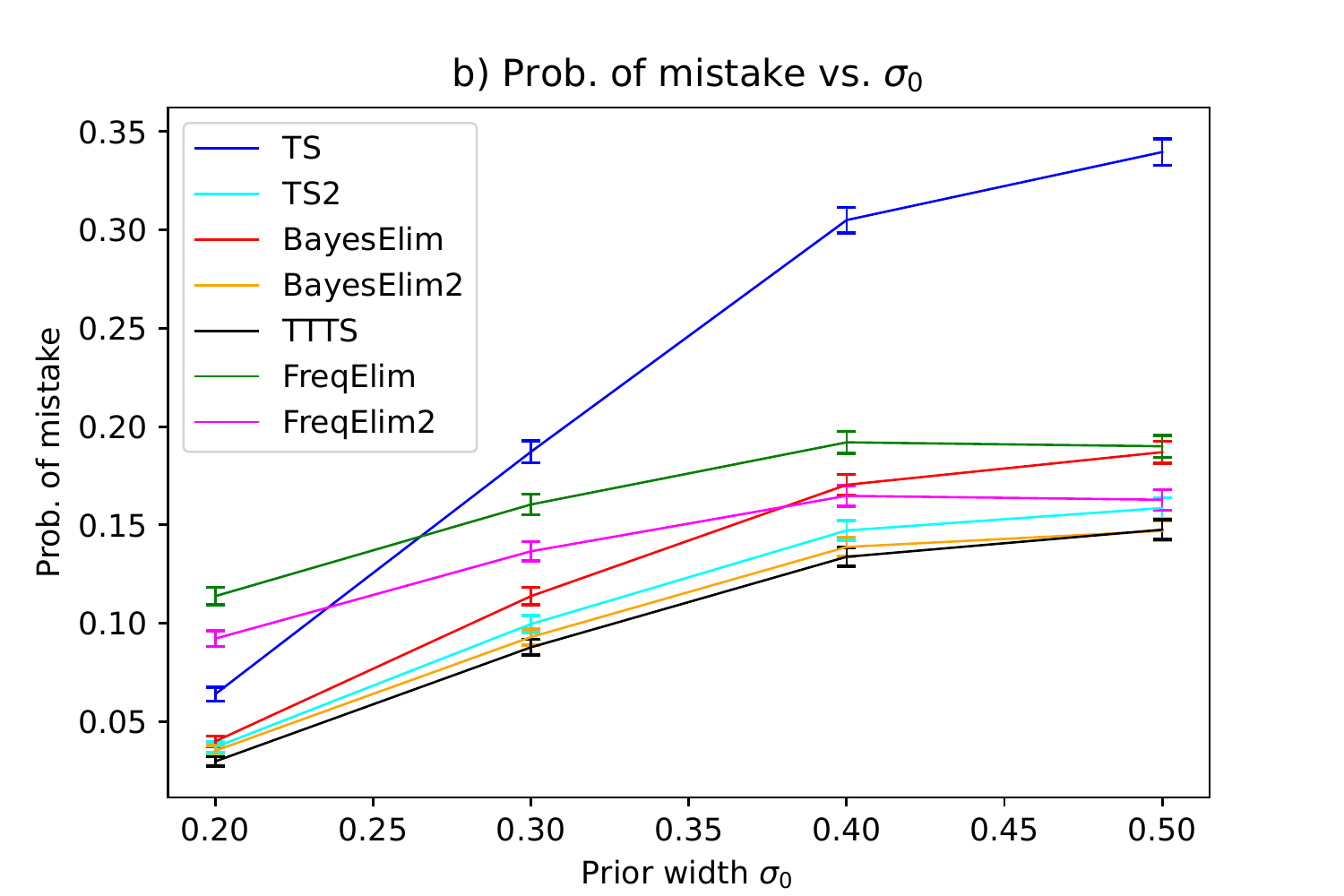}
    \end{minipage} 
    \caption{Evaluation of fixed-budget BAI algorithms on a synthetic dataset. The probability of misidentification as a function of a) budget $n$ and b) prior width $\sigma_0$.} 
    \label{fig:plot budget sigma0}
\end{figure*}

We simulate rewards from $K=8$ Gaussian arms whose means are drawn from a Gaussian prior with means $\{\nu_i = 2^{-i}: i \in \{0, 1, \dots, 7\}\}$. We choose this setting because it contains few small gaps and many large gaps. This is suitable for adaptive algorithms, as can be seen from \cref{thm:freq_bound}. In all our plots, we show the mean performance and error bars over $5000$ runs. We evaluate $7$ algorithms. We say that the algorithm \emph{does not have a theoretical guarantee} if it does not have a finite-budget Bayesian bound, on either its simple regret or misidentification probability.

\begin{itemize}
    \item \ts\citep{russo2018tutorial}: Thompson sampling where the best arm is chosen proportionally to the number of pulls. The simple regret of this variant of \ts is $\tilde{O}(1/\sqrt{n})$ (Section 9.1 in \citet{DBLP:journals/ior/RussoR18}).
    \item \tstwo: Thompson sampling where the best arm is the one with the highest posterior mean. This strategy does not have theoretical guarantees but performs very well empirically. 
    \item \bayeselim: Our proposed approach in \cref{alg:bayesian_successive_elimination}. Its probability of misidentification is bounded in \cref{thm:bound_with_integral}.
    \item \bayeselimtwo: A variant of \cref{alg:bayesian_successive_elimination} where we do not discard observations at the end of each round. It is well known that discarding of earlier observations is needed for analyzing BAI but it hurts practical performance \citep{ActionElimination-Evendar2006a,Karnin2013AlmostOE}. Thus this approach does not have a theoretical guarantee.
    \item \ttts \citep{russo2016simple}: Top-two Thompson sampling, a state-of-the-art algorithm for BAI. The arm with the highest posterior mean is chosen as the best arm. \ttts does not have a finite-budget guarantee. Its bound is asymptotic (Theorem 1 in \citet{russo2016simple}).
    \item \freqelim\citep{Karnin2013AlmostOE}: A frequentist variant of \cref{alg:bayesian_successive_elimination} studied in \cref{sec:frequentist bai}. Its probability of misidentification is bounded in \cref{thm:freq_bound}.
    \item \freqelimtwo: A frequentist analog of \bayeselimtwo. It does not have a theoretical bound.
\end{itemize}

In our first experiment, we study the dependence of the probability of misidentification on budget $n$. We fix the prior width at $\sigma_0 = 0.5$ and the reward noise at $\sigma = 0.5$. Our results are reported in \cref{fig:plot budget sigma0}a. As expected, the probability of misidentification decreases as the budget $n$ increases for all algorithms. We make two additional observations. First, \bayeselim performs the best among the algorithms with theoretical guarantees. Second, \bayeselimtwo, along with \ttts, are the best performing algorithms. While these algorithms do not have finite-budget Bayesian guarantees, we compare with them and report their performance in the hope that this may inspire future analyzable designs. Finally, the difference between the frequentist and Bayesian elimination algorithms decreases as the budget increases. This is expected since the benefit of the prior diminishes as more samples are available. 

In our second experiment, we study the dependence of the probability of misidentification on prior width $\sigma_0$. Our results are reported in \cref{fig:plot budget sigma0}b. We observe again that \bayeselim is the best algorithm among those with theoretical guarantees, and \bayeselimtwo is among the best algorithms. We also note that the performance gap between frequentist and Bayesian elimination algorithms decreases as the prior width increases. This is expected because wider priors are less informative.

\subsection{Real Data Experiments}

\begin{figure*}[t]
    \centering
    \begin{minipage}{.47\textwidth}
        \includegraphics[width=\linewidth]{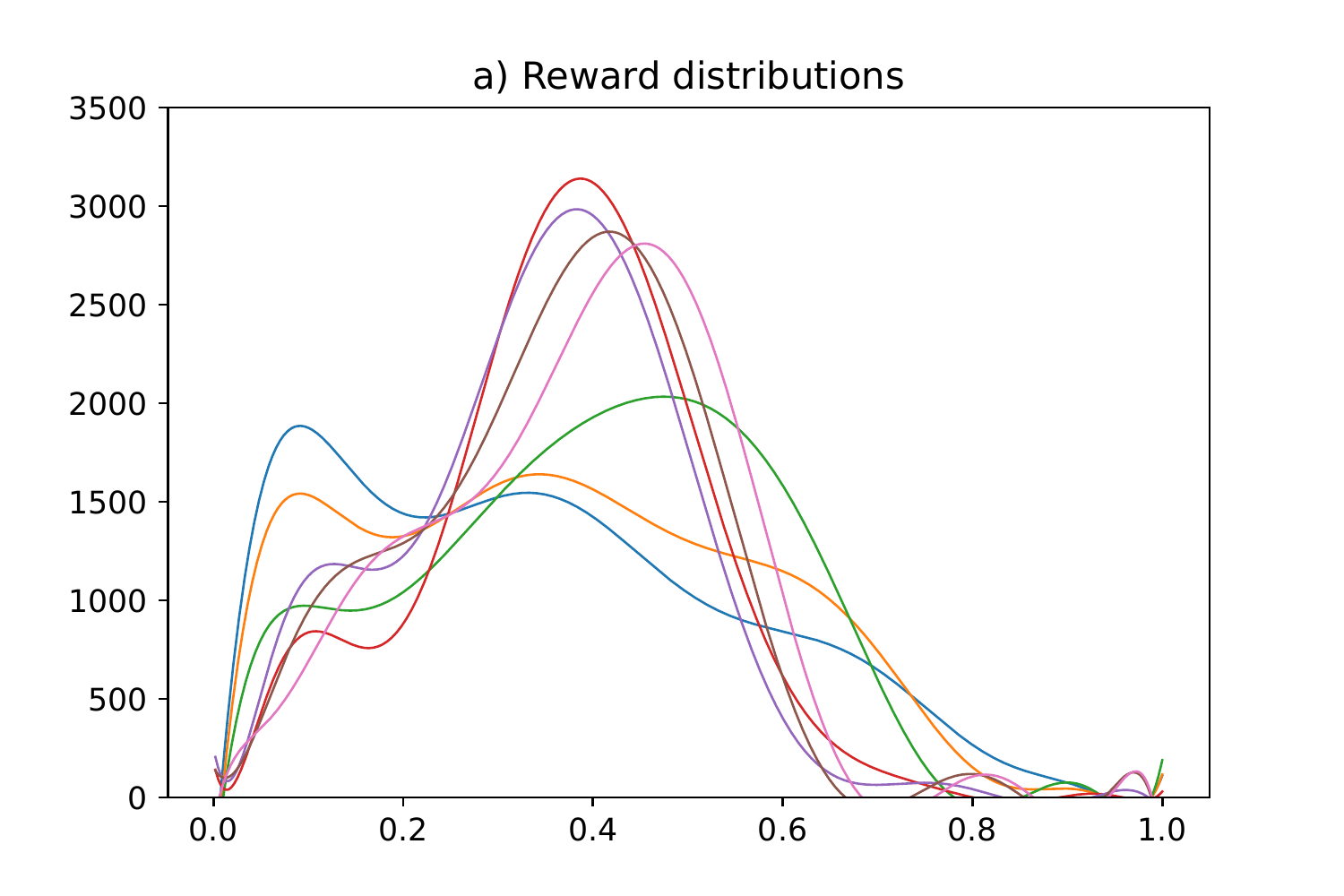}
    \end{minipage}
    \begin{minipage}{.47\textwidth}
        \includegraphics[width=\linewidth]{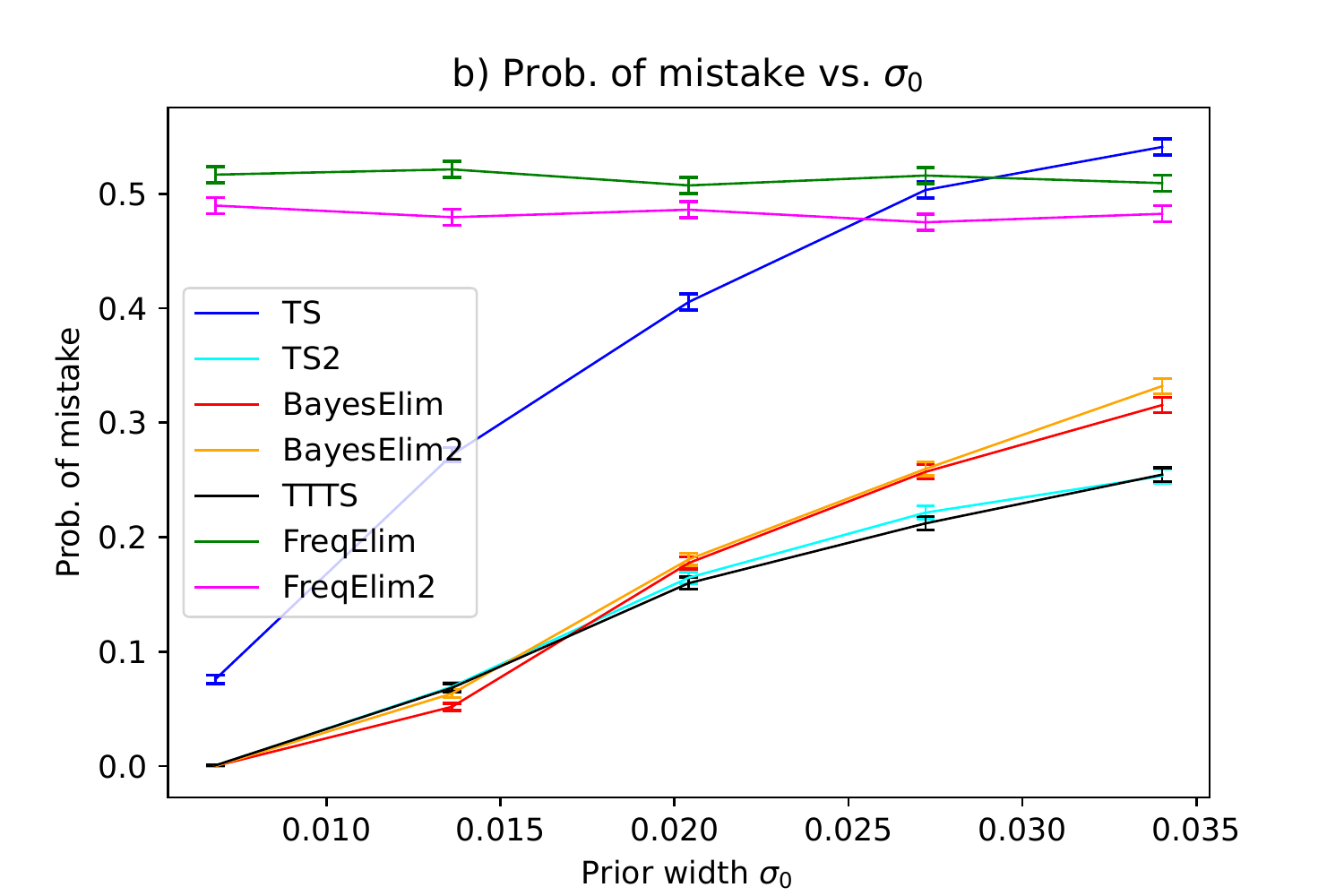}
    \end{minipage}
    \caption{a) Reward distributions of the $K=7$ arms in the crop management problem. b) Probability of misidentification as a function of the prior width $\sigma_0$.}
    \label{fig:crop data results}
\end{figure*}

For our real-data experiment, we consider the crop-management problem often used to compare bandit algorithms \citep{baudry2021optimal, jourdan2022top}, where a group of farmers wants to identify the best planting date for a rainfed crop. The reward (crop yield) can be modeled as a complex distribution with multiple modes, but upper bounded by a known maximum potential. 

We compare algorithms using the DSSAT simulator \citep{hoogenboom2019dssat}. Since calling the simulator is computationally intensive, we use the techniques and empirical data shared in \citet{jourdan2022top} (see Appendix I.1 in their paper for details) to run our experiments. There are $K=7$ arms, and each arm corresponds to a choice of planting date and fixed soil conditions. The reward distribution of the arms are plotted in \cref{fig:crop data results}a. 

Similarly to \cref{subsec: synthetic experiments}, we compare the algorithms by plotting their probability of misidentification as a function of the prior width $\sigma_0$ in \cref{fig:crop data results}b. We set the budget to $n=420$. We set the priors to be Gaussians with mean $\nu_i$ equal to the reward distribution mean from \cref{fig:crop data results}a, and study problems of increasing difficulty by varying $\sigma_0$ as follows. Let $\eta = \tfrac{\nu_1-\nu_2}{2}$ be the difference between the top two prior means. We vary $\sigma_0$ from $\eta$ to $5\eta$. When $\sigma_0 = \eta$ the problem is very easy; one can almost always correctly guess the best arm as the one with prior mean $\nu_1$ without observing any rewards. Even though our algorithms assume Gaussian rewards to update the posterior, the actual rewards are drawn from the multimodal distributions in \cref{fig:crop data results}a. Thus this experiment also tests the robustness of our algorithms to the Gaussian assumption. The probability of misidentification of all Bayesian algorithms increases with $\sigma_0$ in \cref{fig:crop data results}b as expected. We observe that \bayeselim has the lowest misidentification probability among all algorithms with theoretical guarantees. \tstwo and \ttts have the lowest misidentification probabilities, and are closely trailed by \bayeselim and \bayeselimtwo.

\section{Conclusion}
\label{sec:conclusions}

While best-arm identification with a fixed-budget has been studied extensively in the frequentist setting, Bayesian algorithms with optimal finite-budget guarantees do not exist. In this work, we address this gap and propose a Bayesian successive elimination algorithm with such a guarantee. The key idea in the algorithm is to eliminate arms based on their MAP estimates of mean rewards, which take the prior distribution of arm means into account. Our upper bound shows that the performance of the algorithm improves when the prior is more informative and matches our newly established lower bound for $K = 2$ arms. Our algorithm is evaluated empirically on synthetic and real-world datasets. We observe that it is superior to frequentist methods and competitive with state-of-the-art Bayesian algorithms with no guarantees in our setting.

Our work is the first step in the exciting direction of more sample-efficient Bayesian BAI algorithms, which have improved guarantees for more informative priors. The work can be extended in two obvious directions. First, our algorithm is designed for and analyzed in Gaussian bandits. We believe that both can be extended to single-parameter exponential-family distributions with conjugate priors, such as Bernoulli rewards with beta priors. Second, successive elimination of \citet{Karnin2013AlmostOE} has recently been extended to linear models by \citet{ijcai2022p388}. In the linear model, a Gaussian model parameter prior with Gaussian rewards implies a Gaussian model parameter posterior. For this conjugacy, we believe that our algorithm design and analysis can be extended to linear models.

\bibliographystyle{abbrvnat}
\bibliography{references}

\newpage

\appendix

\newpage
\section{Appendix}
\subsection{Table of Notation}\label{app:notation_tab}
\begin{table}[h]
\centering
\begin{tabular}{ |p{0.09\textwidth}|p{0.34\textwidth}| |p{0.09\textwidth}|p{0.34\textwidth}|  }
 \hline
 \multicolumn{4}{|c|}{Notation} \\
 \hline
 $K$ & Number of arms & $n$ & Exploration budget \\
 $J$ & Arm recommended by the policy &
 $\bmu$ & Mean reward vector \\
 $X_i$ &  Stochastic reward of arm $i$ &
 $\mu_i,\s_i^2$  & Mean and variance of the reward distribution of arm $i$ \\
 $\nu_i,\s_{0}^2$ & Mean and variance of the prior distribution of arm $i$  &
 $\bar \mu_{i,m}, \bs_{i,m}^2$ & Posterior mean and variance of arm $i$ computed from $m$ i.i.d. samples \\
 $i_*,i_*(\bmu)$ & Optimal arm of a random instance $\bmu$ &
 $\nu_*, \mu_*$ & Prior and mean reward of arm $i_*$ \\
 $R$ & Number of elimination rounds &
 $S_r$ & Active set of arms in elimination round $r$ \\
 $\prob{\cdot|\bmu}$ & Probability measure considering a fixed mean reward vector $\bmu$ &
 $\Ex{H}{.}$ & Expectation over the randomness of mean rewards \\
 $H(\bmu)$ & Cumulative probability of instance $\bmu$ &
 $h(\bmu)$ & Probability density of $\bmu$ \\
 \hline
\end{tabular}
\caption{Notation used in the paper.}
\label{table:1}
\end{table}

\subsection{Cumulative Regret, Simple Regret, and Probability of Misidentification} \label{sec:pure exploration}

Although probability of misidentification and simple regret are different objectives, there are relations between them, as well as with the objective of cumulative regret. The simple regret of a policy that recommends arm $J$ in a bandit instance $\bmu$ is defined as $ r(\bmu)=\sum_{i\in[K]\setminus\{i_*(\bmu)\}} \Delta_i \prob{J= i|\bmu}$.
In particular, in the frequentist setting and for $[0,1]$-bounded rewards, error probability and simple regret trivially satisfy \citep{audibert-2010-BAI} 
\begin{align*}\Delta_{\min} \prob{J\neq i_*(\bmu)|\bmu}\leq  r(\bmu) \leq \prob{J\neq i_*(\bmu)|\bmu}.\end{align*} 
This relation does not hold for general reward distributions with possibly unbounded gaps. In addition, there are known relations between cumulative regret and simple regret, which also extend to the Bayesian setting. Specifically, a $O(1 / \sqrt{n})$ simple Bayes regret can be attained by a trivial reduction from an algorithm with a $O(\sqrt{n})$ cumulative Bayes regret (Section 9.1 in \citet{DBLP:journals/ior/RussoR18}).

Moreover, although the objectives of error probability and simple regret are different, due to their closeness, works focusing on one objective can provide a lot of intuition for the other. 
For instance, in the recent work of \citet{komiyama2021optimal}, the authors show that, for Bernoulli rewards and under mild continuity assumptions on the prior, the area where the mean gaps are of order $O\left(\sqrt{{\log n}/{n}}\right)$ contributes a  $O\left({1}/{n}\right)$ to the expected simple regret as $n\rightarrow\infty$. If we adjust their calculations to the objective of error probability, it can be implied that, for Bernoulli rewards, the same area of gaps contributes an $O\left({1}/{\sqrt{n}}\right)$ to the expected probability of misidentification when $n\rightarrow\infty$. 

Despite the above similarities, which are based on informal arguments, we make several contributions that set us apart from \citet{komiyama2021optimal}: (i) We provide finite-budget guarantees on the probability of misidentification, which match the lower bound for any fixed budget. The analysis in \citet{komiyama2021optimal} is asymptotic and their guarantees on simple regret only match the lower bound when the budget $n \to \infty$. (ii) We achieve our guarantees by proposing a novel Bayesian algorithm for minimizing the expected probability of misidentification. Such guarantees cannot be achieved by any frequentist algorithm, such as that of \citet{Karnin2013AlmostOE} for minimizing the probability of misidentification or that of \citet{komiyama2021optimal} for simple regret minimization. In particular, we show in \cref{thm:freq_lb'} that for any fixed budget, the guarantees achieved by any frequentist algorithm can be arbitrarily worse than those of \bayeselim. (iii) Finally, we focus on Gaussian rewards instead of Bernoulli. \citet{komiyama2021optimal} leave open the question of deriving asymptotic guarantees for Gaussian rewards.

\subsection{Allocation Rule of \cref{eq:split}}\label{app:allocation}
In \cite{KaufmannEmilie2016OnTC} the authors focus on the important case of Gaussian $2$-armed frequentist BAI and identify an allocation rule such that the probability or error is minimized. In their case, the probability of error corresponds to the probability that the empirical mean of the suboptimal arm is larger than the empirical mean of the optimal arm.
In this paper, due to the fact that the objective is different than that of frequentist BAI and due to the presence of prior information, we diverge from the frequentist methods in our selection of an allocation rule. The difference between the frequentist approach to ours becomes more clear in the following. 

Focusing on $K=2$ arms and considering a fixed parameter vector $\bmu$, we examine the probability that the posterior mean of the suboptimal arm is larger than the posterior mean of the optimal arm. 
Let $i_*$ and $i$ be the optimal and suboptimal arm of $\bmu$. The probability of error for fixed $\bmu$ is
\begin{align*}
    &\prob{\bar\mu_{i,n_{i}} > \bar\mu_{i_*,n_{i_*}}|\bmu} 
    = \prob{ \bar\s_{i,n_{i}}^2 \left(\frac{\nu_i}{\s_{0,i}^2}+\sum_{s\in [n_{i}]} \frac{X_{i,s}}{\s_i^2}\right) > \bar\s_{i_*,n_{i_*}}^2 \left(\frac{\nu_{i_*}}{\s_{0,i_*}^2}+\sum_{s\in [n_{i_*}]} \frac{X_{i_*,s}}{\s_{i_*}^2}\right) ~\bigg|\bmu}\\
    &= \prob{ \bar\s_{i,n_{i}}^2\sum_{s\in [n_{i}]} \frac{X_{i,s}}{\s_i^2} - \bar\s_{i_*,n_{i_*}}^2\sum_{s\in [n_{i_*}]} \frac{X_{i_*,s}}{\s_{i_*}^2}> \frac{\bar\s_{i_*,n_{i_*}}^2\nu_{i_*}-\bar\s_{i,n_{i}}^2\nu_i}{\s_{0}^2} ~\bigg|\bmu}  \\
    &= \mathbb{P}\Bigg( \bar\s_{i,n_{i}}^2\sum_{s\in [n_{i}]} \frac{X_{i,s}-\mu_i}{\s_i^2} - \bar\s_{i_*,n_{i_*}}^2\sum_{s\in [n_{i_*}]} \frac{X_{i_*,s}-\mu_{i_*}}{\s_{i_*}^2} > \frac{\bar\s_{i_*,n_{i_*}}^2\nu_{i_*}-\bar\s_{i,n_{i}}^2\nu_i}{\s_{0}^2} + \bar\s_{i_*,n_{i_*}}^2\frac{n_{i_*}\mu_{i_*}}{\s_{i_*}^2} -  \bar\s_{i,n_{i}}^2\frac{n_{i}\mu_{i}}{\s_{i}^2} ~\bigg|\bmu\Bigg). 
\end{align*}
The LHS of the above expression is a zero mean Gaussian random variable. In contrast to the analysis in \cite{KaufmannEmilie2016OnTC} where the authors use Hoeffding's inequality in order to optimize the probability bound, in our case, due to the bias term of the prior $\frac{\bar\s_{i_*,n_{i_*}}^2\nu_{i_*}-\bar\s_{i,n_{i}}^2\nu_i}{\s_{0}^2}$, the RHS can become negative. Moreover, since our objective is different, we are not looking to minimize the RHS for a single mean vector, but on expectation over the prior. 

We introduce the following heuristic: we select an allocation rule such that the posterior variances of the arms become equal. In other words, we impose the following condition
\begin{align*}
    \bar\s_{i,n_{i}}^2 = \bar\s_{i_*,n_{i_*}}^2.
\end{align*}
The above condition implies that $
n_{r,i}=n\frac{\s_i^2}{\s_i^2+\s_{i_*}^2}$,
which coincides with our allocation rule of \cref{eq:split} for $K=2$ arms. In \cref{sec:ub_analysis}, even for $K>2$ arms, we show that our algorithm, using the above heuristic, has an optimal guarantee (up to logarithmic terms).

\subsection{Auxilliary}
\begin{theorem}[Hoeffding's Inequality for Subgaussian Random Variables (from \cite{hoeffding63})]\label{thm:hoeffding}
If $X_1,...,X_m\sim \cN(\mu,\s^2)$ then for any $i\in[m]$:
\begin{align*}
    &\Prob{}{X_i\geq \mu+\epsilon} \leq \exp\left(-\frac{\epsilon^2}{2\s^2}\right) 
    \text{  and  } \Prob{}{\frac{1}{m}\sum_{i\in[m]}X_i\geq \mu+\epsilon} \leq \exp\left(-\frac{m\epsilon^2}{2\s^2}\right).
\end{align*}
\end{theorem}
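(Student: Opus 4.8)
The plan is to reduce \cref{thm:freq_lb'} to a purely frequentist fixed-budget lower bound on a single Gaussian instance, and then establish that bound by a high-probability change-of-measure argument in the spirit of \citet{Carpentier16}. Since a frequentist policy ignores the prior, its recommendation $J$ depends only on the observed rewards, whose law is determined by $\bmu$. As $\s_0 \to 0$ the priors $\cN(\nu_j,\s_0^2)$ concentrate at $\nu_j$, so the event $\{i_*(\bmu)\neq i\}$ (which forces some $\mu_j>\mu_i$) has vanishing probability, and on a shrinking neighborhood of $\nu$ we have $i_*(\bmu)=i$. Using that $\bmu\mapsto\prob{J\neq i\,|\,\bmu}$ is continuous for a fixed policy with Gaussian rewards, I would argue $\lim_{\s_0\to0}\Ex{H}{\prob{J\neq i_*|\bmu}}=\prob{J\neq i\,|\,\bmu=\nu}$, i.e. the expected Bayesian error collapses to the frequentist misidentification probability on the point instance $\nu=(\nu_1,\dots,\nu_K)$ with arm $i$ optimal and gaps $\Delta_j=\nu_i-\nu_j$. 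It then suffices to lower bound this single-instance error by $\tfrac16\exp(-12n/C-\sqrt{48n\log(6Kn)/C})$, where $C=\sum_{j\neq i}\Delta_j^{-2}$ and we normalize the reward variance to $\s=1$.

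For the single-instance bound I would set up a family of Gaussian alternatives. Writing $\bmu^0=\nu$ for the reference, for each $j\neq i$ let $\bmu^j$ raise only arm $j$ by $2\Delta_j$, so arm $j$ becomes optimal in $\bmu^j$ and the per-sample KL between $\bmu^0$ and $\bmu^j$ on arm $j$ is $2\Delta_j^2$. The budget constraint $\sum_{j\neq i}N_j\le n$ drives a weighted pigeonhole: with weights $\Delta_j^{-2}/C$ one gets $\sum_{j\neq i}\tfrac{\Delta_j^{-2}}{C}\,\E{N_j}\Delta_j^2=\tfrac1C\sum_{j\neq i}\E{N_j}\le n/C$, so there is a fixed arm $j^\star\neq i$ with $\E{N_{j^\star}}\le n\Delta_{j^\star}^{-2}/C$. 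Markov's inequality then gives $N_{j^\star}\le T:=6n\Delta_{j^\star}^{-2}/C$ with probability at least $5/6$ under $\bmu^0$. This isolates a single, fixed arm on which the reference and its alternative are nearly indistinguishable within the budget.

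The core step is to push this indistinguishability through a high-probability change of measure rather than an in-expectation one. I would write the log-likelihood ratio $\Lambda=\sum_{s=1}^{N_{j^\star}}\big(2\Delta_{j^\star}(X_{j^\star,s}-\nu_{j^\star})-2\Delta_{j^\star}^2\big)$, whose conditional mean given $N_{j^\star}$ is $-2N_{j^\star}\Delta_{j^\star}^2$. On $\{N_{j^\star}\le T\}$ the deterministic part is at least $-2T\Delta_{j^\star}^2=-12n/C$, and I would control the fluctuation $\sum_s 2\Delta_{j^\star}(X_{j^\star,s}-\nu_{j^\star})$ by the subgaussian tail of \cref{thm:hoeffding}, taking a union bound over the $K$ arms and the at most $n$ possible values of the pull count to produce the $\log(6Kn)$ factor; this yields $\Lambda\ge -12n/C-\sqrt{48n\log(6Kn)/C}=:-\tau$ with probability at least $5/6$. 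Combining the two favorable events, $\Prob{\bmu^{j^\star}}{J=i}\ge e^{-\tau}\,\Prob{\bmu^0}{J=i,\ N_{j^\star}\le T,\ \Lambda\ge -\tau}\ge e^{-\tau}\big(\Prob{\bmu^0}{J=i}-\tfrac13\big)$. A two-regime argument finishes: if $\Prob{\bmu^0}{J=i}<\tfrac12$ the reference instance itself already has error $>\tfrac12\ge\tfrac16e^{-\tau}$; otherwise $\Prob{\bmu^{j^\star}}{J\neq j^\star}\ge\Prob{\bmu^{j^\star}}{J=i}\ge\tfrac16e^{-\tau}$, matching the claimed constants $12$, $48$ and the prefactor $\tfrac16$.

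The main obstacle is twofold. First, replacing the expected-KL (Bretagnolle--Huber) argument by high-probability likelihood-ratio control is essential: the pull count $N_{j^\star}$ is random and correlated with $J$, so I must bound the fluctuation of $\Lambda$ uniformly over the random count, which is exactly where the subgaussian union bound and the lower-order $\sqrt{n\log(6Kn)/C}$ term enter. Second is the quantifier: the two-regime argument naturally produces a hard instance that is either the reference $\nu$ (arm $i$ optimal, as required) or the alternative $\bmu^{j^\star}$ (arm $j^\star$ optimal). To certify the ``there exist prior means with $\nu_i>\nu_j$'' form, I would exploit the permutation symmetry of the construction, relabelling so that the selected worst arm plays the role of the designated optimal arm $i$; care is needed here since a label-dependent policy could otherwise trivially output $i$, so the argument is cleanest for policies that treat arm indices symmetrically.
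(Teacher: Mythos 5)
Your proposal does not address the statement at hand. The statement is \cref{thm:hoeffding}, the elementary Gaussian tail bound quoted from \citet{hoeffding63}: a one-sided tail estimate for a single $\cN(\mu,\s^2)$ variable and for the empirical mean of $m$ i.i.d.\ copies. What you have written instead is a proof sketch for \cref{thm:freq_lb'}, the frequentist lower bound --- a change-of-measure argument with Gaussian alternatives, a weighted pigeonhole over pull counts, and a two-regime conclusion. Worse, relative to the actual target the argument is circular: midway through you explicitly invoke ``the subgaussian tail of \cref{thm:hoeffding}'' to control the fluctuation of the log-likelihood ratio $\Lambda$, so the inequality you were asked to establish is assumed as a tool rather than proved. (As a blind reconstruction of \cref{thm:freq_lb'} your sketch is broadly in the spirit of the paper's Appendix proof --- alternatives $\mu_j' = \mu_j + 2d_j$, per-sample KL $2d_j^2$, Markov on the pull count with threshold $6t_i$, a union bound producing $\log(6Kn)$ --- but that is not the statement under review.)

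For the record, the target statement needs only a standard Chernoff argument, and the paper itself offers no proof since it cites the result. For $X\sim\cN(\mu,\s^2)$ and any $\lambda>0$, Markov's inequality applied to $e^{\lambda(X-\mu)}$ gives
\begin{align*}
\prob{X\geq\mu+\eps} \leq e^{-\lambda\eps}\,\E{e^{\lambda(X-\mu)}} = \exp\left(-\lambda\eps+\tfrac{\lambda^2\s^2}{2}\right),
\end{align*}
and optimizing at $\lambda=\eps/\s^2$ yields $\exp\left(-\eps^2/(2\s^2)\right)$. The second inequality is immediate from the first because $\frac{1}{m}\sum_{i\in[m]}X_i\sim\cN(\mu,\s^2/m)$, so replacing $\s^2$ by $\s^2/m$ gives $\exp\left(-m\eps^2/(2\s^2)\right)$. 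None of the machinery in your proposal (change of measure, pigeonhole over budgets, permutation symmetry of the policy) is relevant to this statement.
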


\begin{theorem}[Bretagnolle–Huber inequality (from \cite{Bretagnolle1978EstimationDD})]\label{thm:huber}
Let $\mathbb{P}$ and $\mathbb{Q}$ be probability
measures on the same measurable space and $A$ a measurable event. Then,
\begin{align*}
    \mathbb{P}(A) + \mathbb{Q}(A^c) \geq \frac{1}{2} \exp(-d_{KL}(\mathbb{P},\mathbb{Q}))
\end{align*}
where $A^c$ is the complement of $A$ and $d_{KL}(\mathbb{P},\mathbb{Q}) = \int_{-\infty}^{\infty} \log \left(\frac{\,d \mathbb{P}(x)}{\,d \mathbb{Q}(x)}\right) \,d \mathbb{P}(x)$.
\end{theorem}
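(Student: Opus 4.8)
The plan is to prove this distribution-level inequality by routing through the total variation distance and the Hellinger affinity, keeping every step elementary. First I would eliminate the event $A$ from the picture. For any measurable $A$,
\begin{align*}
    \mathbb{P}(A) + \mathbb{Q}(A^c) = 1 - \big(\mathbb{Q}(A) - \mathbb{P}(A)\big) \geq 1 - \sup_{B}\big(\mathbb{Q}(B) - \mathbb{P}(B)\big) = 1 - \|\mathbb{P}-\mathbb{Q}\|_{\mathrm{TV}},
\end{align*}
where $\|\cdot\|_{\mathrm{TV}}$ is the total variation distance. Hence it suffices to establish the event-free bound $1 - \|\mathbb{P}-\mathbb{Q}\|_{\mathrm{TV}} \geq \tfrac{1}{2}\exp(-d_{KL}(\mathbb{P},\mathbb{Q}))$. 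Before continuing I would dispatch the degenerate case: if $\mathbb{P}$ is not absolutely continuous with respect to $\mathbb{Q}$, then $d_{KL}(\mathbb{P},\mathbb{Q}) = +\infty$, the right-hand side is $0$, and the inequality holds trivially; so I may assume $\mathbb{P} \ll \mathbb{Q}$ and pick a common dominating measure $\lambda$ (for instance $\lambda = \mathbb{P}+\mathbb{Q}$) with densities $p = d\mathbb{P}/d\lambda$ and $q = d\mathbb{Q}/d\lambda$.

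Next I would translate both sides into integrals against $\lambda$. The standard identity $1 - \|\mathbb{P}-\mathbb{Q}\|_{\mathrm{TV}} = \int \min(p,q)\,d\lambda$ converts the left-hand side into the overlap mass. To introduce the affinity $\rho = \int \sqrt{pq}\,d\lambda$, I would apply Cauchy–Schwarz to the factorization $\sqrt{pq} = \sqrt{\min(p,q)}\,\sqrt{\max(p,q)}$, giving
\begin{align*}
    \rho^2 = \left(\int \sqrt{\min(p,q)}\,\sqrt{\max(p,q)}\,d\lambda\right)^2 \leq \left(\int \min(p,q)\,d\lambda\right)\left(\int \max(p,q)\,d\lambda\right).
\end{align*}
Since $\max(p,q) \leq p+q$, the second factor is at most $2$, so $\int \min(p,q)\,d\lambda \geq \tfrac{1}{2}\rho^2$, i.e. $1 - \|\mathbb{P}-\mathbb{Q}\|_{\mathrm{TV}} \geq \tfrac{1}{2}\rho^2$. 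The remaining link relates $\rho$ to the KL divergence via Jensen's inequality: writing $\rho = \int p\,\sqrt{q/p}\,d\lambda = \mathbb{E}_{\mathbb{P}}[\exp(-\tfrac{1}{2}\log(p/q))]$ and using convexity of $x \mapsto e^{-x/2}$ yields
\begin{align*}
    \rho = \mathbb{E}_{\mathbb{P}}\!\left[e^{-\frac{1}{2}\log(p/q)}\right] \geq \exp\!\left(-\tfrac{1}{2}\,\mathbb{E}_{\mathbb{P}}\!\left[\log(p/q)\right]\right) = \exp\!\left(-\tfrac{1}{2}\,d_{KL}(\mathbb{P},\mathbb{Q})\right),
\end{align*}
so that $\rho^2 \geq \exp(-d_{KL}(\mathbb{P},\mathbb{Q}))$. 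Chaining the three bounds gives $\mathbb{P}(A)+\mathbb{Q}(A^c) \geq 1-\|\mathbb{P}-\mathbb{Q}\|_{\mathrm{TV}} \geq \tfrac{1}{2}\rho^2 \geq \tfrac{1}{2}\exp(-d_{KL}(\mathbb{P},\mathbb{Q}))$, which is the claim.

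The proof is short, so the substantive content lies in two small but crucial maneuvers rather than any single hard obstacle. The first is the Cauchy–Schwarz split that produces exactly the constant $\tfrac{1}{2}$: one must pair $\min$ with $\max$ (not, say, $p$ with $q$ directly) and then use $\int\max(p,q)\,d\lambda \leq 2$ to cancel the overlap factor cleanly. The second is verifying that Jensen applies under $\mathbb{E}_{\mathbb{P}}$, which is exactly why the non-absolutely-continuous case had to be removed up front (there $\log(p/q)$ fails to be $\mathbb{P}$-integrable and the RHS is vacuously matched). Aside from this measure-theoretic bookkeeping, every step is a routine inequality, and no deeper machinery is required.
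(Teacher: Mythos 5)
Your proof is correct. Note that the paper does not prove this statement at all: it imports the Bretagnolle--Huber inequality as a known auxiliary result with a citation to \cite{Bretagnolle1978EstimationDD}, so there is no in-paper argument to compare against. What you have written is the standard modern proof (the one found, e.g., in Lattimore and Szepesv\'ari's bandit text): reduce $\mathbb{P}(A)+\mathbb{Q}(A^c)$ to the overlap $\int \min(p,q)\,d\lambda$ via total variation, lower-bound the overlap by $\tfrac{1}{2}\rho^2$ through the Cauchy--Schwarz split $\sqrt{pq}=\sqrt{\min(p,q)}\sqrt{\max(p,q)}$ together with $\int \max(p,q)\,d\lambda \le 2$, and lower-bound the Hellinger affinity $\rho$ by $\exp(-\tfrac{1}{2}d_{KL}(\mathbb{P},\mathbb{Q}))$ via Jensen. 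All three links are sound, and this yields a fully self-contained justification of a result the paper treats as a black box. Two minor points of precision: first, $d_{KL}(\mathbb{P},\mathbb{Q})$ can be $+\infty$ even when $\mathbb{P}\ll\mathbb{Q}$, not only in the non-absolutely-continuous case you dispatch up front; but then the right-hand side is again $0$ and the claim is trivial, so nothing breaks. Second, for Jensen to apply in the extended sense one should observe that the negative part of $\log(p/q)$ is always $\mathbb{P}$-integrable, since $\int_{p<q} p\log(q/p)\,d\lambda \le \int_{p<q} p\left(q/p-1\right)d\lambda \le 1$ by $\log x \le x-1$, so $\mathbb{E}_{\mathbb{P}}[\log(p/q)]$ is well defined in $(-\infty,+\infty]$. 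With these remarks folded in, the argument is complete.
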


\section{Proofs of \cref{sec:ub_analysis}}\label{app:UB}

\lemPostManConc*
\begin{proof}
We have that:
\begin{align*}
    g_r(i,j) =n\frac{(\mu_{i}- \mu_j)^2}{4R \sum_{k\in S_r}\s_k^2 }+\frac{(\nu_{i}-\nu_j)(\mu_{i}- \mu_j)}{2\s_0^2}.
\end{align*}
Let $S_r = \sum_{k\in S_r}\s_k^2$.
The posterior distribution of any arm $i$ given $n_{r,i}$ samples, $X_{i,1},...,X_{i,n_{r,i}}$, is 
$\mathcal{N}\left(\bar\mu_{i,n_{r,i}},\bar\s_{i,n_{r,i}}^2\right)$
where 
$$\bar\s_{i,n_{r,i}}^2 = \left(\frac{1}{\frac{1}{\s_{0,i}^2}+\frac{n_{r,i}}{\s_i^2}}\right) 
\text{ and  }~ 
\bar\mu_{i,n_{r,i}} = \bar\s_{i,n_{r,i}}^2 \left(\frac{\nu_i}{\s_{0,i}^2}+\sum_{s\in [n_{r,i}]} \frac{X_{i,s}}{\s_i^2}\right).$$
We consider the probability that the posterior mean of some arm $i\in S_r$ is larger than the posterior mean of arm $i_*$ for a fixed parameter vector $\bmu$. We have that:
\begin{align}\label{eq:concentration_2}
    &\prob{\bar\mu_{i,n_{r,i}} > \bar\mu_{i_*,n_{r,i_*}}|\bmu} \\
    &= \prob{ \bar\s_{i,n_{r,i}}^2 \left(\frac{\nu_i}{\s_{0,i}^2}+\sum_{s\in [n_{r,i}]} \frac{X_{i,s}}{\s_i^2}\right) > \bar\s_{i_*,n_{r,i_*}}^2 \left(\frac{\nu_{i_*}}{\s_{0,i_*}^2}+\sum_{s\in [n_{r,i_*}]} \frac{X_{i_*,s}}{\s_{i_*}^2}\right) ~|\bmu} \nonumber \,.  
\end{align}
Notice that for the values of $n_{r,i}$ selected by our Algorithm, we have that ${\bar\s_{i,n_{r,i}}^2} = {\bar\s_{j,n_{r,j}}^2}$ for all $i,j\in S_r$, and 
\begin{align}\label{eq:tmp_s3}
    \frac{ n_{r,i}}{\s_i^2} 
    &= {\s_i^2}\frac{n}{R\cdot\sr} \cdot \frac{1}{\s_i^2}  
    = \frac{n}{R\cdot\sr} .
\end{align}
Therefore, for \cref{eq:concentration_2} we have that:
\begin{align}\label{eq:tmp_s5}
    &\prob{\bar\mu_{i,n_{r,i}} > \bar\mu_{i_*,n_{r,i_*}}|\bmu} 
    = \prob{ \bar\s_{i,n_{r,i}}^2 \left(\frac{\nu_i}{\s_{0}^2}+\sum_{s\in [n_{r,i}]} \frac{X_{i,s}}{\s_i^2}\right) > \bar\s_{i_*,n_{r,i_*}}^2 \left(\frac{\nu_{i_*}}{\s_{0}^2}+\sum_{s\in [n_{r,i_*}]} \frac{X_{i_*,s}}{\s_{i_*}^2}\right) ~|\bmu} \nonumber \\
    &= \prob{ \frac{\nu_i}{\s_{0}^2}+\sum_{s\in [n_{r,i}]} \frac{X_{i,s}}{\s_i^2}> \frac{\nu_{i_*}}{\s_{0}^2}+\sum_{s\in [n_{r,i_*}]} \frac{X_{i_*,s}}{\s_{i_*}^2} ~|\bmu} \nonumber \\
    &= \prob{ \sum_{s\in [n_{r,i}]} \frac{X_{i,s}}{\s_i^2} - \sum_{s\in [n_{r,i_*}]} \frac{X_{i_*,s}}{\s_{i_*}^2}> \frac{\nu_{i_*}-\nu_i}{\s_{0}^2} ~|\bmu} \nonumber \\
    &= \prob{ \sum_{s\in [n_{r,i}]} \frac{X_{i,s}-\mu_i}{\s_i^2} - \sum_{s\in [n_{r,i_*}]} \frac{X_{i_*,s}-\mu_{i_*}}{\s_{i_*}^2}> \frac{\nu_{i_*}-\nu_i}{\s_{0}^2} + \frac{n_{r,i_*}\mu_{i_*}}{\s_{i_*}^2} -  \frac{n_{r,i}\mu_{i}}{\s_{i}^2} ~|\bmu} \nonumber \\
    &= \prob{ \sum_{s\in [n_{r,i}]} \frac{X_{i,s}-\mu_i}{\s_i^2} - \sum_{s\in [n_{r,i_*}]} \frac{X_{i_*,s}-\mu_{i_*}}{\s_{i_*}^2} > \frac{\nu_{i_*}-\nu_i}{\s_{0}^2} + \frac{n}{R\cdot\sr} (\mu_{i_*} - \mu_i) ~|\bmu} 
\end{align}

Since $X_{i,s}\sim\cN(\mu_i,\s_i^2)$, the sum
$\frac{1}{\s_i^2}\sum_{s\in [n_{r,i}]} (X_{i,s}-\mu_i)$
is a zero mean Gaussian random variable with variance:
\begin{align*}
    \frac{1}{\s_i^4} \cdot n_{r,i} \cdot \s_i^2 
    = \frac{1}{\s_i^4} \frac{n\cdot \s_i^2}{R\sr} \cdot \s_i^2 
    = \frac{n}{R\cdot \sr}.
\end{align*}
Therefore, the difference on the LHS of the above inequality:  $\sum_{s\in [n_{r,i}]} \frac{X_{i,s}-\mu_i}{\s_i^2} - \sum_{s\in [n_{r,i_*}]} \frac{X_{i_*,s}-\mu_{i_*}}{\s_{i_*}^2}$, is a zero mean Gaussian with variance $2\frac{n}{R\cdot \sr}$.
Assume that $\frac{\nu_{i_*}-\nu_i}{\s_{0}^2} + \frac{n}{R\cdot\sr} \left(\mu_{i_*} - \mu_i\right)\geq 0$. 
Then, by Hoeffding's inequality (\cref{thm:hoeffding}) we get that:
\begin{align*} 
    \cref{eq:tmp_s5} 
    &\leq \exp\left(-\frac{\left(\frac{\nu_{i_*}-\nu_i}{\s_{0}^2} + \frac{n}{R\cdot\sr} \left(\mu_{i_*} - \mu_i\right)\right)^2 }{4\frac{n}{R\cdot \sr}}\right) \\
    &= \exp\left(-\frac{n}{4R\cdot \sr }\left(\frac{R\cdot\sr}{n\s_0^2}(\nu_{i_*}-\nu_i) + \mu_{i_*}- \mu_i\right)^2\right)\\
    &= \exp\left(-\frac{n}{4R\cdot \sr }\left((\mu_{i_*}-\mu_i)^2 + 2(\mu_{i_*}- \mu_i)\frac{R\cdot\sr}{n\s_0^2}(\nu_{i_*}-\nu_i) + \left(\frac{R\cdot\sr}{n\s_0^2}\right)^2(\nu_{i_*}-\nu_i)^2 \right)\right)\\
    &\leq \exp\left(-\frac{n}{4R\cdot \sr }(\mu_{i_*}-\mu_i)^2 - \frac{1}{2\s_0^2}(\mu_{i_*}- \mu_i)(\nu_{i_*}-\nu_i)  \right)\\
    &= \exp\left(-g_r(i_*,i)\right).
\end{align*} 
On the other hand, when $\frac{\nu_{i_*}-\nu_i}{\s_{0}^2} + \frac{n}{R\cdot\sr} (\mu_{i_*} - \mu_i)\leq 0$, since $\mu_{i_*} - \mu_i\geq 0$ then $\nu_{i_*}-\nu_i\leq0$ , and therefore we have $g_r(i_*,i)\leq 0$. Thus:
\begin{align*}
    \prob{\bar\mu_{i,n_{r,i}} > \bar\mu_{i_*,n_{r,i_*}}|\bmu} \leq \exp\left(-g_r(i_*,i)\right).
\end{align*}

\end{proof}

\lemRElimination*
\begin{proof}
We have that: \begin{align}\label{eq:tmp_s6}
    \E{\sum_{i\in S_r\setminus\{i_*\}} \mathbb{I}\left((\bar\mu_{i,n_{r,i}} >\bar\mu_{i_*,n_{r,i_*}}) | \bmu, \{i_ \in S_r\}\right)} 
    &= \sum_{i\in S_r\setminus\{i_*\}} \prob{ \bar\mu_{i,n_{r,i}} > \bar\mu_{i_*,n_{r,i_*}} | \bmu, \{i_* \in S_r\}} \nonumber\\
    &\leq \sum_{i\in S_r\setminus\{i_*\}} \exp\left(-g_r(i_*,i)\right),
\end{align}
due to \cref{lem:posterior_mean_concentration}. 
Let some $j_{r,\bmu}\in S_r\setminus\{i_*\}$ such that for all $i\in S_r\setminus\{i_*\}$ we have $g_r(i_*,i)\geq g_r(i_*,j_{r,\bmu})$. We can upper bound \cref{eq:tmp_s6} as follows:
\begin{align}\label{eq:tmp_s7}
    \cref{eq:tmp_s6} 
    &= \sum_{i\in S_r\setminus\{i_*\}} \exp\left(-g_r(i_*,i)\right) \leq |S_r-1| \exp\left(-g_r(i_*,j_{r,\bmu})\right) .
\end{align}
Now, since the best arm is eliminated at the end of round $r$, then at least $\left\lceil\frac{|S_r-1|}{2}\right\rceil$ arms in $S_r\setminus\{i_*\}$ must have larger posterior means than $i_*$. Using this fact we get:
\begin{align*}
    &\prob{i_*(\bmu) \not \in S_{r+1}| \bmu, \{i_*(\bmu) \in S_r\}} \\
    &\leq \prob{ \sum_{i\in S_r \setminus\{i_*\}} \mathbb{I}( \bar\mu_{i,n_{r,i}} >\bar\mu_{i_*,n_{r,i_*}}) | \bmu, \{i_*(\bmu) \in S_r\} > \left\lceil\frac{|S_r-1|}{2}\right\rceil} & {} \\
    &\leq \frac{ \E{\sum_{i\in S_r\setminus\{i_*\}} \mathbb{I}( \bar\mu_{i,n_{r,i}} >\bar\mu_{i_*,n_{r,i_*}}) | \bmu, \{i_*(\bmu) \in S_r\}}}{\left\lceil\frac{|S_r-1|}{2}\right\rceil} &\text{(by Markov's inequality)}\\
    &\leq 2\exp\left(-g_r(i_*,j_{r,\bmu})\right). &\text{(by \cref{eq:tmp_s7})}
\end{align*}
\end{proof}

\lemFinalUBound*

\begin{proof}
The result follows by application of \cref{lem:gaussuan_integration_1} in the Appendix.
\end{proof}

\thmFreqUBound*

\begin{proof}
This proof follows the lines of \cref{thm:bound_with_integral}. To simplify the analysis, we ignore errors due to rounding. Note that the set of active arms in any round $r\in [R]$, i.e. $S_r$, is a random variable that depends on the reward realizations as well as the randomness in the parameters $\bmu$ of the reward distributions of the arms. We first consider the parameter vector $\bmu$ fixed. For any fixed round $r\in [R]$, \cref{lem:posterior_mean_concentration_freq} bounds the probability that the empirical mean of some suboptimal arm $i$ is larger that the empirical mean of the optimal arm of $\bmu$. 
Let $\hat\mu_{i,n_{r,i}}$ be the empirical mean from $n_{r,i}$ samples of arm $i$. The following is an adaptation of Lemma $4.2$ of \citep{Karnin2013AlmostOE} for Gaussian reward distributions with different variances: 
\begin{lemma}[Adaptated Lemma $4.2$ of \citep{Karnin2013AlmostOE}]\label{lem:posterior_mean_concentration_freq}
Fix instance $\bmu$ and round $r \in [R]$. Suppose that $i_* \in S_r$. Then, for any $i \in S_r$:
\begin{align*}
    \prob{\hat\mu_{i,n_{r,i}}>\hat\mu_{i_*,n_{r,i_*}}|\bmu} \leq \exp\left(-\frac{1}{4\sum_{k\in S_r} \s_k^2}\nr{} \Delta_i^2\right) .
\end{align*}
\end{lemma}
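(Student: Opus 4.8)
The plan is to mirror the posterior-mean concentration argument of \cref{lem:posterior_mean_concentration}, but with the MAP estimates replaced by sample averages, so that the prior bias term disappears and only the frequentist squared-gap term survives. First I would write out the event $\{\hat\mu_{i,n_{r,i}} > \hat\mu_{i_*,n_{r,i_*}}\}$ explicitly. Since $\hat\mu_{i,n_{r,i}} = \frac{1}{n_{r,i}}\sum_{s\in[n_{r,i}]} X_{i,s}$, subtracting the true means from both sides gives
\begin{align*}
    \prob{\hat\mu_{i,n_{r,i}} > \hat\mu_{i_*,n_{r,i_*}} \mid \bmu}
    = \prob{(\hat\mu_{i,n_{r,i}} - \mu_i) - (\hat\mu_{i_*,n_{r,i_*}} - \mu_{i_*}) > \mu_{i_*} - \mu_i \mid \bmu}.
\end{align*}
Here $\mu_{i_*} - \mu_i = \Delta_i \geq 0$ since $i_*$ is optimal, so unlike the Bayesian case there is no sign subtlety from a prior bias term and the threshold is automatically nonnegative.

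The key step is to identify the variance of the centered Gaussian on the left-hand side. The two sample means are independent (they come from distinct arms), and $\hat\mu_{i,n_{r,i}} - \mu_i \sim \mathcal{N}(0, \s_i^2/n_{r,i})$. By the allocation rule \cref{eq:split}, $n_{r,i} = \frac{n}{R}\frac{\s_i^2}{\sum_{k\in S_r}\s_k^2}$, so $\s_i^2/n_{r,i} = \frac{R\sum_{k\in S_r}\s_k^2}{n}$, which is the \emph{same} for every arm in $S_r$ — this is exactly the equalized-posterior-variance property the allocation was designed to produce. Hence the difference of the two centered means is zero-mean Gaussian with variance $2\frac{R\sum_{k\in S_r}\s_k^2}{n}$. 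Writing $\sr = \sum_{k\in S_r}\s_k^2$ for brevity, applying Hoeffding's inequality (\cref{thm:hoeffding}) with $\epsilon = \Delta_i$ yields
\begin{align*}
    \prob{\hat\mu_{i,n_{r,i}} > \hat\mu_{i_*,n_{r,i_*}} \mid \bmu}
    \leq \exp\left(-\frac{\Delta_i^2}{2\cdot 2\frac{R\sr}{n}}\right)
    = \exp\left(-\frac{n\,\Delta_i^2}{4R\sr}\right),
\end{align*}
which is the claimed bound once one absorbs the $n/R$ factor into the per-round effective sample size (the statement writes this as $\nr{}$, the number of samples per arm, giving $\frac{1}{4\sr}\nr{}\Delta_i^2$).

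I do not expect a genuine obstacle here, since the computation is strictly simpler than \cref{lem:posterior_mean_concentration}: the prior bias term $\frac{(\nu_i-\nu_j)(\mu_i-\mu_j)}{2\s_0^2}$ is gone because sample averages do not shrink toward the prior mean. The one point requiring a little care is bookkeeping the definition of $\nr{}$ versus $n_{r,i}$ and confirming that $n_{r,i}/\s_i^2 = n/(R\sr)$ so that the per-arm variance is correctly identified and genuinely equalized across arms — this is what makes the single common variance $2R\sr/n$ valid and lets Hoeffding be applied uniformly. After this lemma, the remainder of the proof of \cref{thm:freq_bound} would follow the template of \cref{thm:bound_with_integral}: feed this bound into the analogue of \cref{lem:r_elimination}, chain over rounds as in \cref{eq:regret_bound_1}, and finally integrate over $\bmu\sim H$; the extra multiplicative factor $\frac{n\s_0^2}{n\s_0^2 + R\sum_k \s_k^2}$ in the exponent of \cref{thm:freq_bound} arises precisely from carrying out the Gaussian integral of $\exp(-\frac{n\Delta_i^2}{4R\sr})$ against the $\mathcal{N}(\nu,\s_0^2)$ prior rather than against the biased gap $g_1(i,j)$.
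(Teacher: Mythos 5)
Your proof is correct and is essentially the argument the paper intends: the paper states this lemma without proof, as a direct adaptation of Lemma 4.2 of \citet{Karnin2013AlmostOE}, and your derivation supplies exactly that adaptation. The key observations --- that the allocation rule equalizes the per-arm variance $\s_i^2/n_{r,i}=R\sum_{k\in S_r}\s_k^2/n$ so the centered difference of sample means is $\cN(0,\,2R\sum_{k\in S_r}\s_k^2/n)$, and that Hoeffding with threshold $\Delta_i\geq 0$ yields $\exp(-n\Delta_i^2/(4R\sum_{k\in S_r}\s_k^2))$, i.e., the stated bound with $\nr{}=n/R$ --- mirror the paper's own \cref{lem:posterior_mean_concentration} with the prior bias term removed.
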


Then, continuing along the lines of the frequentist proof, for any fixed $r\in[R]$ and $\bmu$, in \cref{lem:r_elimination_freq} we bound the probability that the optimal arm is eliminated at round $r$.

\begin{lemma}[Adaptated Lemma $4.3$ of \citep{Karnin2013AlmostOE}]\label{lem:r_elimination_freq}
Fix instance $\bmu$ and round $r\in [R]$. Then, there exists some $j_r\in S_r$ such that the probability that $i_*$ is eliminated at $r$ satisfies:
\begin{align*}
    \prob{i_* \not \in S_{r+1}| \{i_* \in S_r\},  \bmu}
    \leq 2\exp\left(-\frac{1}{4\sum_{k\in S_r} \s_k^2}\nr{} \Delta_{j_r}^2\right).
\end{align*} 
\end{lemma}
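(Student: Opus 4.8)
The plan is to transcribe the proof of \cref{lem:r_elimination}, replacing the Bayesian concentration bound \cref{lem:posterior_mean_concentration} with its frequentist analog \cref{lem:posterior_mean_concentration_freq}. First I would fix $\bmu$ and $r$ with $i_* \in S_r$, and choose $j_r \in S_r \setminus \{i_*\}$ to be a surviving arm with the smallest gap to the optimum, i.e.\ $j_r \in \argmin_{i \in S_r \setminus \{i_*\}} \Delta_i$, where $\Delta_i = \mu_{i_*} - \mu_i \geq 0$. Since the per-arm bound from \cref{lem:posterior_mean_concentration_freq} is monotone decreasing in $\Delta_i^2$, this choice yields the weakest (largest) bound, so that
\[
\prob{\hat\mu_{i,n_{r,i}} > \hat\mu_{i_*,n_{r,i_*}} \mid \bmu} \leq \exp\left(-\frac{\nr{}\Delta_{j_r}^2}{4\sum_{k\in S_r}\s_k^2}\right)
\]
holds simultaneously for every $i \in S_r \setminus \{i_*\}$.

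Next I would control the expected number of arms that beat $i_*$. By linearity of expectation and the uniform bound above,
\[
\condE{\sum_{i\in S_r\setminus\{i_*\}} \I{\hat\mu_{i,n_{r,i}} > \hat\mu_{i_*,n_{r,i_*}}}}{\bmu, \{i_* \in S_r\}} \leq |S_r - 1| \exp\left(-\frac{\nr{}\Delta_{j_r}^2}{4\sum_{k\in S_r}\s_k^2}\right),
\]
mirroring the step leading to \cref{eq:tmp_s7}. The combinatorial core, unchanged from the Bayesian proof, is that elimination of $i_*$ at round $r$ forces at least $\lceil |S_r-1|/2\rceil$ of the arms in $S_r \setminus \{i_*\}$ to have strictly larger empirical means than $i_*$. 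Applying Markov's inequality to the nonnegative integer-valued indicator sum and bounding $|S_r-1| / \lceil |S_r-1|/2 \rceil \leq 2$ then gives the claimed factor of $2$ and completes the argument.

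Since the structure is identical to \cref{lem:r_elimination}, I do not expect a genuine obstacle here. The only points requiring attention are (i) the direction of the optimization defining $j_r$: because the frequentist bound decays in $\Delta_i^2$ rather than in a biased gap $g_r$, the extremal competitor is the arm with the smallest gap rather than the one minimizing $g_r$; and (ii) the ceiling bookkeeping in the Markov step, which must hold for both even and odd $|S_r|$. As \cref{lem:posterior_mean_concentration_freq} is taken as given, the work reduces to these elementary verifications.
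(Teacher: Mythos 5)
Your proposal is correct and follows essentially the same route the paper intends: the paper states this lemma without an explicit proof, presenting it as an adaptation of Lemma~4.3 of \citet{Karnin2013AlmostOE} obtained by substituting the frequentist concentration bound of \cref{lem:posterior_mean_concentration_freq} into the argument of \cref{lem:r_elimination} (linearity of expectation, the counting fact that elimination of $i_*$ forces at least half of the remaining arms to beat it, and Markov's inequality). Your identification of $j_r$ as the minimum-gap survivor is the right frequentist analog of the $g_r$-minimizer used in the Bayesian proof, and the ceiling bookkeeping $|S_r-1|/\lceil|S_r-1|/2\rceil \leq 2$ is handled the same way in both.
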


Up to this point we have considered a fixed reward vector and the above analysis imitates the frequentist analysis in this setting. Finally, we deal with the randomness in $\bmu$.

\begin{align*}
    \Ex{H}{\prob{J\neq i_*|\bmu}} 
    &= \int_{\bmu} \prob{J\neq i_*|\bmu} h(\bmu) \,d\bmu \nonumber\\
    &\leq \int_{\bmu} \sum_{r\in [R]}\prob{i_*(\bmu) \not \in S_{r+1}| \bmu, \{i_*(\bmu) \in S_r\}} h(\bmu) \,d\bmu \\
    &\leq 2\sum_{r\in [R]} \int_{\bmu}
    \exp\left(-\frac{n_r}{4 \sum_{k\in S_r} \s_k^2}\Delta_{j_r}^2\right)  \cdot h(\bmu) \,d\bmu , 
\end{align*}
where the last inequality is due to \cref{lem:r_elimination_freq}.
Now, as noted before, $i_*, j_r$ are random quantities that depend on the instance $\bmu$. For any fixed $r\in[R]$, we can rewrite the above integral by grouping the instances according to the realizations of $i_*$ and $j_r$ and then upper bound the integral to show the following: 

We have that:
\begin{align*}
    &\int_{\bmu}
    \exp\left(-\frac{n_r}{4 \sum_{k\in S_r} \s_k^2}\Delta_{j_r}^2\right)
    h(\bmu) \,d\bmu =\\
    &= \sum_{i\in[K]}\sum_{j>i}
    {\Bigg[}\int_{\bmu~:~ i_*=i,j_r=j}
    \exp\left(-\frac{n_r}{4\s^2}\left( \mu_i-\mu_j\right)^2\right)
    h(\bmu) \,d\bmu  \\
    &\quad +
    \int_{\bmu~:~ i_*=j,j_r=i}
    \exp\left(-\frac{n_r}{4\sum_{k\in S_r} \s_k^2}\left( \mu_j-\mu_i\right)^2\right)
    h(\bmu) \,d\bmu {\Bigg]}\\
    &\leq \sum_{i\in[K]}\sum_{j>i}
    {\Bigg[}\int_{\bmu~:~ \mu_i\geq \mu_j}
    \exp\left(-\frac{n_r}{4\sum_{k\in S_r} \s_k^2}\left( \mu_i-\mu_j\right)^2\right)
    h(\bmu) \,d\bmu  \\
    &\quad +
    \int_{\bmu~:~ \mu_i<\mu_j}
    \exp\left(-\frac{n_r}{4\sum_{k\in S_r} \s_k^2}\left( \mu_j-\mu_i\right)^2\right)
    h(\bmu) \,d\bmu {\Bigg]}\\
    &= \sum_{i\in[K]}\sum_{j>i}
    \int_{\bmu}
    \exp\left(-\frac{n_r}{4\sum_{k\in S_r} \s_k^2}\left( \mu_i-\mu_j\right)^2\right)
    h(\bmu) \,d\bmu \qquad\text{ (by symmetry)}\\
    &=  \frac{1}{\sqrt{n\frac{\s_0^2}{\sum_{k\in S_r} \s_k^2}\frac{2^r}{K\log_{2}(K)}+1}}  \sum_{i\in[K]}\sum_{j>i}\exp\left(-\frac{n\s_0^2}{n\s_0^2+\frac{K\log_2(K)}{2^r}\sum_{k\in S_r} \s_k^2}\frac{(\nu_i-\nu_j)^2}{4\s_0^2}\right) \,,
\end{align*}
where the last equality is due to  \cref{lem:gaussuan_integration2} in the Appendix. 
This completes the proof of \cref{thm:freq_bound}.

\end{proof}

\subsection{Gaussian Integrals}

\begin{lemma}\label{lem:gaussuan_integration_1}
Let $\mu_i\sim \cN(\nu_i,\s_{0}^2), \mu_j\sim \cN(\nu_j,\s_{0}^2)$ and $c_1,c_2\geq 0$. We have that:
\begin{align*}
    &\int_{\bmu} 
    \exp\left(-c_1 (\mu_i-\mu_j)^2 - c_2\frac{(\mu_i-\mu_j)(\nu_i-\nu_j)}{\s_0^2} \right)
    h(\bmu) \,d\bmu 
    = \dfrac{1}{\sqrt{4c_1{\sigma}_0^2+1}} \exp\left({-\frac{\left({\nu}_2-{\nu}_1\right)^2\left(c_1{\sigma}_0^2-c_2^2+c_2\right)}{{\sigma}_0^2\cdot\left(4c_1{\sigma}_0^2+1\right)}}\right)
\end{align*}
\end{lemma}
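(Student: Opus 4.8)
The plan is to prove the Gaussian integral identity in Lemma~\ref{lem:gaussuan_integration_1} by a direct change of variables followed by completing the square in the exponent. The integrand depends on $\mu_i$ and $\mu_j$ only through their difference $\mu_i - \mu_j$, and the prior density $h(\bmu)$ factors as a product of independent Gaussians $\cN(\nu_i,\s_0^2)$ and $\cN(\nu_j,\s_0^2)$ (the integrand does not involve any other coordinate of $\bmu$, so those integrate to $1$). So the first step is to introduce the single variable $\delta = \mu_i - \mu_j$. Since $\mu_i$ and $\mu_j$ are independent Gaussians, $\delta$ is Gaussian with mean $\nu_i - \nu_j$ and variance $2\s_0^2$, which collapses the two-dimensional integral to a one-dimensional Gaussian integral.

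After this reduction, the task is to evaluate
\begin{align*}
    \int_{-\infty}^{\infty} \exp\!\left(-c_1 \delta^2 - c_2\frac{\delta(\nu_i-\nu_j)}{\s_0^2}\right) \frac{1}{\sqrt{4\pi\s_0^2}} \exp\!\left(-\frac{(\delta - (\nu_i-\nu_j))^2}{4\s_0^2}\right) d\delta.
\end{align*}
The plan is to collect all terms quadratic and linear in $\delta$ into a single quadratic form $-a\delta^2 + b\delta + (\text{const})$, identify the effective mean and variance, and then use the standard formula $\int \exp(-a\delta^2 + b\delta)\,d\delta = \sqrt{\pi/a}\,\exp(b^2/4a)$. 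Writing $\Delta\nu = \nu_i - \nu_j$, the quadratic coefficient is $a = c_1 + \frac{1}{4\s_0^2}$ and the linear coefficient is $b = \frac{\Delta\nu}{2\s_0^2} - \frac{c_2\Delta\nu}{\s_0^2}$, with the constant contribution $-\frac{\Delta\nu^2}{4\s_0^2}$ coming from the prior. The prefactor $\sqrt{\pi/a}$ combined with the normalizing $\frac{1}{\sqrt{4\pi\s_0^2}}$ should yield $1/\sqrt{4c_1\s_0^2+1}$.

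**The main obstacle** I anticipate is purely bookkeeping: correctly combining the three exponent contributions ($b^2/4a$ from the Gaussian integral, plus the surviving constant from the prior) and verifying that they simplify to the stated exponent $-\frac{\Delta\nu^2(c_1\s_0^2 - c_2^2 + c_2)}{\s_0^2(4c_1\s_0^2+1)}$. This requires carefully expanding $b^2/4a$, putting everything over the common denominator $\s_0^2(4c_1\s_0^2+1)$, and cancelling terms; the appearance of $-c_2^2 + c_2$ in the final numerator is exactly the kind of coefficient that emerges from the cross-term in $b^2$, so I would check that step most carefully. Once the algebra is aligned, the result follows immediately. This lemma is then applied to prove Proposition~\ref{lem:final_ub} by specializing $c_1 = \frac{n}{4R\sum_k \s_k^2}$ and $c_2 = \frac{1}{2}$, which makes $-c_2^2 + c_2 = \frac{1}{4}$ and reduces the exponent to the clean form $-\frac{\Delta\nu^2}{4\s_0^2}$ stated there.
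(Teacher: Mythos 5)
Your proof is correct, and I checked the algebra: with $a = c_1 + \tfrac{1}{4\s_0^2}$ and $b = \tfrac{(1-2c_2)(\nu_i-\nu_j)}{2\s_0^2}$, the exponent $\tfrac{b^2}{4a} - \tfrac{(\nu_i-\nu_j)^2}{4\s_0^2}$ does simplify to $-\tfrac{(\nu_i-\nu_j)^2(c_1\s_0^2 - c_2^2 + c_2)}{\s_0^2(4c_1\s_0^2+1)}$, since $(1-2c_2)^2 - 1 = 4c_2^2-4c_2$. Your route differs from the paper's in a useful way: the paper evaluates the two-dimensional integral by brute-force iterated integration (integrating out $\mu_2$ first, which produces a long intermediate expression, then integrating out the remaining variable), whereas you observe that the integrand depends on $\bmu$ only through $\delta = \mu_i - \mu_j$, which for independent priors is itself Gaussian, $\delta \sim \cN(\nu_i-\nu_j, 2\s_0^2)$, so the whole computation collapses to a single one-dimensional completing-the-square. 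This buys a shorter and more transparent calculation with only one Gaussian integral to evaluate, and makes the origin of the $-c_2^2 + c_2$ term visible (it is exactly the cross term of $(1-2c_2)^2$); the paper's iterated approach is more mechanical but requires no probabilistic observation. Your closing remark about the specialization $c_1 = \tfrac{n}{4R\sum_k \s_k^2}$, $c_2 = \tfrac12$ recovering Proposition~\ref{lem:final_ub} is also correct.
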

\begin{proof}
By sequentially grouping terms and computing the Gaussian integrals we get that:
\begin{align*}
    &\int_{\mu} \exp\left(-c_1 (\mu_i-\mu_j)^2 - c_2\frac{(\mu_i-\mu_j)(\nu_i-\nu_j)}{\s_0^2} \right) h(\bmu)\,d\bmu =\\
    &= \int_{\bmu} \frac{1}{2\pi\sigma_0^2} \exp\left({-c_1 (\mu_i-\mu_j)^2 - c_2\frac{(\mu_i-\mu_j)(\nu_i-\nu_j)}{\s_0^2}} \right)\exp\left({-\frac{(\mu_1-\nu_1)^2}{2\sigma_0^2}-\frac{(\mu_2-\nu_2)^2}{2\sigma_0^2}}\right) \,d\bmu =\\
    &= \int_{\mu_2} \dfrac{\mathrm{e}^{-\left({2c_1\cdot\left({\nu}_2^2-2{\mu}_2{\nu}_2+{\nu}_1^2-2{\mu}_2{\nu}_1+2{\mu}_2^2\right){\sigma}_0^2-\left(c_2-1\right)\left(c_2+1\right){\nu}_2^2+2\left(c_2-1\right)\left(c_2{\nu}_1+{\mu}_2\right){\nu}_2-\left(c_2-2\right)c_2{\nu}_1^2-2c_2{\mu}_2{\nu}_1+{\mu}_2^2}\right)/\left({2{\sigma}_0^2\cdot\left(2c_1{\sigma}_0^2+1\right)}\right)}}{\sqrt{2}\sqrt{{\pi}}\,{\sigma}_0\sqrt{2c_1{\sigma}_0^2+1}} \, d\mu_2\\
    &=  \dfrac{1}{\sqrt{4c_1{\sigma}_0^2+1}} \exp\left({-\frac{\left({\nu}_2-{\nu}_1\right)^2\left(c_1{\sigma}_0^2-c_2^2+c_2\right)}{{\sigma}_0^2\cdot\left(4c_1{\sigma}_0^2+1\right)}}\right)
\end{align*}
\end{proof}

\begin{lemma}\label{lem:gaussuan_integration2}
Let $\mu_i\sim \cN(\nu_i,\s_{0}^2), \mu_j\sim \cN(\nu_j,\s_{0}^2)$ and $C\geq 0$. We have that:
\begin{align*}
    \int_{\bmu} 
    \exp\left(-C (\mu_i-\mu_j)^2 \right)
    h(\bmu) \,d\bmu 
    = \sqrt{
    \frac{1}{4C\s_0^2+1}}
    \exp\left(-\frac{C(\nu_i-\nu_j)^2}{4\s_0^2C+1}\right)
\end{align*}
\end{lemma}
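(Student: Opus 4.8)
The plan is to exploit the fact that the integrand depends on $\bm\mu$ only through the scalar difference $D := \mu_i - \mu_j$, which collapses the two-dimensional integral into a one-dimensional Gaussian moment computation. Since $\mu_i \sim \mathcal{N}(\nu_i, \sigma_0^2)$ and $\mu_j \sim \mathcal{N}(\nu_j, \sigma_0^2)$ are independent, their difference is itself Gaussian, $D \sim \mathcal{N}(\nu_i - \nu_j,\, 2\sigma_0^2)$. Hence the quantity to evaluate is exactly $\mathbb{E}[\exp(-C D^2)]$ for this scalar Gaussian, with mean $m := \nu_i - \nu_j$ and variance $2\sigma_0^2$.

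Concretely, I would write $\mathbb{E}[\exp(-CD^2)] = \int \frac{1}{\sqrt{4\pi\sigma_0^2}} \exp\!\left(-CD^2 - \frac{(D-m)^2}{4\sigma_0^2}\right) dD$, collect the two quadratic-in-$D$ contributions, and complete the square in the exponent. This leaves a standard Gaussian integral $\int \exp(-a(D-c)^2)\,dD = \sqrt{\pi/a}$ with $a = C + \frac{1}{4\sigma_0^2} = \frac{4C\sigma_0^2+1}{4\sigma_0^2}$. The factor $\sqrt{\pi/a}$ combines with the normalizing constant $\frac{1}{\sqrt{4\pi\sigma_0^2}}$ to produce the claimed prefactor $\frac{1}{\sqrt{4C\sigma_0^2+1}}$, while the residual constant term from completing the square simplifies to the exponential factor $\exp\!\left(-\frac{C(\nu_i-\nu_j)^2}{4C\sigma_0^2+1}\right)$ after cancelling the stray $\sigma_0^2$ factors.

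An even shorter route is to observe that this statement is the $c_2 = 0$ specialization of \cref{lem:gaussuan_integration_1}, which is already established: setting $c_1 = C$ and $c_2 = 0$ there removes the cross term entirely, and the exponent $-(\nu_2-\nu_1)^2(c_1\sigma_0^2 - c_2^2 + c_2)/(\sigma_0^2(4c_1\sigma_0^2+1))$ reduces immediately to $-C(\nu_i-\nu_j)^2/(4C\sigma_0^2+1)$, leaving the prefactor unchanged. This one-line reduction is the cleanest way to discharge the claim.

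I do not anticipate any genuine obstacle, since the entire content is a single Gaussian integral. The only place demanding care is the arithmetic of completing the square: one must verify that the residual exponent, which appears in the intermediate form as $\frac{m^2}{2\tau^2(2C\tau^2+1)} - \frac{m^2}{2\tau^2}$ with $\tau^2 = 2\sigma_0^2$, collapses to the clean expression $-\frac{Cm^2}{2C\tau^2+1}$. Tracking the sign and the cancellation of the $\sigma_0^2$ factors there is the only spot where a slip could creep in.
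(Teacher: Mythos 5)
Your proof is correct, and both of your routes differ from the one the paper actually takes: the paper evaluates the two-dimensional integral directly by iterated integration, applying the identity $\int e^{-ax^2+bx+c}\,dx=\sqrt{\pi/a}\,e^{b^2/(4a)+c}$ first in $\mu_i$ and then in $\mu_j$, which produces a rather heavy intermediate expression before collapsing to the stated form. Your first route -- observing that the integrand depends on $\bmu$ only through $D=\mu_i-\mu_j\sim\cN(\nu_i-\nu_j,\,2\s_0^2)$ and reducing to the one-dimensional moment $\E{e^{-CD^2}}$ -- is more economical and makes the structure of the answer transparent (the prefactor is exactly the variance-inflation factor $\sqrt{1/(4C\s_0^2+1)}$, and I verified the residual exponent does collapse to $-Cm^2/(4C\s_0^2+1)$ as you claim); it uses the independence of $\mu_i$ and $\mu_j$ under the product prior, which holds here. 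Your second route, specializing \cref{lem:gaussuan_integration_1} to $c_2=0$ and $c_1=C$, is also valid since $c_2=0$ satisfies the hypothesis $c_2\geq 0$ of that lemma, and it is arguably the cleanest way to discharge the claim given that the paper proves \cref{lem:gaussuan_integration_1} independently anyway; the paper misses this one-line reduction and instead redoes the computation from scratch. Either of your arguments would be an acceptable, and in fact tidier, replacement for the paper's proof.
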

\begin{proof}
We will use the following.
Let $a,b,c\geq 0$ then:
\begin{align}\label{eq:gauss_integr}
    \int_{x=-\infty}^{+\infty} \exp\left({-ax^2+bx+c}\right) \,dx = \sqrt{\frac{\pi}{a}}\exp\left({\frac{b^2}{4a}+c}\right).
\end{align}
\noindent
Now, we can compute the objective as follows:
\begin{align*}
    &\int_{\bmu} 
    \exp\left(-C (\mu_i-\mu_j)^2 \right)
    h(\bmu) \,d\bmu =\\
    &= \frac{1}{2\pi \s_0^2} \int_{\mu_i,\mu_j} \exp\left(-C (\mu_i-\mu_j)^2 \right) \exp\left(- \frac{(\nu_i-\mu_i)^2}{2\s_0^2} \right) \exp\left(- \frac{(\nu_j-\mu_j)^2}{2\s_0^2} \right) \,d\mu_i \,d\mu_j \\
    &= \frac{1}{2\pi \s_0^2} \int_{\mu_i,\mu_j} \exp\left(-C (\mu_i-\mu_j)^2 - \frac{(\nu_i-\mu_i)^2}{2\s_0^2} - \frac{(\nu_j-\mu_j)^2}{2\s_0^2} \right) \,d\mu_i \,d\mu_j \\
    &= \frac{1}{2\pi \s_0^2} \int_{\mu_i,\mu_j} \exp\left( -\mu_i^2\left(C+\frac{1}{2\s_0^2}\right)+ \mu_i 2\left(C\mu_j+\frac{\nu_i}{2\s_0^2}\right)+\left(-C\mu_j^2+\frac{2\nu_j\mu_j-\nu_i^2-\mu_j^2-\nu_j^2}{2\s_0^2}\right)\right) \,d\mu_i \,d\mu_j \\
    &= \frac{1}{2\pi \s_0^2} \int_{\mu_j} \sqrt{\frac{\pi}{C+\frac{1}{2\s_0^2}}}\exp\left(\frac{\left(C\mu_j+\frac{\nu_i}{2\s_0^2}\right)^2}{\left(C+\frac{1}{2\s_0^2}\right)}+
    \left(-C\mu_j^2+\frac{2\nu_j\mu_j-\nu_i^2-\mu_j^2-\nu_j^2}{2\s_0^2}\right)\right) \,d\mu_j ~~~~~\text{ using \cref{eq:gauss_integr}}\\
    &= \sqrt{\frac{1}{4\pi\s_0^4C+2\pi\s_0^2}} \int_{\mu_j} 
    \exp\left(
    - 
    \frac{\mu_j^2}{2\s_0^2}
    \frac{2C+\frac{1}{2\s_0^2}}{C+\frac{1}{2\s_0^2}}
    + 
    \frac{\mu_j}{2\s_0^2}
    \frac{2\left(C\nu_i +(C+\frac{1}{2\s_0^2})\nu_j\right)}{C+\frac{1}{2\s_0^2}}
    +\frac{1}{2\s_0^2}\frac{\left(-\frac{\nu_j^2}{2\s_0^2}-C(\nu_i^2+\nu_j^2)\right) }{C+\frac{1}{2\s_0^2}}
    \right) \,d\mu_j \\
    &= \sqrt{
    \frac{1}{4C\s_0^2+1}}
    \exp\left(-\frac{C(\nu_i-\nu_j)^2}{4\s_0^2C+1}\right) \hspace{8.3cm}\text{ using \cref{eq:gauss_integr}}
    \end{align*}
\end{proof}

\section{Proofs of \cref{sec:lb'}}\label{app:LB'}

\thmlb*

\begin{proof}
Let $K=2$ arms with means drawn according to $\mathcal{N}(\nu_1,\s_0^2)\mathcal{N}(\nu_2,\s_0^2)$. For any mean vector $\bmu = (\mu_1,\mu_2)$ we define the symmetric vector $\bmu'=(\mu_2,\mu_1)$. 

\begin{restatable}{proposition}{propchopr}\label{prop:chopr}
    We have that $h(\bmu') = \exp\left(-\frac{(\mu_1-\mu_2)(\nu_1-\nu_2)}{\sigma_0^2}\right) h(\bmu)$.
\end{restatable}


Let $\{X_{1,s}\}_{s\in[t]}$ be $t$ samples from arm $1$ and $\{X_{2,s}\}_{s\in[t]}$ be $t$ samples from arm $2$. We define the following:
\begin{align*}
    \hat d_{KL}^{(1,t)} = \frac{1}{t} \sum_{s=1}^t \frac{(\mu_1-\mu_2)(2X_{1,s} - \mu_1 - \mu_2)}{2\s^2}\, \text{ and }\, \hat d_{KL}^{(2,t)} = \frac{1}{t} \sum_{s=1}^t \frac{(\mu_2-\mu_1)(2X_{2,s} - \mu_2 - \mu_1)}{2\s^2}. 
\end{align*}
These are essentially unbiased estimators of the KL-divergence between instances $\bmu, \bmu'$.
The following event states that the estimators $\hat{d}_{KL}^{(1,t)},\hat{d}_{KL}^{(2,t)}$ are sufficiently close to their means at any time $t\in[n]$,
\begin{align}\label{eq:ksi}
    E =\Bigg\{\forall i \in \{1,2\}, \forall t \in [n]:  \hat d_{KL}^{(i,t)} - \frac{(\mu_1-\mu_2)^2}{2\s^2} \leq \frac{|\mu_1-\mu_2|}{\s}\sqrt{\frac{4\log(2n)}{t}}\Bigg\}.
\end{align} 
In \cref{lem:estim_concentration'}, we bound the probability of the above concentration event in instance $\cI(\bmu)$. This is a frequentist argument, which only depends on the quality of sampling from the reward distributions.
\begin{restatable}{lemma}{lemEstimConcc}(Concentration of divergence estimators)\label{lem:estim_concentration'}
We have that $\Prob{}{E^c|\bmu} \leq \frac{1}{2n}$.
\end{restatable}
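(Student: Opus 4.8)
The plan is to reduce the statement to a single Gaussian tail bound for each fixed coordinate $i\in\{1,2\}$ and time step $t\in[n]$, and then close with a union bound. First I would unpack the structure of the estimators. Since $X_{1,s}\sim\cN(\mu_1,\s^2)$, each summand $Z_s := \frac{(\mu_1-\mu_2)(2X_{1,s}-\mu_1-\mu_2)}{2\s^2}$ is an affine function of a Gaussian, hence itself Gaussian, with $\E{Z_s} = \frac{(\mu_1-\mu_2)(2\mu_1-\mu_1-\mu_2)}{2\s^2} = \frac{(\mu_1-\mu_2)^2}{2\s^2}$ (which confirms that $\hat d_{KL}^{(1,t)}$ is unbiased for the KL term) and $\var{Z_s} = \frac{(\mu_1-\mu_2)^2}{4\s^4}\,\var{2X_{1,s}} = \frac{(\mu_1-\mu_2)^2}{\s^2}$. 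It follows that $\hat d_{KL}^{(1,t)}$ is an average of $t$ i.i.d.\ copies of $Z_s$, so the centered quantity $\hat d_{KL}^{(1,t)} - \frac{(\mu_1-\mu_2)^2}{2\s^2}$ is zero-mean Gaussian with variance $\frac{(\mu_1-\mu_2)^2}{t\s^2}$; the same holds for $i=2$ by symmetry.

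Next I would apply Hoeffding's inequality (\cref{thm:hoeffding}, average form) to each fixed pair $(i,t)$, with per-summand variance $\tau^2 = \frac{(\mu_1-\mu_2)^2}{\s^2}$ and deviation level equal to the threshold in \cref{eq:ksi}, i.e.\ $\epsilon = \frac{|\mu_1-\mu_2|}{\s}\sqrt{\frac{4\log(2n)}{t}}$. The exponent then collapses to $\frac{t\epsilon^2}{2\tau^2} = \frac{t}{2}\cdot\frac{(\mu_1-\mu_2)^2}{\s^2}\cdot\frac{4\log(2n)}{t}\cdot\frac{\s^2}{(\mu_1-\mu_2)^2} = 2\log(2n)$, giving
\begin{align*}
    \condprob{\hat d_{KL}^{(i,t)} - \frac{(\mu_1-\mu_2)^2}{2\s^2} > \frac{|\mu_1-\mu_2|}{\s}\sqrt{\frac{4\log(2n)}{t}}}{\bmu} \leq \exp\left(-2\log(2n)\right) = \frac{1}{4n^2}.
\end{align*}
The feature worth emphasizing is that the $t$-dependence of the threshold is engineered so that this per-event probability equals $\frac{1}{4n^2}$ \emph{uniformly} in $t$, which is exactly what makes a plain union bound sufficient and lets us avoid any maximal-inequality or peeling argument over time.

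Finally I would union bound over the $2n$ events indexed by $i\in\{1,2\}$ and $t\in[n]$, obtaining $\Prob{}{E^c|\bmu} \leq 2n\cdot\frac{1}{4n^2} = \frac{1}{2n}$, which is the claim. I do not anticipate a substantive obstacle: the only places demanding care are (i) correctly computing $\var{Z_s} = (\mu_1-\mu_2)^2/\s^2$, where the factor of $4$ coming from the $2X_{1,s}$ term must be tracked, and (ii) checking that the Hoeffding exponent indeed simplifies to $2\log(2n)$ independently of $t$, so that summing the $2n$ tail probabilities closes exactly to $\frac{1}{2n}$.
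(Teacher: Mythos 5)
Your proposal is correct and follows essentially the same route as the paper's proof: compute the mean and the per-summand standard deviation $|\mu_1-\mu_2|/\s$ of the Gaussian summands, apply the averaged form of Hoeffding's inequality with the threshold $\frac{|\mu_1-\mu_2|}{\s}\sqrt{4\log(2n)/t}$ to get a uniform per-event bound of $\frac{1}{4n^2}$, and union bound over the $2n$ pairs $(i,t)$. No gaps.
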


In the case where the estimators of the KL divergence of the arms are concentrated according to \cref{eq:ksi}, using the change of measure identity we show the following lemma.

\begin{restatable}{lemma}{lemchomm}(Change of measure)\label{lem:chom'}
    For any event $\mathcal{E}$ defined on $n$ rounds of sampling by some policy, we have that 
    $\,
    \Prob{}{\cE\cap E|\bmu'} \geq \Prob{}{\cE\cap E|\bmu} \, \exp\left(- n(8\log(2n)+1)\frac{(\mu_1-\mu_2)^2}{2\sigma^2}-1\right).
    $
\end{restatable}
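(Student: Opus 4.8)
The plan is to prove \cref{lem:chom'} via a standard change-of-measure identity, restricting attention to the high-probability concentration event $E$ from \cref{eq:ksi}. First I would write the likelihood ratio between the two symmetric instances $\bmu$ and $\bmu'$. Since the two instances differ only in which mean is assigned to arm $1$ and which to arm $2$, the log-likelihood ratio of the observed rewards $\{X_{1,s}\}_{s\le t_1}$, $\{X_{2,s}\}_{s\le t_2}$ under $\bmu$ versus $\bmu'$ decomposes into a sum of per-sample Gaussian log-density differences. After the Gaussian densities cancel their normalizing constants, each term becomes exactly the summand appearing in the estimators $\hat d_{KL}^{(1,t)}$ and $\hat d_{KL}^{(2,t)}$, so the total log-likelihood ratio is $t_1\,\hat d_{KL}^{(1,t_1)}+t_2\,\hat d_{KL}^{(2,t_2)}$, where $t_1+t_2=n$. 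This is the key algebraic observation that connects the divergence estimators to the measure change.

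Next I would apply the change-of-measure identity
\begin{align*}
    \Prob{}{\cE\cap E|\bmu'} = \Ex{\bmu}{\mathbbm{1}\{\cE\cap E\}\exp\left(-L_n\right)},
\end{align*}
where $L_n$ is the log-likelihood ratio of $\bmu'$ relative to $\bmu$ (i.e. the quantity identified above, with sign chosen so that reweighting $\prob{\cdot|\bmu}$ by $e^{-L_n}$ yields $\prob{\cdot|\bmu'}$). On the event $E$ I can upper bound $L_n$ deterministically: each $\hat d_{KL}^{(i,t)}$ exceeds its mean $\tfrac{(\mu_1-\mu_2)^2}{2\sigma^2}$ by at most $\tfrac{|\mu_1-\mu_2|}{\sigma}\sqrt{4\log(2n)/t}$, so multiplying by the pull count $t$ and summing over the two arms gives a bound of the form $n\tfrac{(\mu_1-\mu_2)^2}{2\sigma^2}$ plus a correction term coming from the $\sqrt{\log(2n)/t}$ deviations. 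Using $t\cdot\sqrt{4\log(2n)/t}=\sqrt{4t\log(2n)}\le\sqrt{4n\log(2n)}\le 2n\sqrt{\log(2n)}$ and handling both arms, I would absorb everything into the exponent $n(8\log(2n)+1)\tfrac{(\mu_1-\mu_2)^2}{2\sigma^2}+1$ claimed in the statement; the stray additive $+1$ (equivalently the factor $e^{-1}$) comes from loosening a constant in this step to make the bound clean.

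Finally, having bounded $L_n\le n(8\log(2n)+1)\tfrac{(\mu_1-\mu_2)^2}{2\sigma^2}+1$ on $E$, I would factor the constant $\exp(-L_n)\ge\exp\!\big(-n(8\log(2n)+1)\tfrac{(\mu_1-\mu_2)^2}{2\sigma^2}-1\big)$ out of the expectation and recognize the remaining $\Ex{\bmu}{\mathbbm{1}\{\cE\cap E\}}=\Prob{}{\cE\cap E|\bmu}$, which yields the claimed inequality. The main obstacle I anticipate is the bookkeeping in the second step: one must be careful that the $\sqrt{4\log(2n)/t}$ deviation bound holds simultaneously for \emph{every} $t\in[n]$ (which is exactly why $E$ in \cref{eq:ksi} is stated with $\forall t$), so that the bound applies to whatever data-dependent allocation $(t_1,t_2)$ the policy actually realizes; then I must verify that the cruder constants $8\log(2n)+1$ genuinely dominate the true coefficient for all $n$, rather than only asymptotically. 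The probabilistic content is light once \cref{lem:estim_concentration'} is in hand; the work is making the deterministic exponent bound on $E$ uniform over the policy's random sample allocation.
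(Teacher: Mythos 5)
Your proposal is correct and follows essentially the same route as the paper: the change-of-measure identity with the log-likelihood ratio written as $n_1\hat d_{KL}^{(1,n_1)}+n_2\hat d_{KL}^{(2,n_2)}$, a deterministic bound on this quantity on the event $E$ (uniformly over the policy's random allocation, which is exactly why $E$ quantifies over all $t\in[n]$), and absorption of the square-root deviation term into the quadratic exponent at the cost of the additive $1$ (the paper does this via $x\le x^2+1$ applied to $x=\sqrt{4n\log(2n)(\mu_1-\mu_2)^2/\sigma^2}$, which is the "loosened constant" you anticipated). No gaps.
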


We use \cref{lem:chom'} with $\cE = \{J\neq i_*(\bmu')\}$ to obtain
\begin{align}\label{eq:lb_lasteq}
    &\Prob{}{J\neq i_*(\bmu')\cap E|\bmu'} h(\bmu')  + \Prob{}{J\neq i_*(\bmu)\cap E|\bmu} h(\bmu)  \\
    &= \left[\Prob{}{J\neq i_*(\bmu')\cap E|\bmu'} \exp\left(\frac{(\mu_1-\mu_2)(\nu_1-\nu_2)}{\sigma_0^2}\right) + \Prob{}{J\neq i_*(\bmu)\cap E|\bmu} \right]h(\bmu)  &\text{(by \cref{prop:chopr})} \nonumber\\
    &\geq \left[\Prob{}{\{J\neq i_*(\bmu')\}\cap E|\bmu} \exp\left(- n(8\log(2n)+1)\frac{(\mu_1-\mu_2)^2}{2\sigma^2}-1-\frac{(\mu_1-\mu_2)(\nu_1-\nu_2)}{\sigma_0^2}\right)\right. \nonumber\\
    &\quad + \left.\Prob{}{J\neq i_*(\bmu)\cap E|\bmu} \vphantom{\frac{(\mu_1-\mu_2)(\nu_1-\nu_2)}{2\sigma_0^2}}\right]h(\bmu)  &\text{(by \cref{lem:chom'})} \nonumber\\
    &= \left[\Prob{}{\{J= i_*(\bmu)\}\cap E|\bmu} \exp\left(- n(8\log(2n)+1)\frac{(\mu_1-\mu_2)^2}{2\sigma^2}-1-\frac{(\mu_1-\mu_2)(\nu_1-\nu_2)}{\sigma_0^2}\right)\right. \nonumber\\
    &\quad + \left.\Prob{}{J\neq i_*(\bmu)\cap E|\bmu} \vphantom{\frac{(\mu_1-\mu_2)(\nu_1-\nu_2)}{\sigma_0^2}}\right]h(\bmu) \nonumber\\
    &\geq  \Prob{}{E|\bmu}\,\exp\left(- n(8\log(2n)+1)\frac{(\mu_1-\mu_2)^2}{2\sigma^2}-1-\frac{(\mu_1-\mu_2)(\nu_1-\nu_2)}{\sigma_0^2}\right) h(\bmu) \nonumber \\
    &\geq  \left(1-\frac{1}{2n}\right)\exp\left(- n(8\log(2n)+1)\frac{(\mu_1-\mu_2)^2}{2\sigma^2}-1\right) h(\bmu')  &\text{(by \ref{lem:estim_concentration'} and \ref{prop:chopr})}\nonumber\,.
\end{align} 
Then the expected probability of misidentification can be bounded as follows
\begin{align*}
    &\int_{\bmu}\Prob{}{J\neq i_*(\bmu)|\bmu} h(\bmu) \, d\bmu\\
    &\geq \int_{\mu_1\geq\mu_2}\Prob{}{J\neq i_*(\bmu) \cap E|\bmu} h(\bmu) + \Prob{}{J\neq i_*(\bmu') \cap E|\bmu'} h(\bmu') \, d\mu_1 \,d\mu_2\\
    &\geq  \frac{1}{4e \pi\s_0^2}\int_{\mu_1\geq\mu_2}\exp\left(- n(8\log(2n)+1)\frac{(\mu_1-\mu_2)^2}{2\sigma^2}-\frac{(\mu_1-\nu_2)^2}{2\s_0^2}-\frac{(\mu_2-\nu_1)^2}{2\s_0^2}\right) \, d\mu_1 \,d\mu_2\\
    &= \frac{1}{4e \pi\s_0^2}\int_{\delta\geq 0}\exp\left(- n(8\log(2n)+1)\frac{\delta^2}{2\sigma^2}\right) \left[\int_{\mu_2=-\infty}^{\infty} \exp\left(-\frac{(\mu_2-(\nu_2-\delta))^2}{2\s_0^2}-\frac{(\mu_2-\nu_1)^2}{2\s_0^2}\right) \, d\mu_2 \right] \,d \delta \\
&= \frac{\sqrt{2}}{4e \sqrt{\pi}\s_0}\int_{\delta\geq 0}\exp\left(- n(8\log(2n)+1)\frac{\delta^2}{2\sigma^2}\right) \exp\left(-\frac{(\nu_2-\delta-\nu_1)^2}{4\s_0^2}\right)  \,d \delta\,.
\end{align*}
where in the second inequality we used \eqref{eq:lb_lasteq} with $n\geq 1$. We then defined $\delta=\mu_1-\mu_2$ and computed the Gaussian integral w.r.t. $\mu_2$. 
We upper bound the remaining integral using the following proposition, which relies on a Gaussian tail probability lower bound.
\begin{restatable}{proposition}{lbtailprob}\label{eq:lb tail prob}
When $\nu_1>\nu_2$ we have 
    \begin{align*}
        &\int_{\delta\geq 0}\exp\left(- n(8\log(2n)+1)\frac{\delta^2}{2\s^2}-\frac{(\nu_2-\delta-\nu_1)^2}{4\s_0^2}\right)  \,d \delta \\
        & \geq \sqrt{\frac{2\pi\s_0^2\s^2}{2n(8\log(2n)+1)\s_0^2+\s^2}} \exp\left(-\frac{(\nu_1-\nu_2)^2}{4\s_0^2}\right)\,.
    \end{align*}
\end{restatable}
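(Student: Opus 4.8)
The plan is to recognize the integrand as a constant multiple of a single Gaussian in $\delta$ and then lower bound the resulting half-line Gaussian tail. Write $A=\frac{n(8\log(2n)+1)}{2\s^2}$ for the coefficient of $\delta^2$, and note that $(\nu_2-\delta-\nu_1)^2=(\delta+(\nu_1-\nu_2))^2$, so the exponent is $-A\delta^2-\frac{(\delta+(\nu_1-\nu_2))^2}{4\s_0^2}$. First I would complete the square in $\delta$: setting $B=A+\frac{1}{4\s_0^2}$ and $c=\frac{\nu_1-\nu_2}{4B\s_0^2}$, the exponent becomes $-B(\delta+c)^2-\frac{A}{B}\frac{(\nu_1-\nu_2)^2}{4\s_0^2}$, where the leftover constant uses the identity $\frac{(\nu_1-\nu_2)^2}{4\s_0^2}-Bc^2=\frac{A}{B}\frac{(\nu_1-\nu_2)^2}{4\s_0^2}$.

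Substituting $u=\delta+c$ and then $s=\sqrt{B}\,u$ turns the left-hand side into
\begin{equation*}
\exp\!\left(-\tfrac{A}{B}\tfrac{(\nu_1-\nu_2)^2}{4\s_0^2}\right)\frac{1}{\sqrt{B}}\int_{\sqrt{B}\,c}^{\infty}e^{-s^2}\,ds .
\end{equation*}
Because $\nu_1>\nu_2$ we have $c>0$, so this is genuinely a Gaussian tail. The key bookkeeping identity is $Bc^2=\bigl(1-\tfrac{A}{B}\bigr)\tfrac{(\nu_1-\nu_2)^2}{4\s_0^2}$, so that any tail lower bound of the form $\int_{\sqrt{B}c}^{\infty}e^{-s^2}\,ds\ge \kappa\,e^{-Bc^2}$ recombines the two exponentials into exactly $\exp\bigl(-\tfrac{(\nu_1-\nu_2)^2}{4\s_0^2}\bigr)$, matching the target exponent. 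It then remains to check the leading constant: using $D=2n(8\log(2n)+1)\s_0^2+\s^2=4B\s_0^2\s^2$, the claimed prefactor simplifies as $\sqrt{\tfrac{2\pi\s_0^2\s^2}{D}}=\sqrt{\tfrac{\pi}{2B}}$, so matching it requires producing a tail bound with $\kappa=\sqrt{\pi/2}$.

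The crux, and the step I expect to be the main obstacle, is supplying a Gaussian tail lower bound that delivers this exact constant $\kappa=\sqrt{\pi/2}$. Restricting the integral to the half-line $[\sqrt{B}c,\infty)$ with a strictly positive center discards at least half of the total mass $\int_{-\infty}^{\infty}e^{-s^2}\,ds=\sqrt{\pi}$, so naive truncation only yields a constant near $\sqrt{\pi}/2$, and standard Mills-ratio inequalities such as $\int_x^\infty e^{-t^2/2}\,dt\ge \frac{x}{x^2+1}e^{-x^2/2}$ give a prefactor that degrades as $c\to0$. I would therefore look for a tail inequality tailored to the relevant range of $\sqrt{B}\,c$, arguing (via monotonicity of the Mills ratio) that the worst case is the small-gap limit $\sqrt{B}\,c\to0$ — which for fixed gap and large $n$ is precisely the regime that matters — and handling that endpoint by a limiting argument; I would also be prepared to carry the bound only up to a universal constant, since the downstream use in \cref{thm:lb'} needs only the $\tilde\Omega(1/\sqrt{n})$ scaling together with the prior-dependent factor $\exp\bigl(-(\nu_1-\nu_2)^2/(4\s_0^2)\bigr)$.
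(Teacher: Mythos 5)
Your reduction is correct and follows exactly the paper's route: complete the square in $\delta$, pull out $\exp\bigl(-\tfrac{A}{B}\tfrac{(\nu_1-\nu_2)^2}{4\s_0^2}\bigr)$, and observe that a tail bound $\int_{\sqrt{B}c}^{\infty}e^{-s^2}\,ds\ge\kappa e^{-Bc^2}$ with $\kappa=\sqrt{\pi/2}$ would reassemble the claimed right-hand side; your bookkeeping identities ($D=4B\s_0^2\s^2$, $Bc^2=(1-\tfrac{A}{B})\tfrac{(\nu_1-\nu_2)^2}{4\s_0^2}$) all check out. The obstacle you flag at the end is not a deficiency of your argument but a genuine defect in the statement: writing $\phi(a)=e^{a^2}\int_a^{\infty}e^{-s^2}\,ds$ and substituting $s=a+t$ gives $\phi(a)=\int_0^{\infty}e^{-2at-t^2}\,dt\le\int_0^{\infty}e^{-t^2}\,dt=\sqrt{\pi}/2$ for every $a\ge 0$, with $\phi$ decreasing to $0$. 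Hence no positive $\kappa$ works uniformly, and even the most favorable endpoint $a=0$ yields only $\sqrt{\pi}/2<\sqrt{\pi/2}$; the proposition as printed is therefore false for every $\nu_1>\nu_2$ (as $\nu_1\to\nu_2^+$ the left side tends to $\sqrt{\pi\s_0^2\s^2/D}$, a factor $\sqrt{2}$ below the claimed bound, and the deficit worsens as the gap grows). One correction to your reasoning: you assert that the worst case is $\sqrt{B}\,c\to 0$; by the monotonicity above it is the opposite — $a=0$ is the \emph{best} case for a bound of this form.

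The paper's own proof glosses over precisely this point by invoking the ``inequality'' $\int_{X\ge x}e^{-(X-x)^2/(2\s'^2)}\ge\sqrt{2\pi}\s'\exp(-(x/\s')^2)$, whose left side equals $\sqrt{2\pi}\s'/2$ identically, so the claim fails whenever $(x/\s')^2<\log 2$; the subsequent application also mismatches the coefficient of the completed square. Your instinct to retreat to a universal constant is the right repair. A clean fix: from $2at\le a^2+t^2$ one gets $\int_a^{\infty}e^{-s^2}\,ds\ge e^{-2a^2}\int_0^{\infty}e^{-2t^2}\,dt=\tfrac{1}{2}\sqrt{\pi/2}\,e^{-2a^2}$, which proves the proposition with the prefactor halved and the exponent replaced by $-\tfrac{(\nu_1-\nu_2)^2}{4\s_0^2}\bigl(1+\tfrac{\s^2}{D}\bigr)$ — asymptotically equivalent, and propagating it downstream only changes the absolute constant $\tfrac{1}{2e}$ in \cref{thm:lb'}, not the $\tilde\Omega(1/\sqrt{n})$ scaling or the prior-dependent factor.
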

This concludes the proof.
\end{proof}

We prove all auxiliary claims next.

\propchopr*
\begin{proof}
By definition of the Gaussian prior
    \begin{align*}
        h(\bmu') &= \frac{1}{2\pi\sigma_0^2}\exp\left(-\frac{(\mu_1-\nu_2)^2 + (\mu_2-\nu_1)^2}{2\sigma_0^2}\right)\\
        &= \frac{1}{2\pi\sigma_0^2}\exp\left(-\frac{\mu_1^2+\nu_2^2+\mu_2^2+\nu_1^2- 2(\mu_1\nu_2+\mu_2\nu_1)}{2\sigma_0^2}\right)\\
        &= \frac{1}{2\pi\sigma_0^2}\exp\left(-\frac{\mu_1^2+\nu_2^2+\mu_2^2+\nu_1^2 - 2(\mu_1\nu_1+\mu_2\nu_2) + 2(\mu_1\nu_1+\mu_2\nu_2)  - 2(\mu_1\nu_2+\mu_2\nu_1) }{2\sigma_0^2}\right)\\
        &= h(\bmu)\exp\left(-\frac{ 2(\mu_1\nu_1+\mu_2\nu_2) - 2(\mu_1\nu_2+\mu_2\nu_1) }{2\sigma_0^2}\right)
    \end{align*}
\end{proof}

\lemEstimConcc*
\begin{proof}
We have that:
\begin{align*}
    &E =\Bigg\{\forall i \in \{1,2\}, \forall t \in [n]:  |\hat d_{KL}^{(i,t)}| - \frac{(\mu_1-\mu_2)^2}{2\sigma^2} \leq \frac{|\mu_1-\mu_2|}{\s}\sqrt{\frac{4\log(2n)}{t}}\Bigg\}.
\end{align*}
Recall that
\begin{align*}
    \hat d_{KL}^{(1,t)} &= \frac{1}{t} \sum_{s=1}^t \log\left(\frac{e^{-\frac{(X_{1,s}-\mu_1)^2}{2\sigma^2}}}{e^{-\frac{(X_{1,s}-\mu_2)^2}{2\sigma^2}}}\right) = \frac{1}{t} \sum_{s=1}^t \frac{(\mu_1-\mu_2)(2X_{1,s} - \mu_1 - \mu_2)}{2\sigma^2}
\end{align*}
and 
\begin{align*}
    \hat d_{KL}^{(2,t)} = \frac{1}{t} \sum_{s=1}^t \frac{(\mu_2-\mu_1)(2X_{2,s} - \mu_2 - \mu_1)}{2\sigma^2}. 
\end{align*}
Taking conditional expectation on the divergence estimators, due to linearity we get:
\begin{align*}
    \Ex{}{\hat d_{KL}^{(1,t)}|\bmu} &= \Ex{}{\frac{1}{t} \sum_{s=1}^t \frac{(\mu_1-\mu_2)(2X_{1,s} - \mu_1 - \mu_2)}{2\sigma^2}\bigg|\bmu}\\
    &= \frac{1}{t} \sum_{s=1}^t \frac{(\mu_1-\mu_2)(2\mu_{1} - \mu_1 - \mu_2)}{2\sigma^2}=\frac{(\mu_1-\mu_2)^2}{2\sigma^2}
\end{align*}
and similarly,
\begin{align*}
    \Ex{}{\hat d_{KL}^{(2,t)}|\bmu} = \Ex{}{\frac{1}{t} \sum_{s=1}^t \frac{(\mu_2-\mu_1)(2X_{2,s} - \mu_2 - \mu_1)}{2\sigma^2}\bigg|\bmu} = \frac{(\mu_1-\mu_2)^2}{2\sigma^2}.
\end{align*}
Moreover, since $X_{i,s}$ for any $i\in\{1,2\}$ is a Gaussian random variable with variance $\s^2$, the expression $ \frac{(\mu_1-\mu_2)(2X_{1,s} - \mu_1 - \mu_2)}{2\sigma^2}$ is Gaussian with standard deviation $\frac{|\mu_1-\mu_2|}{\s}$. Therefore, by using Hoeffding's inequality:
\begin{align*}
    \Prob{}{\frac{1}{t} \sum_{s=1}^t \frac{(\mu_1-\mu_2)(2X_{1,s} - \mu_1 - \mu_2)}{2\sigma^2} - \frac{(\mu_1-\mu_2)^2}{2\sigma^2} \geq \epsilon |\bmu} \leq \exp\left(-\frac{t\epsilon^2}{2 \frac{|\mu_1-\mu_2|^2}{\s^2}}\right). 
\end{align*}
Using $\epsilon= \frac{|\mu_1-\mu_2|}{\s}\sqrt{\frac{4\log(2n)}{t}}$ for any specific $t\in[n]$ we obtain that:
\begin{align*}
    \Prob{}{\frac{1}{t} \sum_{s=1}^t \frac{(\mu_1-\mu_2)(2X_{1,s} - \mu_1 - \mu_2)}{2\sigma^2} - \frac{(\mu_1-\mu_2)^2}{2\sigma^2} \geq \frac{|\mu_1-\mu_2|}{\s}\sqrt{\frac{4\log(2n)}{t}}\bigg|\bmu} \leq \frac{1}{4n^2}.
\end{align*}
Similarly for the case of the other arm $i=2$. Then, the lemma follows by taking union bound for all $t\in[n]$ for $i\in\{1,2\}$.
\end{proof}

\lemchomm*
\begin{proof}
Let $d_{KL,1}(\cI(\bmu),\cI(\bmu'))$ be the KL-divergence of  
the distribution of arm $1$ in instance $\mathcal{I}(\bmu)$ compared to the distribution of the same arm in $\mathcal{I}(\bmu')$. Since the distributions are Gaussian with identical variances $\s^2$, we have that
\begin{align*}
    d_{KL,1}(\cI(\bmu),\cI(\bmu')) = \frac{(\mu_1-\mu_2)^2}{2\s^2}. 
\end{align*}

Let $\{X_{1,s}\}_{s\in[t]}$ be $t$ samples from arm $1$. We have defined the following:
\begin{align*}
    \hat d_{KL}^{(1,t)} = \frac{1}{t} \sum_{s=1}^t \log\left(\frac{e^{-\frac{(X_{1,s}-\mu_1)^2}{2\s^2}}}{e^{-\frac{(X_{1,s}-\mu_2)^2}{2\s^2}}}\right) = \frac{1}{t} \sum_{s=1}^t \frac{(\mu_1-\mu_2)(2X_{1,s} - \mu_1 - \mu_2)}{2\s^2}.
\end{align*}
which is an unbiased estimator for the KL-divergence $d_{KL,1}(\cI(\bmu),\cI(\bmu'))$.

Similarly, for arm $2$, we have $d_{KL,2}
(\cI(\bmu),\cI(\bmu'))=\frac{(\mu_2-\mu_1)^2}{2\s^2}$ and given $t$ samples $\{X_{2,s}\}_{s\in[t]}$, we define the estimator
\begin{align*}
    \hat d_{KL}^{(2,t)} = \frac{1}{t} \sum_{s=1}^t \frac{(\mu_2-\mu_1)(2X_{2,s} - \mu_2 - \mu_1)}{2\s^2}. 
\end{align*}

Then, the following event states that the estimators $\hat{d}_{KL}^{(1,t)},\hat{d}_{KL}^{(2,t)}$ are sufficiently close to their means at any time $t\in[n]$,
\begin{align*}
    E =\Bigg\{\forall i \in \{1,2\}, \forall t \in [n]:  \hat d_{KL}^{(i,t)} - \frac{(\mu_1-\mu_2)^2}{2\s^2} \leq \frac{|\mu_1-\mu_2|}{\s}\sqrt{\frac{4\log(2n)}{t}}\Bigg\}.
\end{align*} 
We bound the probability of the above concentration event in instance $\cI(\bmu)$. This is a frequentist argument, which only depends on the quality of sampling from the reward distributions.

We consider a policy $\pi$ that collects samples from the two arms for $n$ rounds. Suppose that arms $1$ and $2$ are sampled $n_1$ and $n_2$ times, respectively, such that $n_1+n_2=n$.
By the change of measure identity, for any event defined on $n$ rounds of sampling by $\pi$ from the two arms, we have that:
\begin{align*}
    &\Prob{}{\cE\cap E|\bmu} \\
    &= \Ex{}{\mathbbm{1}(\cE\cap E) \cdot e^{-n_1 \hat d_{KL}^{(1,n_1)}-n_2\hat d_{KL}^{(2,n_2)}}\big|\bmu}\\
    &\geq \Ex{}{\mathbbm{1}(\cE\cap E) \cdot e^{-n_1 \frac{(\mu_1-\mu_2)^2}{2\sigma^2} - n_1\frac{|\mu_1-\mu_2|}{\s}\sqrt{\frac{4\log(2n)}{n_1}}-n_2 \frac{(\mu_1-\mu_2)^2}{2\sigma^2} - n_2\frac{|\mu_1-\mu_2|}{\s}\sqrt{\frac{4\log(2n)}{n_2}}}\bigg|\bmu} ~~~~ \text{(by definition of $E$)}\\
    &\geq \Ex{}{\mathbbm{1}(\cE\cap E) \cdot e^{- n\frac{(\mu_1-\mu_2)^2}{2\sigma^2} - \sqrt{4n\log(2n) \frac{(\mu_1-\mu_2)^2}{\s^2}}}\bigg|\bmu} ~~~~~ \text{(due to $n_1+n_2=n$ and $n_1,n_2\geq 0$)}\\
    &= \Prob{}{\cE\cap E|\bmu} e^{- n\frac{(\mu_1-\mu_2)^2}{2\sigma^2} - \sqrt{4n\log(2n) \frac{(\mu_1-\mu_2)^2}{\s^2}}}\,.
\end{align*}
Moreover, we use the following bound
\begin{align}\label{eq:lb_x_iden}
    &\exp\left({- n\frac{(\mu_1-\mu_2)^2}{2\sigma^2}-\sqrt{4n\log(2n) \frac{(\mu_1-\mu_2)^2}{\s^2}}}\right)\nonumber\\
    &\geq \exp\left({- n\frac{(\mu_1-\mu_2)^2}{2\sigma^2}-4n\log(2n) \frac{(\mu_1-\mu_2)^2}{\s^2}-1}\right) \qquad\text{(due to $x<x^2+1$)}\nonumber\\
     &\geq  \exp\left({- n(8\log(2n)+1)\frac{(\mu_1-\mu_2)^2}{2\sigma^2}-1}\right).
\end{align} 
Thus, the lemma follows.

\end{proof}

\lbtailprob* 
\begin{proof}
We have that 
\begin{align*}
    &\int_{\delta\geq 0}\exp\left(- n(8\log(2n)+1)\frac{\delta^2}{2\sigma^2}-\frac{(\nu_2-\delta-\nu_1)^2}{4\s_0^2}\right)  \,d \delta\\
    &= \int_{\delta\geq 0}\exp\left(- \frac{2n(8\log(2n)+1)\s_0^2+\s^2}{2\sigma^2 \s_0^2} \left(\delta - \frac{\sigma^2 (\nu_2-\nu_1)}{4n(8\log(2n)+1)\s_0^2+2\s^2}  \right)^2\right)  \,d \delta\\
    &\times \exp\left(-\frac{2n(8\log(2n)+1)\s_0^2}{2n(8\log(2n)+1)\s_0^2+\s^2}\frac{(\nu_1-\nu_2)^2}{4\s_0^2}\right)
\end{align*}
Now we use that for a Gaussian random variable $X$ with zero mean and variance $\s'$ and $x>0$ we have 
\begin{align*}
    \int_{X\geq x} e^{-\frac{(X-x)^2}{2\s'^2}} \geq {\sqrt{2\pi}\s'} \exp\left(-(x/\s')^2\right)
\end{align*}
Then for $x=\frac{\sigma^2}{4n(8\log(2n)+1)\s_0^2+2\s^2}(\nu_2-\nu_1),\s'=\sqrt{\frac{\sigma^2 \s_0^2}{2n(8\log(2n)+1)\s_0^2+\s^2}}$ we get
\begin{align*}
    &\int_{\delta\geq 0}\exp\left(- \frac{2n(8\log(2n)+1)\s_0^2+\s^2}{2\sigma^2 \s_0^2} \left(\delta - \frac{\s^2 (\nu_2-\nu_1)}{4n(8\log(2n)+1)\s_0^2+2\s^2}  \right)^2\right)  \,d \delta\\
    &\geq \sqrt{2\pi}\sqrt{\frac{\sigma^2 \s_0^2}{2n(8\log(2n)+1)\s_0^2+\s^2}} \exp\left(-\frac{\s^2}{2n(8\log(2n)+1)\s_0^2+\s^2}\frac{(\nu_1-\nu_2)^2}{4\s_0^2}\right)
\end{align*}    
\end{proof}

\thmFreqLBound*

\begin{proof}

Let the mean vector $\bm{\mu}=(\mu_1,...,\mu_K)$, where $X_i\sim\cN(\mu_i,1)$ and $\mu_i\sim\mathcal{N}(\nu_i,\s_0^2)$.
We would like to bound the probability that the learner fails to recommend the optimal arm when presented with instance $\bmu$, i.e. $\Ex{H}{\Prob{}{J\neq i_*~|~\bmu}}$. We also assume that the learner is oblivious to prior information. 

Instead, we consider the easier problem where the learner is required to distinguish the best arm given that the instance it faces is one of $K$ instances defined as follows:
We define $d_i = \mu_{i_*}-\mu_{i}$. 
Let $K$ pairs of corresponding Gaussian distributions $p_i= \mathcal{N}(\mu_i,1)$ and $p_i'=  \mathcal{N}( \mu_i',1)$ where $\mu_i'=\mu_i + 2d_i$. 
We define $K$ Gaussian bandit instances where, in  instance $i\in [K]$, any arm $k\in [K]$ has distribution $p_{k}^i=p_{k}$ if $i\neq k$ and $p_{k}^i=p_{k}'$ if $k=i$, i.e. arm $i$ is uniquely the best arm in instance $i$. 

We define the product distribution of instance $i \in [K]$ as $G^i=p_{1}^i\times \dots \times p_{K}^i$. 

For $i\in[K]$, we use the notation $\Prob{i}{\cdot} = \Prob{G^i}{\cdot|\bmu}$ and $\Ex{i}{\cdot} = \Ex{G^i}{\cdot | \bmu}$ to denote the probability (resp. expectation) w.r.t. the randomness of sampling in instance $i$. \\
Let the KL divergence between distributions $p,p'$: 
\begin{align*}
    d_{KL}(p,p') = \int_{-\infty}^{\infty} \log \left(\frac{\,d p(x)}{\,d p'(x)}\right) \,d p(x)
\end{align*} 
Here, for $k\in[K]$ we define the KL divergence for arm $k$ as: 
$$d_{KL}^{k} = d_{KL}(p_k,p_k') = d_{KL}(p_k',p_k) = \frac{(\mu_k-\mu_k')^2}{2} = \frac{(2d_k)^2}{2} = 2d_k^2.$$
For $t\in [n], k \in [K]$ let $t$ samples $\{X_{k,s}\}_{s\in[n]}\sim p_k^i$ from arm $k$ in some bandit instance $i$. Moreover, let the empirical KL divergence computed from the samples of arm $k$:
\begin{align*}
     \widehat{d}_{KL}^{k,t} &= \frac{1}{t} \sum_{s\in[t]} \log \left(\frac{\,d p_k}{\,d p_k'}(X_{k,s})\right) = \frac{1}{t} \sum_{s\in[t]} \log \left(\frac{\frac{1}{\sqrt{2\pi}} e^{\frac{-(X_{k,s}-\mu_k)^2}{2}}}{\frac{1}{\sqrt{2\pi}} e^{\frac{-(X_{k,s}-\mu'_k)^2}{2}}}\right) \\
     &= \frac{1}{t} \sum_{s\in[t]} \left( -\frac{(X_{k,s}-\mu_k)^2}{2} + \frac{(X_{k,s}-\mu_k')^2}{2} \right) \\
     &= \frac{1}{t} \sum_{s\in[t]} \frac{(2 X_{k,s} -(\mu_k+\mu_k')) (\mu_k'-\mu_k)}{2}  \\
     &= \frac{1}{t} \sum_{s\in[t]} 2(X_{k,s} -\mu_{i_*}) d_k
\end{align*}
Note that $\Ex{G^i}{\widehat{d}_{KL}^{k,t}} = 2(\mu_k-\mu_{i_*})d_k=-d_{KL}^k$ if $k\neq i$, or $\Ex{G^i}{\widehat{d}_{KL}^{k,t}} = 2(\mu_k'-\mu_{i_*})d_k=2(\mu_k+2d_k-\mu_{i_*})d_k=d_{KL}^k$ if $k=i$. Therefore, $|\widehat{d}_{KL}^{k,t}|$ is an unbiased estimator of $d_{KL}^k$. Moreover, we have the following concentration result:
\begin{lemma}\label{lem:KL_concentration}
Let 
\begin{align*}
    \Xi = \left\{|\widehat{d}_{KL}^{k,t}| - d_{KL}^k \leq 2d_k\sqrt{\frac{2\log(6Kn)}{t}}, \forall k\in [K], t\in [n]\right\}
\end{align*}
For any $i\in[K]$ we have that:
 \begin{align*}
     \Prob{i}{\Xi} \geq 5/6
 \end{align*}
\end{lemma}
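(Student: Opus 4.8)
The plan is to prove $\Xi$ by a uniform sub-Gaussian concentration argument over the $Kn$ index pairs $(k,t)$, mirroring the proof of \cref{lem:estim_concentration'}. First I would fix the instance $i$ and work under $G^i$, viewing the rewards of each arm $k$ as a fixed i.i.d.\ tape $X_{k,1},X_{k,2},\dots\sim p_k^i$. Then $\widehat d_{KL}^{k,t}=\tfrac1t\sum_{s\le t}2(X_{k,s}-\mu_{i_*})d_k$ is well defined for every $t\in[n]$ irrespective of how the adaptive, prior-oblivious policy allocates its pulls, which is exactly what allows $\Xi$ to be stated simultaneously for all $t$.

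The core computation is the mean and variance of the summands. Since $X_{k,s}\sim\cN(\mu_k^i,1)$, each term $2(X_{k,s}-\mu_{i_*})d_k$ is Gaussian with standard deviation $2d_k$ and mean $m_k:=2(\mu_k^i-\mu_{i_*})d_k$. As already recorded in the text, $m_k=-d_{KL}^k$ when $k\neq i$ and $m_k=+d_{KL}^k$ when $k=i$, so in either case $|m_k|=d_{KL}^k$. Hence $\widehat d_{KL}^{k,t}$ is Gaussian with mean $m_k$ and variance $4d_k^2/t$, and by the reverse triangle inequality
\[
\bigl|\widehat d_{KL}^{k,t}\bigr| - d_{KL}^k = \bigl|\widehat d_{KL}^{k,t}\bigr| - |m_k| \le \bigl|\widehat d_{KL}^{k,t} - m_k\bigr|,
\]
so it suffices to control the deviation of $\widehat d_{KL}^{k,t}$ from its own mean.

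I would then apply \cref{thm:hoeffding} to the average of the $t$ i.i.d.\ summands (each of variance $4d_k^2$) with the choice $\epsilon = 2d_k\sqrt{2\log(6Kn)/t}$. With this $\epsilon$ the exponent is $\tfrac{t\epsilon^2}{2\cdot 4d_k^2}=\log(6Kn)$, so the probability that $\widehat d_{KL}^{k,t}$ overshoots $m_k$ by more than $\epsilon$ in the outward direction is at most $\exp(-\log(6Kn))=\tfrac1{6Kn}$. A union bound over all $k\in[K]$ and $t\in[n]$ then gives $\Prob{i}{\Xi^c}\le Kn\cdot\tfrac1{6Kn}=\tfrac16$, i.e.\ $\Prob{i}{\Xi}\ge 5/6$.

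The one place that needs genuine care is the absolute value in the definition of $\Xi$: because the estimator is centred at $m_k$ with a definite sign, overshooting $d_{KL}^k=|m_k|$ is a one-sided event in the outward direction, while the competing tail (the average crossing all the way past zero to the opposite sign) requires a deviation of at least $2d_{KL}^k+\epsilon\ge\epsilon$ and is therefore controlled by the same threshold. Keeping the per-pair failure probability pinned at $\tfrac1{6Kn}$ is precisely what the constant $6Kn$ inside the logarithm is tuned for, exactly as the one-sided use of Hoeffding is tuned in \cref{lem:estim_concentration'}. Beyond this bookkeeping (and the fixed-tape construction that legitimizes the uniform-over-$t$ statement), everything is a routine Gaussian tail estimate, so I do not anticipate a substantive obstacle.
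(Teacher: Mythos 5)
Your proof is correct and follows essentially the same route as the paper's: identify each summand $2(X_{k,s}-\mu_{i_*})d_k$ as Gaussian with mean $\pm d_{KL}^k$ and standard deviation $2d_k$, apply Hoeffding (\cref{thm:hoeffding}) with $\epsilon = 2d_k\sqrt{2\log(6Kn)/t}$ so that each pair $(k,t)$ fails with probability at most $(6Kn)^{-1}$, and union bound over the $Kn$ pairs. The one wrinkle---that the event $|\widehat{d}_{KL}^{k,t}| > d_{KL}^k + \epsilon$ has two tails, so a literal union bound costs an extra factor of $2$---is something you at least acknowledge while the paper silently elides it; in the worst case it only degrades the constant from $5/6$ to $2/3$, which does not affect the downstream use in \cref{thm:freq_lb'}.
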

\begin{proof}
Since $X_{k,s}\sim p_k^i$, the quantity $2(X_{k,s} -\mu_{i_*}) d_k$ is a Gaussian random variable. In particular, if $k=i$ then $2(X_{k,s} -\mu_{i_*}) d_k\sim \mathcal{N}\left(2d_k^2, 2d_k\right)=\mathcal{N}\left(d_{KL}^k, 2d_k\right)$. 
On the other hand, if $k\neq i$ 
then $2(X_{k,s} -\mu_{i_*}) d_k\sim \mathcal{N}\left(-2d_k^2, 2d_k\right)=\mathcal{N}\left(-d_{KL}^k, 2d_k\right)$.
Thus using Hoeffding's inequality for the empirical mean of subgaussian random variables:
\begin{align*}
    \Prob{G^i}{|\widehat{d}_{KL}^{k,t}| - d_{KL}^k \geq 2d_k \sqrt{\frac{2\log (1/\delta)}{t}}} \leq \delta
\end{align*}
Using $\delta=(6nK)^{-1}$ and union bound over all $t\in [n]$ and $k\in [K]$ we obtain that:
\begin{align*}
    \Prob{G^i}{|\widehat{d}_{KL}^{k,t}| - d_{KL}^k \leq 2d_k\sqrt{\frac{2\log(6Kn)}{t}}, \forall k\in [K], t\in [n]} \geq 5/6
\end{align*}
\end{proof}
We consider an algorithm that returns arm $J$ and denote by $n_i$ the number of times arm $i$ has been pulled. As in \cite{Carpentier16}, we define:
\begin{equation}\label{eq:def_times}
    t_i = \Ex{i_*}{n_i}
\end{equation} 
and the event:
\begin{align*}
    \mathcal{E}_i = \{J= i_*\}\cap \Xi \cap\{n_i\leq 6 t_i\}.
\end{align*} 
We focus on some $i\in[K]$. Using the change of measure identity, since the distributions $G_{i_*},G_i$ only differ in arm $i$, we have that:
\begin{align*}
    \Prob{i}{\mathcal{E}_i} = \Ex{i_*}{\bm{1}\{\mathcal{E}_i\} \exp\left(-n_i \widehat{d}_{KL}^{i,n_i}\right)}
\end{align*} 
Using \cref{lem:KL_concentration} and subsequently \cref{eq:def_times} we get that:
\begin{align*}
    \Prob{i}{\mathcal{E}_i} 
    &= \Ex{i_*}{\bm{1}\{\mathcal{E}_i\} \exp\left(-n_i \widehat{d}_{KL}^{i,n_i}\right)}\\
    &\geq \Ex{i_*}{\bm{1}\{\mathcal{E}_i\} \exp\left(-n_id_{KL}^{i} - 2d_i\sqrt{2n_i\log(6Kn)}\right)} \\
    &\geq \Ex{i_*}{\bm{1}\{\mathcal{E}_i\} \exp\left(-6t_id_{KL}^{i} - 2d_i\sqrt{12t_i\log(6Kn)}\right)} \\
    &= \Prob{i_*}{\mathcal{E}_i} \exp\left(-6t_id_{KL}^{i} - 2\sqrt{12t_id_i^2\log(6Kn)}\right) \\
    &= \Prob{i_*}{\mathcal{E}_i} \exp\left(-12 t_i d^2_i - 2\sqrt{12 t_i d_i^2\log(6Kn)}\right) \\
\end{align*}
Recall that we are interested in bounding the probability of error, i.e. the following quantity, for any $i\in[K]$:
\begin{align*}
    \Ex{H}{\Prob{i}{J\neq i}}. 
\end{align*}
Notice that for $i\in [K]\setminus \{i_*\}$ the probability of error in instance $i$ can be lower bounded as follows:
\begin{align*}
    \Prob{i}{J\neq i}
    \geq \Prob{i}{\mathcal{E}_i} 
    \geq \Prob{i_*}{\mathcal{E}_i} \exp\left(-12 t_i d^2_i - 2\sqrt{12 t_i d_i^2\log(6Kn)}\right).
\end{align*}
Since $\sum_{i\in[K]} t_i = n$ and let $D=\sum_{i\in [K]\setminus \{i_*\}} \frac{1}{d_i^2}$ then $\exists i\in[K]\setminus\{i_*\}$ such that $t_i d_i^2 \leq n/D$. Thus, there exists $i\in [K]\setminus\{i_*\}$ such that:
\begin{align*}
    \Prob{i}{J\neq i} \geq \Prob{i_*}{\mathcal{E}_i} \exp\left(-12\frac{n}{D} - 2\sqrt{12\frac{n}{D}\log(6Kn)}\right).
\end{align*}
Using that, by Markov's inequality $\Prob{i_*}{n_i\geq 6 t_i}\leq \frac{1}{6}$ and \cref{lem:KL_concentration} we get that $$\Prob{i_*}{\mathcal{E}_i}\geq 1 - \Prob{i_*}{n_i\geq 6 t_i} - \Prob{i_*}{\Xi} - \Prob{i_*}{J\neq i_*} \geq 1-\frac{1}{6}-\frac{1}{6} - \Prob{i_*}{J\neq i_*}= \frac{2}{3}-\Prob{i_*}{J\neq i_*}.$$ 
Thus, for this $i\in[K]\setminus {i_*}$ we have that
\begin{align*}
    \Prob{i}{J\neq i} 
    &\geq \left(\frac{2}{3}-\Prob{i_*}{J\neq i_*}\right) \exp\left(-12\frac{n}{D} - 2\sqrt{12\frac{n}{D}\log(6Kn)}\right) \\
    &\geq \frac{2}{3} \exp\left(-12\frac{n}{D} - 2\sqrt{12\frac{n}{D}\log(6Kn)}\right) -\Prob{i_*}{J\neq i_*}.
\end{align*}
Rearranging the terms in the above inequality we get that
\begin{align*}
    \Prob{i}{J\neq i} + \Prob{i_*}{J\neq i_*} \geq \frac{2}{3} \exp\left(-12\frac{n}{D} - 2\sqrt{12\frac{n}{D}\log(6Kn)}\right).
\end{align*}
Then, $\exists i\in[K]$ such that:
\begin{align*}
    \Prob{i}{J\neq i} \geq \frac{1}{3} \exp\left(-12\frac{n}{D} - 2\sqrt{12\frac{n}{D}\log(6Kn)}\right).
\end{align*}
Then, selecting prior means such that $\nu_i>\nu_j\forall j\in[K]\setminus i$, and  
taking expectation over the prior $\mu_i\sim \mathcal{N}(\nu_i,\s_0^2)$ we get that:
\begin{align*}
    \Ex{H}{\Prob{i}{J\neq i}} \geq \frac{1}{3} \Ex{H}{\exp\left(-12\frac{n}{D} - 2\sqrt{12\frac{n}{D}\log(6Kn)}\right)}.
\end{align*} 

The theorem follows by taking $\s_0\rightarrow 0$.
\end{proof}

\end{document}